\definecolor{light-gray}{gray}{0.85}
\colorlet{linkequation}{blue}
\newcommand{\argmin}{\mathop{\rm argmin}}
\newcommand{\argmax}{\mathop{\rm argmax}}
\newcommand{\poly}{\mathrm{poly}}
\newcommand{\norm}[1]{\|{#1} \|}
\newcommand{\bignorm}[1]{\Big\|{#1} \Big\|}
\newcommand{\E}{\mathbb{E}}
\renewcommand{\Pr}{\mathbb{P}}
\newcommand{\br}{{\mathbf r}}
\newcommand{\bV}{{\mathbf V}}
\newcommand{\bv}{\mathbf v}
\newcommand{\bQ}{{\mathbf Q}}
\newcommand{\bx}{{\mathbf x}}
\newcommand{\bX}{{\mathbf X}}
\newcommand{\bw}{{\mathbf w}}
\newcommand{\bmu}{{\boldsymbol{\mu}}}
\newcommand{\tbw}{\widetilde{\bw}}
\newcommand{\hbw}{\widehat{\bw}}
\newcommand{\bbw}{\overline{\bw}}
\newcommand{\btheta}{{\bm{\theta}}}
\newcommand{\bphi}{\boldsymbol{\phi}}
\newcommand{\cC}{\mathcal{C}}
\newcommand{\bigO}{\mathcal{O}}
\newcommand{\cS}{\mathcal{S}}
\newcommand{\cA}{\mathcal{A}}
\newcommand{\cB}{\mathcal{B}}
\newcommand{\cM}{\mathcal{M}}
\newcommand{\cG}{\mathcal{G}}
\newcommand{\cK}{\Theta}
\newcommand{\tV}{{\widetilde{V}}}
\newcommand{\tQ}{{\widetilde{Q}}}
\newcommand{\tr}{{\widetilde{r}}}
\newcommand{\hV}{\widehat{V}}
\newcommand{\hbv}{\widehat{\mathbf v}}
\newcommand{\hQ}{\widehat{Q}}
\newcommand{\hmu}{\widehat{\mu}}
\newcommand{\hnu}{\widehat{\nu}}
\newcommand{\hpi}{\widehat{\pi}}
\newcommand{\tpi}{{\widetilde{\pi}}}
\newcommand{\tLambda}{{\widetilde{\Lambda}}}
\newcommand{\hLambda}{{\widehat{\Lambda}}}
\newcommand{\hu}{\widehat{u}}
\newcommand{\tu}{\widetilde{u}}
\newcommand{\hPr}{\widehat{\Pr}}
\newcommand{\hr}{\widehat{r}}
\newcommand{\hbr}{{\widehat{\br}}}
\newcommand{\hProut}{\widehat{\Pr}^{\mathrm{out}}}
\newtheorem{theorem}{Theorem}
\newtheorem{lemma}[theorem]{Lemma}
\newtheorem{corollary}[theorem]{Corollary}
\newtheorem{assumption}[theorem]{Assumption}
\newtheorem{remark}[theorem]{Remark}
\theoremstyle{definition}
\newtheorem{definition}[theorem]{Definition}
\theoremstyle{definition}
\newcommand{\dist}{{\mathrm{dist}}}
\newcommand{\dlin}{d_\mathrm{lin}}
\title{\textbf{A Simple Reward-free Approach to Constrained Reinforcement Learning}}
\author{%
  Sobhan Miryoosefi\\
  Princeton University\\
  \texttt{miryoosefi@cs.princeton.edu}\\
  \and 
  Chi Jin\\
  Princeton University\\
  \texttt{chij@princeton.edu}\\
}
\date{}
\begin{document}

\maketitle

\begin{abstract}

In constrained reinforcement learning (RL), a learning agent seeks to not only optimize the overall reward but also satisfy the additional safety, diversity, or budget constraints. 
Consequently, existing constrained RL solutions require several new algorithmic ingredients that are notably different from standard RL. On the other hand, reward-free RL is independently developed in the unconstrained literature, which learns the transition dynamics without using the reward information, and thus naturally capable of addressing RL with multiple objectives under the common dynamics. This paper bridges reward-free RL and constrained RL. Particularly, we propose a simple meta-algorithm such that given any reward-free RL oracle, the approachability and constrained RL problems can be directly solved with negligible overheads in sample complexity. Utilizing the existing reward-free RL solvers, our framework provides sharp sample complexity results for constrained RL in the tabular MDP setting, matching the best existing results up to a factor of horizon dependence; our framework directly extends to a setting of tabular two-player Markov games, and gives a new result for constrained RL with linear function approximation.

\end{abstract}


\section{Introduction}
\label{sec:intro}

In a wide range of modern reinforcement learning (RL) applications, it is not sufficient for the learning agents to only maximize a scalar reward. More importantly, they must satisfy various \emph{constraints}. For instance, such constraints can be the physical limit of power consumption or torque in motors for robotics tasks \citep{tessler2018reward}; the budget for computation and the frequency of actions for real-time strategy games~\citep{vinyals2019grandmaster}; and the requirement for safety, fuel efficiency and human comfort for autonomous drive~\citep{Hoang2019BPLUC}. In addition, constraints are also crucial in tasks such as dynamic pricing with limited supply~\citep{BZ09,DynPricing-ec12}, scheduling of resources on a computer cluster~\citep{Mao2016RLSystems}, imitation learning~\citep{SyedSchapire2008,ziebart2008maximum,sun2019provably}, as well as reinforcement learning with fairness \citep{jabbari2017fairness}.

These huge demand in practice gives rise to a subfield---constrained RL, which focuses on designing efficient algorithms to find near-optimal policies for RL problems under linear or general convex constraints. Most constrained RL works directly combine the existing techniques such as value iteration and optimism from unconstrained literature, with new techniques specifically designed to deal with linear constraints~\cite{efroni2020exploration,ding2021provably,qiu2020upper} or general convex constraints~\cite{miryoosefi2020,yu2021provably}. The end product is a single new complex algorithm which is tasked to solve all the challenges of learning dynamics, exploration, planning as well as constraints satisfaction simultaneously. Consequently, these algorithms need to be re-analyzed from scratch, and it is highly nontrivial to translate the progress made in the unconstrained RL to the constrained setting. 


On the other hand, reward-free RL---proposed in \cite{jin2020reward}---is a framework for the unconstrained setting, which learns the transition dynamics without using the reward. The framework has two phases: in the exploration phase, the agent first collects trajectories from a Markov decision process (MDP) and learns the dynamics without a pre-specified reward function. After exploration, the agent is tasked with computing near-optimal policies under the MDP for a collection of given reward functions. This framework is particularly suitable when there are multiple reward functions of interest, and has been developed recently to attack various settings including tabular MDPs \cite{jin2020reward, zhang2020task}, linear MDPs \cite{wang2020reward,zanette2020provably}, and tabular Markov games \cite{liu2020sharp}.

\paragraph{Contribution.} In this paper, we propose a simple approach to solve constrained RL problems by bridging the reward-free RL literature and constrained RL literature. Our approach isolates the challenges of constraint satisfaction, and leaves the remaining RL challenges such as learning dynamics and exploration to reward-free RL. This allows us to design a new algorithm which purely focuses on addressing the constraints. Formally, we design a meta-algorithm for RL problems with general convex constraints. 
Our meta-algorithm takes a reward-free RL solver, and can be used to directly solve the approachability problem, as well as the constrained MDP problems using very small amount of samples in addition to what is required for reward-free RL. 


 Our framework enables direct translation of any progress in reward-free RL to constrained RL. Leveraging recent advances in reward-free RL, our meta-algorithm directly implies sample-efficient guarantees of constrained RL in the settings of tabular MDP, linear MDP, as well as tabular two-player Markov games. In particular,
\begin{itemize}
	\item \emph{Tabular setting}: Our work achieves sample complexity of $\tilde{\mathcal{O}}(\min\{d, S\} H^4SA/\epsilon^2)$ for all three tasks of reward-free RL for Vector-valued MDPs (VMDP), approachability, and RL with general convex constraints. Here $d$ is the dimension of VMDP or the number of constraints, $S, A$ are the number of states and actions, $H$ is the horizon, and $\epsilon$ is the error tolerance. It matches the best existing results up to a factor of $H$.
	\item \emph{Linear setting}: Our work provides new sample complexity of $\tilde{\mathcal{O}}(d_{\text{lin}}^3 H^6/\epsilon^2)$ for all three tasks above for linear MDPs. There has been no prior sample-efficient result in this setting.
	\item \emph{Two-player setting}: Our work extends to the setting of tabular two-player vector-valued Markov games and achieves 
  low regret of $\alpha(T)= \bigO(\epsilon/2 + \sqrt{H^2\iota/T})$ at the cost of this $\bigO(\epsilon)$ bias in regret as well as additional samples for preprocessing. 
\end{itemize}

\subsection{Related work}

\begin{table*}[t]
  \renewcommand{\arraystretch}{1.5}
  \caption{\label{table:rate} Sample complexity for algorithms to solve reward-free RL for VMDP (Definition \ref{def:reward-free}), approachability (Definition \ref{def:blackwell-VMDPs}) and CMDP with general convex constraints (Definition \ref{def:constrianed-rl}).\textsuperscript{$1$}
  }
 \centerline{ \begin{tabular}{|c|c|c|c|c|}
    \hline
   & \textbf{Algorithm} & \textbf{Reward-free} & \textbf{Approachability} & \textbf{CMDP} \\ \hline
  \multirow{4}{*}{Tabular} & Wu et al. \cite{wu2020accommodating} & $\tilde{\mathcal{O}}(\min\{d, S\}H^4SA/\epsilon^2)$ & - & - \\\hhline{|~----|}
  & Brantley et al. \cite{miryoosefi2020} & - & - & $\tilde{\mathcal{O}}(d^2H^3S^2A/\epsilon^2)$\\ \hhline{|~----|}
  & Yu et al. \cite{yu2021provably} & - & $\tilde{\mathcal{O}}(\min\{d, S\} H^3SA/\epsilon^2)$ & $\tilde{\mathcal{O}}(\min\{d, S\} H^3SA/\epsilon^2)$\\\hhline{|~----|}
  & \cellcolor{light-gray} This work & $\tilde{\mathcal{O}}(\min\{d, S\} H^4SA/\epsilon^2)$  & $\tilde{\mathcal{O}}(\min\{d, S\} H^4SA/\epsilon^2)$ & $\tilde{\mathcal{O}}(\min\{d, S\} H^4SA/\epsilon^2)$ \\\hline
  Linear &  \cellcolor{light-gray} This work & $\tilde{\mathcal{O}}(d_{\text{lin}}^3 H^6/\epsilon^2)$ & $\tilde{\mathcal{O}}(d_{\text{lin}}^3 H^6/\epsilon^2)$ & $\tilde{\mathcal{O}}(d_{\text{lin}}^3 H^6/\epsilon^2)$\\ \hline
     \end{tabular}
  }
\end{table*}
\newcommand{\customfootnotetext}[2]{{
  \renewcommand{\thefootnote}{#1}
  \footnotetext[0]{#2}}}
\customfootnotetext{$1$}{The presented sample complexities are all under the $L_2$ normalization conditions as studied in this paper. We comment that the results of \citep{wu2020accommodating, miryoosefi2020, yu2021provably} are originally presented under $L_1/L_\infty$ normalization conditions. While the results in \cite{wu2020accommodating} can be directly adapted to our setting as stated in the table, the other two results \cite{miryoosefi2020,yu2021provably} will be no better than the displayed results after adaptation.} 

In this section, we review the related works on three tasks studied in this paper---reward-free RL, approachability, and constrained RL.

\paragraph{Reward-free RL.} Reward-free exploration has been formalized by \cite{jin2020reward} for the tabular setting. Furthermore, \cite{jin2020reward} proposed a algorithm which has sample complexity $\tilde{\mathcal{O}}(\mathrm{poly}(H)S^2A/\epsilon^2)$ outputting $\epsilon$-optimal policy for arbitrary number of reward functions. More recently, \cite{zhang2020task,liu2020sharp} proposes algorithm VI-Zero with sharp sample complexity of $\tilde{\mathcal{O}}(\mathrm{poly}(H)\log(N)SA/\epsilon^2)$ capable of handling $N$ fixed reward functions. \cite{wang2020provably, zanette2020provably} further provide reward-free learning results in the setting of linear function approximation, in particular, \cite{wang2020provably} guarantees to find the near-optimal policies for an arbitrary number of (linear) reward functions within a sample complexity of $\tilde{\mathcal{O}}(\mathrm{poly}(H)\dlin^3/\epsilon^2)$. All results mentioned above are for scalar-valued MDPs. For the vector-valued MDPs (VMDPs), very recent work \cite{wu2020accommodating} designs a reward-free algorithm with sample complexity guarantee $\tilde{\mathcal{O}}(\mathrm{poly}(H)\min\{d,S\}SA/\epsilon^2)$ in the tabular setting. Compared to \cite{wu2020accommodating}, our reward-free algorithms for VMDP is adapted from the VI-Zero algorithm presented in \cite{liu2020sharp}. Our algorithm achieves the same sample complexity as \cite{wu2020accommodating} but allows arbitrary planning algorithms in the planning phase. 



\paragraph{Approachability and Constrained RL} 
Approachability and Constrained RL are two related tasks involving constraints. Inspired by Blackwell approachability \cite{blackwell1956analog}, recent work of \cite{miryoosefi2019} introduces approachability task for VMDPs. However, the proposed algorithm does not have polynomial sample complexity guarantees. More recently, \cite{yu2021provably} gave a new algorithm for approachability for both VMDPs and vector-valued Markov games (VMGs). \cite{yu2021provably} provides regret bounds for the proposed algorithm resulting in sample complexity guarantees of $\tilde{\mathcal{O}}(\poly(H)\min\{d, S\}SA/\epsilon^2)$ for approachability in VMDPs and $\tilde{\mathcal{O}}(\poly(H)\min\{d, S\}SAB/\epsilon^2)$ for approachability in VMGs.

Sample-efficient exploration in constrained reinforcement learning has been recently studied in a recent line of work by \cite{miryoosefi2020,qiu2020upper,efroni2020exploration,ding2021provably,singh2020learning}.  
All these works are also limited to linear constraints except \cite{miryoosefi2020} which extends their approach to general convex constraints achieving sample complexity of $\tilde{\mathcal{O}}(\mathrm{poly}(H)d^2S^2A/\epsilon^2)$ . However, \cite{miryoosefi2020} requires solving a large-scale convex optimization sub-problem. The best result for constrained RL with general convex constraints can be achieved by the approachability-based algorithm in \cite{yu2021provably} obtaining sample complexity of $\tilde{\mathcal{O}}(\mathrm{poly}(H)\min\{d, S\} SA/\epsilon^2)$. Technically, our meta-algorithm is based on the Fenchel's duality, which is similar to \cite{yu2021provably}. In contrast, \cite{yu2021provably} does not use reward-free RL, and is thus  different from our results in terms of algorithmic approaches. Consequently, \cite{yu2021provably} does not reveal the deep connections between reward-free RL and constrained RL, which is one of the main contribution of this paper. In addition, \cite{yu2021provably} does not address the function approximation setting.
 %

	
Finally, we note that among all results mentioned above, only \cite{ding2021provably} has considered models beyond tabular setting in the context of constrained RL. The model studied in \cite{ding2021provably} is known as linear mixture MDPs which is different and incomparable to the linear MDP models considered in this paper.

\section{Preliminaries and problem setup}
\label{sec:prelim}
We consider an episodic \emph{vector-valued Markov decision process} (VMDP) specified by a tuple $\cM = (\cS,\cA,H,\Pr,\br)$, where $\cS$ is the state space, $\cA$ is the action space, $H$ is the length of each episode, $\Pr=\{\Pr_h\}_{h=1}^H$ is the collection of \emph{unknown} transition probabilities with $\Pr_h(s' \mid s,a)$  equal to the probability of transiting to $s'$ after taking action $a$ in state $s$ at the $h^{\text{th}}$ step, and $\br = \{\br_h : \cS \times \cA \rightarrow \cB(1) \}_{h=1}^H$ is a collection of \emph{unknown} $d$-dimensional return functions, where $\cB(r)$ is the $d$-dimensional Euclidean ball of radius $r$ centered at the origin.

\paragraph{Interaction protocol.} In each episode, agent starts at a \emph{fixed} initial state $s_1$. Then, at each step $h \in [H]$, the agent observes the current state $s_h$, takes action $a_h$, receives stochastic sample of the return vector $\br_h(s_h,a_h)$, and it causes the environment to transit to $s_{h+1} \sim \Pr_h(\cdot \mid s_h,a_h)$. We assume that stochastic samples of the return function are also in $\cB(1)$, almost surely. 

\paragraph{Policy and value function.} A policy $\pi$ of an agent is a collection of $H$ functions $\{\pi_h : \cS \rightarrow \Delta(\cA)\}_{h=1}^H$ that map states to distribution over actions. The agent following policy $\pi$, picks action $a_h \sim \pi_h(s_h)$ at the $h^{\text{th}}$ step. We denote ${\bV_h^\pi} : \cS \rightarrow \cB(H)$ as the value function at step $h$ for policy $\pi$, defined as 
\begin{equation*}
	\textstyle{\bV_h^\pi(s):= \E_{\pi}\left[\sum_{h'=h}^H \br_{h'}(s_{h'},a_{h'}) \mid s_h = s \right].}
\end{equation*}
Similarly, we denote $\bQ^\pi_h : \cS \times \cA \rightarrow \cB(H)$ as the $Q$-value function at step $h$ for policy $\pi$, where
\begin{equation*}
	\textstyle{\bQ_h^\pi(s,a):= \E_{\pi}\left[\sum_{h'=h}^H \br_{h'}(s_{h'},a_{h'}) \mid s_h = s, a_h=a \right].}
\end{equation*}
\paragraph{Scalarized MDP.} For a VMDP $\cM$ and $\btheta \in \cB(1)$, we define scalar-valued MDP $\cM_\btheta=(\cS,\cA,H,\Pr,r_\btheta)$, where $r_\btheta = \{ \langle \btheta , \br_h \rangle : \cS \times \cA \rightarrow [-1,1] \}_{h =1}^H$. We denote $V_h^\pi(\cdot ; \btheta): \cS \rightarrow [-H,H]$ as the scalarized value function at step $h$ for policy $\pi$, defined as
\begin{equation*}
	\textstyle{V_h^\pi(s;\btheta):= \E_{\pi}\left[\sum_{h'=h}^H \langle \btheta, \br_{h'}(s_{h'},a_{h'})\rangle \mid s_h = s \right] = \langle \btheta, \bV_h^\pi(s) \rangle.}
\end{equation*}
Similarly, we denote $Q^\pi_h(\cdot;\btheta) : \cS \times \cA \rightarrow [-H,H]$ as the scalarized $Q$-value function at step $h$ for policy $\pi$, where
\begin{equation*}
	\textstyle{Q_h^\pi(s,a;\btheta):= \E_{\pi}\left[\sum_{h'=h}^H \langle \btheta, \br_{h'}(s_{h'},a_{h'}) \rangle \mid s_h = s, a_h=a \right] = \langle \btheta, \bQ_h^\pi(s,a) \rangle.}
\end{equation*}
For a fixed $\btheta \in \mathbb{R}^d$, there exists an optimal policy $\pi^{\star}_\btheta$, maximizing value for all states \citep{puterman2014markov}; i.e., $V^{\pi^{\star}_\btheta}_h(s;\btheta)=\sup_{\pi} V^\pi_h(s;\btheta)$ for all $ s \in \cS$ and $h \in [H]$. We abbreviate $V^{\pi^{\star}_\btheta}(\cdot ; \btheta)$ and $Q^{\pi^{\star}_\btheta}(\cdot ; \btheta)$ as $V^{\star}(\cdot ; \btheta)$ and $Q^{\star}(\cdot ; \btheta)$ respectively.

\subsection{Reward-free exploration (RFE) for VMDPs}
\label{sec:reward-free}
The task of \emph{reward-free exploration} (formalized by \cite{jin2020reward} for tabular MDPs) considers the scenario in which the agents interacts with the environment without guidance of reward information. Later, the reward information is revealed and the agents is required to compute the near-optimal policy. In this section, we introduce its counterpart for VMPDs. Formally, it consists of two phases:

\paragraph{Exploration phase.} In the exploration phase, agent explores the unknown environment without observing any information regarding the return function. Namely, at each episode the agent executes policies to collect samples. The policies can depend on dynamic observations $\{s^k_h,a^k_h\}_{(k,h) \in [K] \times [H]}$ in the past episodes, but not the return vectors.

\paragraph{Planning phase.} In the planning phase, the agent no longer interacts with the environment; however, stochastic samples of the $d$-dimensional return function for the collected episodes is revealed to the agent, i.e. $\{\br^k_h\}_{(k,h) \in [K] \times [H]}$. Based on the episodes collected during the exploration phase, the agent outputs the near-optimal policies of $\cM_\btheta$ given an arbitrary number of vectors $\btheta \in \cB(1)$.

\begin{definition}[Reward-free algorithm for VMDPs]\label{def:reward-free} For any $\epsilon,\delta >0$, after collecting $m_{\textsc{RFE}}(\epsilon,\delta)$ episodes during the exploration phase, with probability at least $1-\delta$, the algorithm satisfies
\begin{equation}
\label{eq:def:reward-free}
	\forall \btheta \in \cB(1): \quad V_1^{\star}(s_1;\btheta)-V^{\pi_\btheta}_1(s_1;\btheta) \leq \epsilon,
\end{equation}
where $\pi_\btheta$ is the output of the planning phase for vector $\btheta$ as input. The function $m_{\textsc{RFE}}$ determines the \emph{sample complexity} of the RFE algorithm. 

\begin{remark} Standard reward-free setup concerns MDPs with scalar reward, and requires the algorithm to find the near-optimal policies for $N$ different prespecified reward functions in the planning phase, where the sample complexity typically scales with $\log N$. This type of results can be adapted into a guarantee in the form of \eqref{eq:def:reward-free} for VMDP by $\epsilon$-covering of $\btheta$ over $\cB(1)$ and a modified concentration arguments (see the proofs of Theorem \ref{thm:tabular} and Theorem \ref{thm:tabular-mg} for more details). 
\end{remark}



\end{definition}
\subsection{Approachability for VMDPs}

In this section we provide the description for the \emph{approachability} task for VMPDs introduced by \cite{miryoosefi2019}. Given a vector-valued Markov decision process and a convex target set $\mathcal{C}$, the goal is to learn a policy whose expected cumulative return vector lies in the target set (akin to Blackwell approachability in single-turn games, \cite{blackwell1956analog}). We consider the agnostic version of this task which is more general since it doesn't need to assume that such policy exists; instead, the agent learns to minimize the Euclidean distance between expected return of the learned policy and the target set.

\begin{definition}[Approachability algorithm for VMDPs]\label{def:blackwell-VMDPs} For any $\epsilon,\delta >0$, after collecting $m_\textsc{APP}(\epsilon,\delta)$ episodes, with probability at least $1-\delta$, the algorithm satisfies
	\begin{equation}
	\label{eq:def:blacwell-approachability}
		\dist(\bV^{\pi^{\mathrm{out}}}_1(s_1),\cC) \leq \min_{\pi}\dist(\bV^\pi_1(s_1),\cC)+\epsilon,
	\end{equation}
	where $\pi^\mathrm{out}$ is the output of the algorithm and $\dist(\bx,\cC)$ is the Euclidean distance between point $\bx$ and set $\cC$. The function $m_{\textsc{APP}}$ determines the \emph{sample complexity} of the algorithm.  
\end{definition}

\subsection{Constrained MDP (CMDP) with general convex constraints}
In this section we describe \emph{constrained Markov decision processes} (CMDPs) introduced by \cite{altman1999constrained}. The goal of this setting is to minimize cost while satisfying some linear constraints over consumption of $d$ resources (resources are akin to $\br$ in our case). Although, the original definition only allows for linear constraints, we consider the more general case of arbitrary convex constraints. 
More formally, consider a VMDP $\cM$, a cost function $c = \{c_h : \cS \times \cA \rightarrow [-1,1] \}_{h=1}^H$, and a convex constraint set $\cC$. The agent goal is to compete against the following benchmark:
\begin{equation*}
	\min_{\pi} C^\pi_1(s_1) \quad \text{s.t.} \quad \bV^\pi_1(s_1) \in \cC,
\end{equation*}
 where $C^\pi_h = \E_{\pi}\left[\sum_{h'=h}^H c_{h'}(s_{h'},a_{h'}) \mid s_h = s \right]$.

\begin{definition}[Algorithm for CMDP]\label{def:constrianed-rl} For any $\epsilon,\delta >0$, after collecting $m_\textsc{CMDP}(\epsilon,\delta)$ episodes, with probability at least $1-\delta$, the algorithm satisfies
	\begin{equation}
	\label{eq:def:constrained-rl}
	\begin{cases}	
		C^{\pi^{\mathrm{out}}}_1(s_1)- {\displaystyle \min_{\pi: \bV^\pi_1(s_1) \in \cC}}C^{\pi}_1(s_1) \leq \epsilon \\
		\dist\big(\bV^{\pi^{\mathrm{out}}}_1(s_1),\cC\big) \leq \epsilon,\\
	\end{cases}
	\end{equation}
	 where $\pi^\mathrm{out}$ is the output of the algorithm. The function $m_{\textsc{CMDP}}$ determines the \emph{sample complexity} of the algorithm.

\end{definition}
As also mentioned in the prior works \citep{miryoosefi2019,yu2021provably}, we formally show in the following theorem that approachability task (Definition~\ref{def:blackwell-VMDPs}) can be considered more general compared to CMDP (Definition~\ref{def:constrianed-rl}); Namely, given any algorithm for the former we can obtain an algorithm for the latter by incurring only extra logarithmic factor and a negligible overhead. The idea is to incorporate cost into the constraint set $\cC$ and perform an (approximate) binary search over the minimum attainable cost. The reduction and the proof can be found in Appendix~\ref{app:prelim}.
\begin{theorem}\label{thm:blackwell-to-constrained}
Given any approachability algorithm (Definition~\ref{def:blackwell-VMDPs}) with sample complexity $m_{\textsc{APP}}$, we can design an algorithm for CMDP (Definition~\ref{def:constrianed-rl}) with sample complexity $m_{\textsc{CMDP}}$, satisfying
\begin{equation*}
	\textstyle{m_{\textsc{CMDP}}(\epsilon,\delta) \leq \bigO\Big( m_{\textsc{APP}}\left(\frac{\epsilon}{6},\frac{\epsilon\delta}{12H}\right) + \frac{H^2\log[dH/\epsilon\delta]}{\epsilon^2}\Big)   \cdot \log\frac{1}{\epsilon}.}
\end{equation*} 
\end{theorem}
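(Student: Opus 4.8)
\emph{Reduction via augmentation.} The plan is to fold the cost into an extra coordinate of the return and then locate the optimal cost by binary search, invoking the approachability oracle as a feasibility test at each candidate threshold. Concretely, I would form the augmented VMDP whose $(d{+}1)$-dimensional return at step $h$ is $(\br_h,c_h)$, so that its augmented value at the initial state equals $(\bV^\pi_1(s_1),C^\pi_1(s_1))$; after rescaling by a universal constant (the pair has norm at most $\sqrt2$) the return again lies in a Euclidean ball, so Definition~\ref{def:blackwell-VMDPs} applies. For a threshold $\rho$ define the convex target $\cC_\rho:=\cC\times[-H,\rho]$. Since projection onto a product set acts coordinatewise, one has $\dist((\x,y),\cC_\rho)^2=\dist(\x,\cC)^2+(\max\{y-\rho,0\})^2$, so any point within distance $\eta$ of $\cC_\rho$ has its first block within $\eta$ of $\cC$ and its last coordinate at most $\rho+\eta$. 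Write $\rho^\star:=\min_{\pi:\bV^\pi_1(s_1)\in\cC}C^\pi_1(s_1)$ for the benchmark cost, which lies in $[-H,H]$ under feasibility.

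\emph{The feasibility test.} The key observation is one-sided: whenever $\rho\ge\rho^\star$ the minimizing policy $\pi^\dagger$ satisfies $(\bV^{\pi^\dagger}_1(s_1),C^{\pi^\dagger}_1(s_1))\in\cC_\rho$, hence $\min_\pi\dist((\bV^\pi_1(s_1),C^\pi_1(s_1)),\cC_\rho)=0$. Running the approachability oracle at accuracy $\epsilon/6$ on target $\cC_\rho$ therefore returns a policy whose true distance to $\cC_\rho$ is at most $\epsilon/6$. I would then estimate this realized distance by executing the returned policy for $\bigO(H^2\log[dH/\epsilon\delta]/\epsilon^2)$ episodes and averaging the augmented return: because $\dist(\cdot,\cC_\rho)$ is $1$-Lipschitz and the return is a bounded vector-valued sum over $H$ steps, a dimension-free vector concentration inequality yields accuracy $\epsilon/6$ with only a logarithmic dependence on $d$. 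Declaring the test \emph{passed} when the estimate is at most $\epsilon/3$, the two bounds combine to give: (i) the test passes for every $\rho\ge\rho^\star$ with high probability, and (ii) whenever it passes at some $\rho$, the returned policy has true distance at most $\epsilon/2$ to $\cC_\rho$, i.e.\ $\dist(\bV^\pi_1(s_1),\cC)\le\epsilon/2$ and $C^\pi_1(s_1)\le\rho+\epsilon/2$.

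\emph{Binary search.} I would binary-search $\rho$ over $[-H,H]$, maintaining $\ell\le\rho^\star\le u$ together with a recorded passing policy attached to $u$ (initialized at $\rho=H$, which passes by feasibility). At each midpoint $\rho$ I run the test; if it passes I set $u\setto\rho$ and refresh the recorded policy, otherwise I set $\ell\setto\rho$. By (i) a failed test certifies $\rho<\rho^\star$, so $\ell\le\rho^\star$ is preserved; after $T=\lceil\log_2(12H/\epsilon)\rceil=\bigO(\log\tfrac1\epsilon)$ rounds the interval has width at most $\epsilon/6$, whence $u\le\rho^\star+\epsilon/6$. Outputting the recorded policy and invoking (ii) then yields $\dist(\bV^{\pi^{\mathrm{out}}}_1(s_1),\cC)\le\epsilon/2\le\epsilon$ and $C^{\pi^{\mathrm{out}}}_1(s_1)\le u+\epsilon/2\le\rho^\star+2\epsilon/3\le\rho^\star+\epsilon$, which is exactly Definition~\ref{def:constrianed-rl}. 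Summing over the $T$ rounds gives $T$ oracle calls at accuracy $\epsilon/6$ plus $T$ evaluation phases, and a union bound over the $2T$ random events (each at failure probability $\tfrac{\epsilon\delta}{12H}$) keeps the total failure probability below $\delta$; this reproduces the claimed bound on $m_{\textsc{CMDP}}$.

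\emph{Main obstacle.} The delicate part is not the algebra of the augmentation but making the search robust to slack and noise: the oracle only controls distance up to its $\epsilon/6$ error and the test itself is estimated from samples, so I must verify that a failed test still rigorously certifies $\rho<\rho^\star$ (one-sided correctness) while a passed test always returns a genuinely admissible policy, and that the three error sources (oracle slack, estimation error, search resolution) together with the failure-probability budget can be partitioned to fit inside the single tolerance $\epsilon$ and confidence $\delta$. Keeping the evaluation dimension-free — so that $d$ enters only through the logarithm — is the one remaining point that forces a vector, rather than coordinatewise, concentration bound.
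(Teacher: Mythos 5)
Your reduction is correct and is essentially the paper's own proof: the same augmented VMDP with the cost folded in as an extra return coordinate, the same binary search over the cost threshold that uses the approachability oracle at accuracy $\epsilon/6$ plus an $\bigO(H^2\log[dH/\epsilon\delta]/\epsilon^2)$-episode empirical distance estimate (via the dimension-free vector concentration of Lemma~\ref{lem:concentration-l2vector}) as the feasibility test, with the same error and confidence budgeting. The only difference is bookkeeping: you output the recorded policy from the last \emph{passing} threshold $u$, so you only need the one-sided invariant $\ell\le\rho^\star$, whereas the paper outputs the final policy $\pi^T$ and therefore must additionally prove that the test fails whenever $\mathrm{mid}\le c^\star-3\epsilon$ so that $\mathrm{mid}^T$ stays within $\bigO(\epsilon)$ of $c^\star$; both variants yield the claimed bound.
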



\section{Meta-algorithm for VMDPs}
\label{sec:framework}

In this section, equipped with preliminaries discussed in Section~\ref{sec:prelim}, we are ready introduce our main algorithmic framework for VMDPs bridging reward-free RL and approachability. 

Before introducing the algorithm, we explain the intuition behind it. By Fenchel's duality (similar to \cite{yu2021provably}), one can show that 
\begin{equation*}
	\min_{\pi}\dist(\bV^\pi_1(s_1,\cC))=\min_{\pi}\max_{\theta \in \cB(1)} \Big[\langle \btheta, \bV^\pi_1(s_1,\cC) \rangle - \max_{\bx' \in \cC}\langle \btheta,\bx' \rangle \Big].
\end{equation*}
It satisfies the minimax conditions since it's concave in $\btheta$ and convex in $\pi$ (by allowing mixture policies); therefore, minimax theorem \cite{neumann1928theorie} implies that we can equivalently solve
\begin{equation*}
	\max_{\theta \in \cB(1)}\min_{\pi} \Big[\langle \btheta, \bV^\pi_1(s_1,\cC) \rangle - \max_{\bx' \in \cC}\langle \btheta,\bx' \rangle \Big].
\end{equation*}
This max-min form allows us to use general technique of \cite{freund1999adaptive} for solving a max-min by repeatedly playing a no-regret online learning algorithm as the max-player against best-response for the min-player. In particular, for a fixed $\btheta$, minimizing over $\pi$ is equivalent to finding optimal policy for scalarized MDP $\cM_{-\btheta}$. To achieve this, we can utilize a reward-free oracle as in Definition~\ref{def:reward-free}. On the other hand for $\btheta$-player we are able to use online gradient descent \citep{zinkevich2003online}. 


Combining ideas above, we obtain Algorithm~\ref{alg:meta}.

\begin{algorithm}
    \caption{Meta-algorithm for VMDPs}
    \label{alg:meta}
    \begin{algorithmic}[1]
    	\STATE \textbf{Input:} Reward-Free Algorithm $\textsc{Rfe}$ for VMDPs (as in Definiton~\ref{def:reward-free}), Target Set $\cC$ 
        \STATE \textbf{Hyperparameters:} learning rate $\eta^t$
      	\STATE \textbf{Initialize:} run exploration phase of $\textsc{Rfe}$ for $K$ episodes
      	\STATE \textbf{Set:} $\btheta^1 \in \cB(1)$
        \FOR{$t=1,2,\dots,T$}
            \STATE Obtain near optimal policy for $\cM_{-\btheta^t}$:
            \vspace{2mm}
            \begin{center}
            	$\pi^t \leftarrow$ output of planning phase of $\textsc{Rfe}$ for preference vector $-\btheta^t$
            \end{center}
            \vspace{2mm}
            \STATE Estimate $\bV^{\pi^t}_1(s_1)$ using one episode:
            \vspace{2mm}
            \begin{center}
            	Run $\pi^t$ for one episode and let $\hbv^t$ be the sum of vectorial returns            	
            \end{center}
             \vspace{2mm}
            \STATE Apply online gradient ascent update for utility function $u^t(\btheta)=\langle \btheta,\hbv^t \rangle - \max_{\bx \in \cC} \langle \btheta , \bx \rangle$:
            \vspace{2mm}
            \begin{center}	
            $\btheta^{t+1} \leftarrow \Gamma_{\cB(1)}[\btheta^t + \eta^t (\hbv^t-\argmax_{\bx \in \cC} \langle \btheta^t,\bx\rangle)]$
            \end{center}
            \vspace{2mm}
             where $\Gamma_{\cB(1)}$ is the projection into Euclidean unit ball
        \ENDFOR
        \STATE Let $\pi^{\mathrm{out}}$ be uniform mixture of $\{\pi^1,\dots,\pi^T\}$
        \STATE \textbf{Return} $\pi^{\mathrm{out}}$
    \end{algorithmic}
\end{algorithm}

\begin{theorem}
\label{thm:meta}
	There exists an absolute constant $c$, such that for any choice of RFE algorithm (Definition~\ref{def:reward-free}) and for any $\epsilon \in (0,H]$ and $\delta \in (0,1]$, if we choose 
	\begin{equation*}
		T \geq c \big(H^2\iota/\epsilon^2), \quad K \geq  m_\textsc{RFE}(\epsilon/2,\delta/2),  \text{ and} \quad  \eta^t = \sqrt{1/(H^2t)},
	\end{equation*}
	where $\iota=\log(d/\delta)$; then, with probability at least $1-\delta$, Algorithm~\ref{alg:meta} outputs an $\epsilon$-optimal policy for the approachability (Equation~\ref{eq:def:blacwell-approachability}).    
	Therefore, we have 
	$
		m_\textsc{APP}(\epsilon,\delta) \leq \bigO (m_\textsc{RFE}(\epsilon/2,\delta/2) +H^2\iota/\epsilon^2).
	$
\end{theorem}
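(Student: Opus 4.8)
The plan is to read Algorithm~\ref{alg:meta} as a Freund--Schapire reduction that solves the max-min by pitting a no-regret max-player (online gradient ascent over $\btheta$) against a best-responding min-player (the reward-free planner). Throughout, write $g(\btheta)=\max_{\bx\in\cC}\langle\btheta,\bx\rangle$ for the support function of $\cC$ and $f(\btheta,\pi)=\langle\btheta,\bV^\pi_1(s_1)\rangle-g(\btheta)$, so the Fenchel/minimax identity stated above reads $\min_\pi\dist(\bV^\pi_1(s_1),\cC)=\max_{\btheta\in\cB(1)}\min_\pi f(\btheta,\pi)=:V^\star$. Note $u^t(\btheta)=f(\btheta,\pi^t)+\langle\btheta,\hbv^t-\bV^{\pi^t}_1(s_1)\rangle$. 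The goal is to show $\dist(\bV^{\pi^{\mathrm{out}}}_1(s_1),\cC)\le V^\star+\epsilon$.

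First I would reduce the left-hand side to an average over the iterates. Since $\pi^{\mathrm{out}}$ is the uniform mixture of $\pi^1,\dots,\pi^T$, its value vector is the average $\bV^{\pi^{\mathrm{out}}}_1(s_1)=\frac1T\sum_{t}\bV^{\pi^t}_1(s_1)$; applying the Fenchel identity $\dist(\bx,\cC)=\max_{\btheta\in\cB(1)}[\langle\btheta,\bx\rangle-g(\btheta)]$ to this single vector and using that $g(\btheta)$ is independent of $t$ gives $\dist(\bV^{\pi^{\mathrm{out}}}_1(s_1),\cC)=\max_{\btheta\in\cB(1)}\frac1T\sum_t f(\btheta,\pi^t)=\frac1T\sum_t f(\bar\btheta,\pi^t)$ for the (data-dependent) maximizer $\bar\btheta$. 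Passing to the empirical utilities via the identity above yields $\frac1T\sum_t f(\bar\btheta,\pi^t)\le\frac1T\sum_t u^t(\bar\btheta)+\frac1T\big\|\sum_t(\bV^{\pi^t}_1(s_1)-\hbv^t)\big\|_2$.

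Next I would control $\frac1T\sum_t u^t(\bar\btheta)$ by the two pillars of the reduction. The no-regret guarantee of online gradient ascent on the concave $u^t$, whose supergradients $\hbv^t-\argmax_{\bx\in\cC}\langle\btheta^t,\bx\rangle$ have norm $O(H)$, together with the step size $\eta^t=1/(H\sqrt t)$ gives $\frac1T\sum_t u^t(\bar\btheta)\le\frac1T\sum_t u^t(\btheta^t)+O(H/\sqrt T)$. For the best-response term I re-expand $u^t(\btheta^t)=f(\btheta^t,\pi^t)+\langle\btheta^t,\hbv^t-\bV^{\pi^t}_1(s_1)\rangle$ and invoke Definition~\ref{def:reward-free}: because $\pi^t$ is the planner's output for preference $-\btheta^t$, it is $\epsilon/2$-optimal for $\cM_{-\btheta^t}$, i.e. $\langle\btheta^t,\bV^{\pi^t}_1(s_1)\rangle\le\min_\pi\langle\btheta^t,\bV^\pi_1(s_1)\rangle+\epsilon/2$, so $f(\btheta^t,\pi^t)\le\min_\pi f(\btheta^t,\pi)+\epsilon/2\le V^\star+\epsilon/2$. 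Crucially, the ``$\forall\btheta\in\cB(1)$'' form of the RFE guarantee holds on a single event and therefore applies to the random, data-dependent iterates $\btheta^t$ simultaneously. Summing and averaging gives $\frac1T\sum_t u^t(\btheta^t)\le V^\star+\epsilon/2+\frac1T\sum_t\langle\btheta^t,\hbv^t-\bV^{\pi^t}_1(s_1)\rangle$, whose last sum is a martingale remainder handled below.

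The remaining ingredient is concentration of the two martingale remainders, which I expect to be the main obstacle. The term $\frac1T\sum_t\langle\btheta^t,\hbv^t-\bV^{\pi^t}_1(s_1)\rangle$ is a scalar martingale, since $\btheta^t$ is $\mathcal{F}_{t-1}$-measurable and $\hbv^t$ is an unbiased one-episode estimate of $\bV^{\pi^t}_1(s_1)$ with increments bounded by $2H$, so Azuma--Hoeffding bounds it by $O(H\sqrt{\iota/T})$. The genuinely delicate term is $\frac1T\big\|\sum_t(\bV^{\pi^t}_1(s_1)-\hbv^t)\big\|_2$, because the relevant direction $\bar\btheta$ depends on the entire sample path; here I would use a vector-valued (Hilbert-space) martingale concentration — a Pinelis-type inequality or an $\epsilon$-net over $\cB(1)$ — to obtain the uniform bound $\big\|\sum_t(\bV^{\pi^t}_1(s_1)-\hbv^t)\big\|_2\le O(H\sqrt{T\iota})$ with $\iota=\log(d/\delta)$. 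Collecting the three error terms (all $O(H\sqrt{\iota/T})$ or $O(H/\sqrt T)$) and choosing $T\ge cH^2\iota/\epsilon^2$ for a large enough absolute constant $c$ makes their total at most $\epsilon/2$, giving $\dist(\bV^{\pi^{\mathrm{out}}}_1(s_1),\cC)\le V^\star+\epsilon$. A union bound over the RFE event (probability $\delta/2$, using $K\ge m_\textsc{RFE}(\epsilon/2,\delta/2)$) and the martingale events (probability $\delta/2$) yields success probability $1-\delta$, and the sample-complexity statement $m_\textsc{APP}(\epsilon,\delta)\le\bigO(m_\textsc{RFE}(\epsilon/2,\delta/2)+H^2\iota/\epsilon^2)$ follows because the iteration consumes exactly one episode per round.
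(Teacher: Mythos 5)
Your proposal is correct and follows essentially the same route as the paper's proof: Fenchel duality applied to the averaged value vector, the OGA regret bound (Theorem~\ref{thm:oga}), the RFE guarantee invoked at the data-dependent preference vectors $-\btheta^t$ (valid simultaneously on one event), norm-subGaussian vector concentration (Lemma~\ref{lem:concentration-l2vector}), and weak duality, which is exactly the content of the paper's Lemma~\ref{lem:delta-approachability} (your appeal to the full minimax theorem is harmless, since only the easy direction $\max_{\btheta}\min_\pi \le \min_\pi\max_{\btheta}$ is ever used). If anything, your treatment of the cross term $\frac{1}{T}\sum_t\langle\btheta^t,\hbv^t-\bv^t\rangle$ as a separate scalar Azuma--Hoeffding martingale is slightly more careful than the paper's corresponding step, which attributes that bound to the vector concentration event plus Cauchy--Schwarz even though $\btheta^t$ varies with $t$.
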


Theorem~\ref{thm:meta} shows that given any reward-free algorithm, Algorithm~\ref{alg:meta} can solve the approachability task with negligible overhead. The proof for Theorem~\ref{thm:meta} is provided in Appendix~\ref{app:framework}. Equipped with this theorem, since we have already shown the connection between approachability and constrained RL in Theorem~\ref{thm:blackwell-to-constrained}, any results for RFE can be directly translated to results for constrained RL.


\section{Tabular VMDPs}
\label{sec:tabular}
In this section, we consider tabular VMDPs; namely, we assume that $|\cS|\leq S$ and $|\cA| \leq A$. Utilizing prior work on tabular setting, describe our choice of reward-free algorithm.

In the exploration phase, we use VI-Zero proposed by \cite{liu2020sharp} which is displayed in Algorithm~\ref{alg:tabular}. Intuitively, the value function $\tQ_h(s,a)$ computed in the algorithm measures the level of uncertainty that agent may suffer if it takes action $a$ at state $s$ in step $h$. It incentivize the greedy policy to visit underexplored states improving our empirical estimate $\hPr$.

In the planning phase, given $\btheta \in \cB(1)$ as input we can use any planning algorithm (such as value iteration) for $\widehat{\cM}_\btheta = (\cS,\cA,H,\hProut,\langle \btheta,\hbr \rangle)$ where $\hbr$ is empirical estimate of $\br$ using collected samples $\{\br^k_h\}$. 

The following theorem state theoretical guarantees of the reward-free algorithm for tabular VMDPs. Proof of Theorem~\ref{thm:tabular} and more details can be found in Appendix~\ref{app:tabular}. 

\begin{algorithm}
    \caption{VI-Zero: Exploration Phase}
    \label{alg:tabular}
    \begin{algorithmic}[1]
    	\STATE \textbf{Hyperparameters:} Bonus $\beta_t$.
    	\STATE \textbf{Initialize:} 
    	for all $(s,a,h)\in \cS \times \cA \times [H]$: $\tQ_h(s,a)\leftarrow H$ and $N_h(s,a) \leftarrow 0$,    	
    	\STATE \hspace{14.5mm} for all $(s,a,h,s') \in \cS \times \cA \times [H] \times \cS$: $N_h(s,a,s') \leftarrow 0$,
		\STATE \hspace{14.5mm} $\Delta \leftarrow 0$.
    	\FOR{episode $k=1,2,\dots,K$}
    		\FOR{step $h=H,H-1,\dots,1$}
    			\FOR{state-action pair $(s,a) \in \cS \times \cA$}
    				\STATE $t \leftarrow N_h(s,a)$.
    				\IF{$t>0$}
    					\STATE $\tQ_h(s,a) \leftarrow \min\{[\hPr_h\tV_{h+1}](s,a)+\beta_t,H\}$.
    				\ENDIF
    			\ENDFOR
    			\FOR{state $s \in \cS$}
    				\STATE $\tV_h(s) \leftarrow \max_{a \in \cA}\tQ_h(s,a)$ and  $\pi_h(s) \leftarrow \argmax_{a \in \cA}\tQ_h(s,a)$
    			\ENDFOR
    		\ENDFOR
    		\IF{ $\tV(s_1) \leq \Delta$}
    			\STATE $\Delta \leftarrow \tV(s_1)$ and $\hProut \leftarrow \hPr_h$
    		\ENDIF
    		\FOR{step $h=1,2,\dots,H$}
    			\STATE Take action $a_h \leftarrow \pi_h(s_h)$ and observe next state $s_{h+1}$
    			\STATE Update $N_h(s_h,a_h) \leftarrow N_h(s_h,a_h) + 1$ and $N_h(s_h,a_h,s_{h+1}) \leftarrow N_h(s_h,a_h,s_{h+1}) + 1$
    			\STATE $\hPr_h(\cdot \mid s_h,a_h) \leftarrow N_h(s_h,a_h,\cdot) / N_h(s_h,a_h)$
    		\ENDFOR
    	\ENDFOR
    	\STATE \textbf{Return} $\hProut$
    \end{algorithmic} 
\end{algorithm}


\begin{theorem}
\label{thm:tabular}
There exists a reward-free algorithm for tabular VMDPs and a right choice of hyperparameters that satisfies Definition~\ref{def:reward-free} with sample complexity
$
	m_{\textsc{RFE}}(\epsilon,\delta)	 \leq \bigO(\min\{d,S\}H^4SA\iota/\epsilon^2+H^3S^2A\iota^2/\epsilon),
$
where $\iota=\log[dSAH/(\epsilon\delta)]$.
\end{theorem}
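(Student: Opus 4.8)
The plan is to adapt the VI-Zero analysis of \cite{liu2020sharp,zhang2020task} to the vector-valued setting, handling the transition-estimation error and the return-estimation error separately, then propagating everything through a backward induction. First I would fix the Bernstein-type bonus $\beta_t$ used in Algorithm~\ref{alg:tabular} and establish the two high-probability guarantees driving the analysis. For the transitions I need $|[(\hPr_h-\Pr_h)\langle\btheta,\bV_{h+1}\rangle](s,a)|\lesssim\beta_{N_h(s,a)}$ uniformly over $\btheta\in\cB(1)$ and over the value functions $\bV_{h+1}$ that arise. This is exactly where the $\min\{d,S\}$ factor enters: the scalarized value $\langle\btheta,\bV_{h+1}(\cdot)\rangle$ can be covered either as a function of $\btheta\in\cB(1)$ (an $\epsilon$-net of $\cB(1)$ has $\log$-size $\bigO(d\,\polylog)$) or as a function of the state, living in $[-H,H]^{\cS}$ (covering cost $\bigO(S\,\polylog)$); taking whichever is cheaper yields a $\min\{d,S\}\iota$ log factor in the bonus. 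For the returns, a dimension-free $L_2$ vector concentration (bounded vectors in $\cB(1)$, via a bounded-difference argument) gives $\|\hbr_h(s,a)-\br_h(s,a)\|\lesssim\sqrt{\iota/N_h(s,a)}$, after which $\sup_{\btheta\in\cB(1)}|\langle\btheta,\hbr_h-\br_h\rangle|=\|\hbr_h-\br_h\|$ by duality; crucially this contribution carries no $d$ and stays lower-order.

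Next, by backward induction on $h$ I would show that the uncertainty value $\tV_h(s)$ computed by VI-Zero dominates the worst-case value-estimation error: for every $\btheta\in\cB(1)$ and every policy $\pi$, $|V_1^{\pi}(s_1;\btheta)-\hat V_1^{\pi}(s_1;\btheta)|\lesssim\tV_1(s_1)+\text{(return error)}$, where $\hat V$ is the value in the empirical model $\widehat{\cM}_\btheta=(\cS,\cA,H,\hProut,\langle\btheta,\hbr\rangle)$. The induction uses the standard error-propagation (simulation) identity, the fact that the bonus dominates the per-step transition error from Step~1, and the clipping $\tQ_h\le H$.

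Given $\tV_1(s_1)\le\epsilon/2$ and small return error, the usual three-term decomposition for the empirically optimal policy $\pi_\btheta$ — with the middle term nonpositive by optimality in $\widehat{\cM}_\btheta$ and the two outer terms bounded by the previous step — yields $V_1^\star(s_1;\btheta)-V_1^{\pi_\btheta}(s_1;\btheta)\le\epsilon$. It then remains to certify that some explored model achieves $\tV_1(s_1)\le\epsilon/2$. Here I invoke the VI-Zero regret argument: since the greedy policy maximizes $\tV$, a law-of-total-variance / pigeonhole bound gives $\frac1K\sum_{k=1}^K\tV_1^k(s_1)\lesssim\sqrt{\min\{d,S\}H^4SA\iota/K}+H^3S^2A\iota^2/K$, so the best recorded model $\hProut$ (the one minimizing $\tV(s_1)$) satisfies $\tV_1(s_1)\le\epsilon/2$ once $K\gtrsim\min\{d,S\}H^4SA\iota/\epsilon^2+H^3S^2A\iota^2/\epsilon$, which is exactly the claimed $m_{\textsc{RFE}}$. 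Finally, to upgrade from a finite cover to all $\btheta\in\cB(1)$, I note that $V_1^\star(s_1;\cdot)$ and $V_1^{\pi}(s_1;\cdot)$ are $H$-Lipschitz in $\btheta$ (since $\bV\in\cB(H)$), so an $(\epsilon/H)$-net suffices and only inflates the log factors absorbed into $\iota$.

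I expect the main obstacle to be the combination of the first and third steps: obtaining the \emph{sharp} $H^4$ dependence requires the variance-aware (Bernstein) bonuses and the law-of-total-variance bookkeeping of \cite{liu2020sharp}, and the adaptation to the vector-valued case must inject the $\min\{d,S\}$ covering without breaking the variance recursion — in particular, verifying that the return-estimation contribution remains a genuinely lower-order term rather than degrading the rate.
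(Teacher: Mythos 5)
Your overall architecture (covering $\cB(1)$ at log-cost $d$ versus covering $[-H,H]^{\cS}$ at log-cost $S$ to get $\min\{d,S\}$, a backward induction showing $\tV$ dominates the estimation error, a pigeonhole bound on $\frac1K\sum_k\tV^k_1(s_1)$, and an $H$-Lipschitz net over $\btheta$) matches the paper's proof, but there is a genuine gap in how you combine Steps 1 and 2. You ask for transition concentration ``uniformly over $\btheta$ and over the value functions $\bV_{h+1}$ that arise,'' and then use it in a simulation lemma that holds \emph{for every policy $\pi$}. This is fine on the $S$-covering branch (the $\ell_\infty$-net over $[-H,H]^{\cS}$ covers data-dependent functions), but it fails on the $d$-covering branch, which is the branch that must be in force whenever $d<S$: the $\cB(1)$-net argument only applies to scalarizations of a \emph{fixed} (deterministic) vector-valued function — in the paper, $V^\star_{h+1}(\cdot;\btheta)$, extended to all $\btheta$ via Lipschitzness (Lemma~\ref{lem:lipschitz-V}) — and says nothing about data-dependent functions such as $\hV_{h+1}$ or $V^{\pi_\btheta}_{h+1}$, where $\pi_\btheta$ is computed from the same samples. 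A union bound over policies to repair this would reintroduce a $\log(A^{SH})=SH\log A$ term and destroy the $\min\{d,S\}$ rate. The paper closes exactly this hole with an extra layer you do not have: a \emph{pointwise} empirical-Bernstein bound on $\hPr_h-\Pr_h$ (third claim of Lemma~\ref{lem:event1}), a triangle inequality routing every data-dependent value function through $V^\star$ (Lemma~\ref{lem:closensess-QandV-tabular}), and the $\alpha_h$/AM-GM recursion of Lemma~\ref{lem:main-apptabular}, which absorbs the cross term $[(\hPr_h-\Pr_h)(V^{\hpi}_{h+1}-V^\star_{h+1})]$ at the price of a $(1+\frac1H)$ blow-up per step. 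This is also precisely why the bonus carries the lower-order term $H^2S\iota/t$ and why the sample complexity has the $H^3S^2A\iota^2/\epsilon$ additive term; your proposal cannot produce either.

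A secondary, less fatal issue: your concern that the ``sharp $H^4$ dependence requires the variance-aware (Bernstein) bonuses and the law-of-total-variance bookkeeping'' is backwards relative to this result. The paper obtains $H^4$ with plain Hoeffding-type bonuses $\beta_t \asymp \sqrt{\min\{d,S\}H^2\iota/t}+H^2S\iota/t$ and a standard pigeonhole (Theorem~\ref{thm:apptabular-sumbonus}); Bernstein plus law-of-total-variance is what would be needed for $H^3$, and the paper explicitly notes that Bernstein bonuses do not adapt to the reward-free setting (this is exactly the factor-$H$ gap to \cite{yu2021provably}). So insisting on Bernstein would lead you toward an argument the authors believe does not go through, while Hoeffding already suffices for the stated bound.
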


The theorem shows that we can achieve sample complexity of $\tilde{\bigO}(\min\{d,S\}H^4SA/\epsilon^2)$ for the reward-free exploration (Definition~\ref{def:reward-free}) in the tabular setting, matching the best result in \cite{wu2020accommodating}. The following corollary immediately follows from Theorem~\ref{thm:tabular} along with  Theorem~\ref{thm:meta} and \ref{thm:blackwell-to-constrained-alg}.


\begin{corollary}
\label{cor:tabular} In the tabular setting we can have an algorithm for  
	approachability  (Definition~\ref{def:blackwell-VMDPs}) with  
	$
		m_{\textsc{APP}}(\epsilon,\delta)	 \leq \bigO(\min\{d,S\}H^4SA\iota/\epsilon^2+H^3S^2A\iota^2/\epsilon)
	$	
	, and an algorithm for CMDP (Definition~\ref{def:constrianed-rl}) with 
	$
		m_{\textsc{CMDP}}(\epsilon,\delta)	 \leq \bigO(\min\{d,S\}H^4SA\iota^2/\epsilon^2+H^3S^2A\iota^3/\epsilon)
	$.
\end{corollary}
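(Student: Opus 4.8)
The plan is to obtain both bounds purely by composing the three results already established: Theorem~\ref{thm:tabular} (the reward-free sample complexity for tabular VMDPs), Theorem~\ref{thm:meta} (the reduction from reward-free RL to approachability), and Theorem~\ref{thm:blackwell-to-constrained} (the reduction from approachability to CMDP). No new algorithmic idea is needed; the entire content is to instantiate the oracle bounds and verify that each parameter substitution costs only absorbable constant and logarithmic factors.

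For the approachability bound I would start from Theorem~\ref{thm:meta}, which gives $m_\textsc{APP}(\epsilon,\delta) \leq \bigO(m_\textsc{RFE}(\epsilon/2,\delta/2) + H^2\iota/\epsilon^2)$, and plug in the tabular reward-free guarantee of Theorem~\ref{thm:tabular} at accuracy $\epsilon/2$ and confidence $\delta/2$. The halving of $\epsilon$ contributes only a constant factor to the leading $\epsilon^{-2}$ term and to the $\epsilon^{-1}$ term; the halving of $\delta$ and the rescaling inside $\iota = \log[dSAH/(\epsilon\delta)]$ change the log argument by at most a multiplicative constant, so the resulting $\iota' = \bigO(\iota)$. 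Finally, the additive overhead $H^2\iota/\epsilon^2$ from Theorem~\ref{thm:meta} is dominated by $\min\{d,S\}H^4SA\iota/\epsilon^2$ since $\min\{d,S\}H^4SA \geq H^2$. This yields the claimed $m_\textsc{APP}$.

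For the CMDP bound I would feed this approachability guarantee into Theorem~\ref{thm:blackwell-to-constrained}, namely $m_\textsc{CMDP}(\epsilon,\delta) \leq \bigO(m_\textsc{APP}(\epsilon/6, \epsilon\delta/(12H)) + H^2\log[dH/(\epsilon\delta)]/\epsilon^2)\cdot\log(1/\epsilon)$. Evaluating $m_\textsc{APP}$ at accuracy $\epsilon/6$ and confidence $\epsilon\delta/(12H)$ rescales $\epsilon$ by a constant and replaces the log argument by $\bigO(dSAH^2/(\epsilon^2\delta))$; since $\log[dSAH^2/(\epsilon^2\delta)] = \bigO(\iota)$, the log factor stays $\bigO(\iota)$. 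The additive $H^2\log[dH/(\epsilon\delta)]/\epsilon^2$ term is again dominated by the leading approachability term. The decisive bookkeeping is the outer multiplier $\log(1/\epsilon)$: using $1/\epsilon \leq dSAH/(\epsilon\delta)$ gives $\log(1/\epsilon) = \bigO(\iota)$, so multiplying through raises each power of $\iota$ by one, producing exactly $\min\{d,S\}H^4SA\iota^2/\epsilon^2 + H^3S^2A\iota^3/\epsilon$.

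The only obstacle is this bookkeeping: one must check that every substitution of $(\epsilon,\delta)$ and every rescaling of the logarithmic argument remains within a constant multiple of $\iota$, and that the additive overheads of both reductions are subsumed by the reward-free leading term. No step is technically deep; the main risk is mis-tracking how many extra factors of $\iota$ the $\log(1/\epsilon)$ multiplier and the confidence rescaling introduce, which is precisely what separates the $\iota^2,\iota^3$ powers of $m_\textsc{CMDP}$ from the $\iota,\iota^2$ powers of $m_\textsc{APP}$.
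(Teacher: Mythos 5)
Your proposal is correct and takes essentially the same route as the paper: the corollary is stated there as an immediate composition of Theorem~\ref{thm:tabular} with Theorem~\ref{thm:meta} and the approachability-to-CMDP reduction (Theorem~\ref{thm:blackwell-to-constrained}, proved via Theorem~\ref{thm:blackwell-to-constrained-alg}), exactly as you do. Your bookkeeping is also right: the constant rescalings of $(\epsilon,\delta)$ and the confidence substitution $\epsilon\delta/(12H)$ keep the logarithm at $\bigO(\iota)$, the additive overheads $H^2\iota/\epsilon^2$ are absorbed into the leading term, and the outer $\log(1/\epsilon) = \bigO(\iota)$ multiplier accounts precisely for the jump from $(\iota,\iota^2)$ in $m_{\textsc{APP}}$ to $(\iota^2,\iota^3)$ in $m_{\textsc{CMDP}}$.
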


The corollary shows that we can achieve sample complexity of $\tilde{\bigO}(\min\{d,S\}H^4SA/\epsilon^2)$ for approachability (Definition~\ref{def:blackwell-VMDPs}) and CMDP with general convex constraints (Definition~\ref{def:constrianed-rl}) in the tabular setting, matching the best result in \cite{yu2021provably} up to a single factor of $H$. \footnote{This $H$ factor difference is due the Bernstein-type bonus used in \cite{yu2021provably}, which can not be adapted to the reward-free setting.} Therefore, our framework while being modular enabling direct translation of reward-free RL to constrained RL, achieves sharp sample complexity guarantees. We comment that due to reward-free nature of our approach unlike \cite{yu2021provably}, we can no longer provide regret guarantees.

\section{Linear function approximation: Linear VMDPs}
\label{sec:linear}

In this section we consider the setting of linear function approximation and allow $\cS$ and $\cA$ to be infinitely large. We assume that agent has access to a feature map $\bphi: \cS \times \cA \rightarrow \mathbb{R}^{\dlin}$ and the return function and transitions are linear functions of the feature map. We formally define the linear VMDPs in Assumption~\ref{asm:linear} which adapts the definition of linear MDPs \citep{jin2020provably} for VMDPS; namely, they coincide for the case of $d=1$.

\begin{assumption}[Linear VMDP]
\label{asm:linear}
A VMDP $\cM = (\cS,\cA,H,\Pr,\br)$ is said to be a linear VMDP with a feature map $\bphi :\cS \times \cA \rightarrow \mathbb{R}^{\dlin}$, if for any $h \in [H]$:
\begin{enumerate}
	\item There exists $\dlin$ unknown (signed) measures $\bmu_h=\{\mu_h^{(1)},\dots,\mu_h^{(\dlin)}\}$ over $\cS$ such that for any $(s,a) \in \cS \times \cA$ we have $\Pr_h(\cdot\mid s,a)=\langle\bmu(\cdot),\bphi(s,a)\rangle$.
	\item There exists an unknown matrix $W_h \in \mathbb{R}^{d \times \dlin}$ such that for any $(s,a) \in \cS \times \cA$ we have $\br_h(s,a)=W_h\bphi(s,a)$.
\end{enumerate}
\end{assumption}
Similar to \citet{jin2020provably}, we assume that $\norm{\bphi(s,a)} \leq 1$ for all $(s,a) \in \cS \times \cA$, $\norm{\bmu_h(\cS)}\leq \sqrt{\dlin}$ for all $h \in [H]$, and $\norm{W_h} \leq \sqrt{\dlin}$ for all $h \in [H]$. 

\cite{wang2020provably} has recently proposed a sample-efficient algorithm for reward-free exploration in linear MDPs. Utilizing that algorithm and tailoring it for our setting, we can obtain the following theoretical guarantee. The algorithm and the proof can be found in Appendix~\ref{app:linear}.
\begin{theorem}
\label{thm:linear}
There exists a reward-free algorithm for linear VMDPs (as in Assumption~\ref{asm:linear}) and a right choice of hyperparameters that satisfies Definition~\ref{def:reward-free} with sample complexity
$
	m_{\textsc{RFE}}(\epsilon,\delta)	 \leq \bigO(\dlin^3H^6\iota^2/\epsilon^2),
$
where $\iota=\log[\dlin d H/(\epsilon\delta)]$.
\end{theorem}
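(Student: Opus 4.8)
The plan is to adapt the reward-free exploration algorithm of \cite{wang2020provably} for scalar linear MDPs to the vector-valued setting, exploiting the fact that scalarization preserves linearity. Concretely, under Assumption~\ref{asm:linear} the scalarized reward of $\cM_\btheta$ at step $h$ equals $\langle\btheta, W_h\bphi(s,a)\rangle = \langle W_h\trans\btheta, \bphi(s,a)\rangle$, so $\cM_\btheta$ is itself a scalar linear MDP whose unknown reward weight $W_h\trans\btheta$ has norm at most $\norm{W_h}\norm{\btheta}\le\sqrt{\dlin}$ for every $\btheta\in\cB(1)$. Thus the problem reduces to reward-free exploration in a linear MDP, with one twist specific to the VMDP setup: the return function is never revealed in closed form; instead the planner only receives noisy samples $\{\br_h^k\}$ at the visited features and must therefore \emph{estimate} the reward before planning.

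First I would run the exploration phase of \cite{wang2020provably} essentially unchanged: since that phase is reward-agnostic (it drives exploration through a feature-based uncertainty bonus that depends only on transitions), it carries over and, after $K$ episodes, guarantees that for every policy $\pi$ the accumulated feature uncertainty $\sum_{h}\E_\pi\norm{\bphi(s_h,a_h)}_{\Lambda_h^{-1}}$ is small, where $\Lambda_h$ is the regularized empirical feature covariance. In the planning phase, given $\btheta$, I would (i) form the ridge-regression estimate $\hat W_h$ of $W_h$ from the samples $\{\br_h^k\}$, yielding the estimated scalar reward $\langle \hat W_h\trans\btheta, \bphi\rangle$, and then (ii) run least-squares value iteration with the exploration bonus on $\widehat{\cM}_\btheta$ to output a greedy policy $\pi_\btheta$. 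This is precisely the planning procedure of \cite{wang2020provably} with the known reward replaced by its estimate.

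The error analysis decomposes the suboptimality $V_1^\star(s_1;\btheta)-V_1^{\pi_\btheta}(s_1;\btheta)$ into a \emph{planning error} and a \emph{reward-estimation error}. The planning error, assuming an exact reward, is bounded as in \cite{wang2020provably} by the accumulated feature uncertainty times the weight norm $\sqrt{\dlin}$, and this bound holds uniformly over all reward weights of norm $\le\sqrt{\dlin}$ by the linear structure, so no covering of $\cB(1)$ is needed for this term. For the reward-estimation error, Cauchy--Schwarz in the $\Lambda_h$-geometry gives $|\langle(W_h-\hat W_h)\trans\btheta,\bphi\rangle|\le\norm{(W_h-\hat W_h)\trans\btheta}_{\Lambda_h}\,\norm{\bphi}_{\Lambda_h^{-1}}$, so this error is again controlled by the same feature-uncertainty quantity, with $\norm{(W_h-\hat W_h)\trans\btheta}_{\Lambda_h}\le\norm{\btheta}$ times a vector-valued self-normalized confidence radius. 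Here the return dimension $d$ enters, but only \emph{logarithmically}, through a union bound over the $d$ coordinates of the regression, which is what produces the factor $\iota=\log[\dlin d H/(\epsilon\delta)]$. Combining both terms and invoking the exploration guarantee of \cite{wang2020provably}, the choice $K=\tilde{\bigO}(\dlin^3H^6/\epsilon^2)$ makes the total error at most $\epsilon$ uniformly over $\btheta\in\cB(1)$ with probability $1-\delta$.

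The step I expect to be the main obstacle is establishing the \emph{uniform} guarantee over the continuous ball $\cB(1)$ while keeping the dependence on $d$ logarithmic. A naive $\epsilon$-net over $\btheta$ would introduce a factor $e^{O(d)}$ and hence a $d$ (rather than $\log d$) dependence; avoiding this requires pushing the uniformity into the linear structure, i.e. proving that both the planning-bonus bound and the reward confidence radius are simultaneously valid for every weight of norm $\le\sqrt{\dlin}$ via a single self-normalized martingale argument, and that the two error sources share the same $\Lambda_h^{-1}$-uncertainty so that the exploration guarantee closes the analysis. A secondary care point is matching the normalization of \cite{wang2020provably} (their $\norm{\bw}\le\sqrt{\dlin}$ scaling) to our $L_2$ conventions, so that the $\poly(H)=H^6$ and $\dlin^3$ factors come out exactly as stated.
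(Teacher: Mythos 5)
Your overall route is the same as the paper's: run the reward-free exploration phase of \cite{wang2020provably} unchanged, and in the planning phase run LSVI on the scalarized MDP with the unknown reward handled by ridge regression on the sampled returns (the paper folds $\btheta^\top\br^i_h$ directly into the regression target, which is mathematically identical to your ``estimate $\hat{W}_h$, then scalarize'' step, since ridge regression is linear in the targets); the suboptimality is then split into an optimism/planning term and a reward-estimation term, both absorbed into the bonus $\norm{\bphi}_{(\hLambda_h)^{-1}}$, and the analysis is closed by the exploration guarantee that $\max_\pi \E_\pi\sum_h \hu_h(s_h,a_h) \le \tilde{\bigO}(\sqrt{\dlin^3H^6/K})$, giving $K=\tilde{\bigO}(\dlin^3H^6/\epsilon^2)$. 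Your remark that the value-regression concentration needs no net over $\cB(1)$ is also consistent with the paper (uniformity there comes from covering the value-function class $\min\{w^\top\bphi+\beta\norm{\bphi}_{\Lambda^{-1}},H\}$, whose metric entropy is $\tilde{\bigO}(\dlin^2)$ independent of $d$).

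The genuine gap is in the step you yourself flagged as the crux: the uniform-over-$\btheta$ reward concentration with only $\log d$ dependence. Write $\boldsymbol{\eta}^i_h=\br^i_h-\br_h(s^i_h,a^i_h)$ and $N_h=\sum_{i}\bphi^i_h(\boldsymbol{\eta}^i_h)^\top$; your confidence radius must bound $\sup_{\btheta\in\cB(1)}\norm{\hLambda_h^{-1/2}N_h\btheta}=\norm{\hLambda_h^{-1/2}N_h}_{\mathrm{op}}$. A union bound over the $d$ coordinates gives only $\norm{\hLambda_h^{-1/2}N_h e_j}\le \bigO(\sqrt{\dlin\iota})$ for each $j$, and to pass to an arbitrary unit $\btheta$ you must combine columns; the best generic inequality is $\norm{M}_{\mathrm{op}}\le \fnorm{M}$, i.e. $\norm{\hLambda_h^{-1/2}N_h}_{\mathrm{op}}\le \sqrt{d}\cdot\bigO(\sqrt{\dlin\iota})$. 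Per-coordinate bounds cannot rule out the columns all being parallel (for a rank-one matrix the operator and Frobenius norms coincide), so this mechanism is genuinely lossy: it proves the theorem only with an extra $\max\{1,d/\dlin\}$ factor, i.e. $\tilde{\bigO}(\dlin^2\max\{\dlin,d\}H^6/\epsilon^2)$, and fails to deliver the claimed $\log d$ dependence precisely in the many-constraints regime $d\gg\dlin$. What is actually needed is a vector/matrix-valued self-normalized inequality controlling $\norm{\hLambda_h^{-1/2}N_h}_{\mathrm{op}}$ directly (matrix-Freedman type, or the norm-subGaussian machinery of Lemma~\ref{lem:concentration-l2vector} — which is exactly how the paper's \emph{tabular} proof gets $\log d$: bound $\norm{(\hbr_h-\br_h)(s,a)}$ with $\log d$, then Cauchy--Schwarz over $\btheta$). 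To be fair, the paper's own Lemma~\ref{lem:event2} is terse on the same point — it invokes ``a covering argument over \ldots\ $\btheta$'', which by your own (correct) observation would cost $\mathrm{poly}(d)$ — so you have isolated a real subtlety; but your per-coordinate union bound does not close it either.
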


The theorem provides a new sample complexity result of $\tilde{\bigO}(\dlin^3H^6/\epsilon^2)$ for the reward-free exploration (Definition~\ref{def:reward-free}) in the linear setting (Assumption~\ref{asm:linear}). The following corollary immediately follows from Theorem~\ref{thm:linear} along with  Theorem~\ref{thm:meta} and \ref{thm:blackwell-to-constrained-alg}.


\begin{corollary}
\label{cor:linear} In the linear setting (Assumption~\ref{asm:linear}) we can have
 an approachability algorithm (Definition~\ref{def:blackwell-VMDPs}) satisfying 
	$
		m_{\textsc{APP}}(\epsilon,\delta)	 \leq \bigO(\dlin^3H^6\iota^2/\epsilon^2)
	$ and 	
	an algorithm for CMDP (Definition~\ref{def:constrianed-rl}) with satisfying
	$
		m_{\textsc{CMDP}}(\epsilon,\delta)	 \leq \bigO(\dlin^3H^6\iota^3/\epsilon^2)	
	$.
\end{corollary}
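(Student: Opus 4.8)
The plan is to chain the three results already in hand: this corollary is pure bookkeeping. Theorem~\ref{thm:linear} supplies a reward-free oracle for linear VMDPs, Theorem~\ref{thm:meta} turns any such oracle into an approachability algorithm, and Theorem~\ref{thm:blackwell-to-constrained} turns the approachability algorithm into a CMDP algorithm. The only genuine task is to track how the rescalings of $\epsilon$ and $\delta$ demanded by each reduction propagate into the logarithmic factor $\iota$, and to confirm that at every stage the reward-free term dominates.

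First I would establish the approachability bound. By Theorem~\ref{thm:meta},
\[
	m_\textsc{APP}(\epsilon,\delta) \leq \bigO\big(m_\textsc{RFE}(\epsilon/2,\delta/2) + H^2\iota/\epsilon^2\big),
\]
and by Theorem~\ref{thm:linear},
\[
	m_\textsc{RFE}(\epsilon/2,\delta/2) \leq \bigO(\dlin^3H^6\,\iota_1^2/\epsilon^2), \qquad \iota_1 := \log[4\dlin d H/(\epsilon\delta)].
\]
Since $\iota_1 = \bigO(\iota)$ with $\iota = \log[\dlin d H/(\epsilon\delta)]$ (a constant prefactor inside the logarithm shifts $\iota$ only by an additive constant) and $\dlin^3 H^6 \geq H^2$, the reward-free term dominates, yielding $m_\textsc{APP}(\epsilon,\delta) \leq \bigO(\dlin^3 H^6 \iota^2/\epsilon^2)$, the first claim.

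Next I would feed this into the approachability-to-CMDP reduction. By Theorem~\ref{thm:blackwell-to-constrained},
\[
	m_\textsc{CMDP}(\epsilon,\delta) \leq \bigO\Big(m_\textsc{APP}\big(\tfrac{\epsilon}{6},\tfrac{\epsilon\delta}{12H}\big) + \tfrac{H^2\log[dH/\epsilon\delta]}{\epsilon^2}\Big)\cdot\log\tfrac{1}{\epsilon}.
\]
Substituting the approachability bound at arguments $(\epsilon/6,\epsilon\delta/12H)$ gives $m_\textsc{APP}(\epsilon/6,\epsilon\delta/12H) \leq \bigO(\dlin^3 H^6 \iota_2^2/\epsilon^2)$ with $\iota_2 := \log[72\dlin d H^2/(\epsilon^2\delta)]$. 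Again $\iota_2 = \bigO(\iota)$, because inserting polynomial factors of $H,\dlin,1/\epsilon$ inside the logarithm multiplies $\iota$ only by an absolute constant. Using $\log(1/\epsilon) = \bigO(\iota)$ and noting that the reward-free term again swallows the $H^2\log/\epsilon^2$ term, the outer $\log(1/\epsilon)$ factor contributes exactly one extra power of $\iota$, giving $m_\textsc{CMDP}(\epsilon,\delta) \leq \bigO(\dlin^3 H^6 \iota^3/\epsilon^2)$, the second claim.

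The only point requiring any care — the ``main obstacle,'' though it is mild — is verifying that each rescaled logarithmic factor $\iota_1,\iota_2$ collapses back to $\bigO(\iota)$. This rests on the elementary fact that $\log[c\cdot\poly(\dlin,H,1/\epsilon,1/\delta)] = \bigO(\log[\dlin d H/(\epsilon\delta)])$ for any absolute constant $c$, since constant and polynomial prefactors inside a logarithm contribute only additive lower-order terms. Once this is noted, both bounds follow by matching dominant terms, so no argument beyond the three cited results is needed.
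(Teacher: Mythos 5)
Your proposal is correct and follows exactly the paper's intended route: the paper states this corollary as an immediate consequence of Theorem~\ref{thm:linear}, Theorem~\ref{thm:meta}, and the approachability-to-CMDP reduction (Theorem~\ref{thm:blackwell-to-constrained}/\ref{thm:blackwell-to-constrained-alg}), and your chaining of these results, including the verification that the rescaled logarithmic factors $\iota_1,\iota_2$ remain $\bigO(\iota)$ and that the reward-free term dominates at each stage, is precisely the bookkeeping the paper leaves implicit.
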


The corollary provides a new sample complexity result of $\tilde{\bigO}(\dlin^3H^6/\epsilon^2)$ for both  approachability (Definition~\ref{def:blackwell-VMDPs}) and CMDP (Definition~\ref{def:constrianed-rl}) in the linear setting (Assumption~\ref{asm:linear}). 


\section{Vector-valued Markov games}
\label{sec:mg}

\subsection{Model and preliminaries}

Similar to Section~\ref{sec:prelim}, we consider an episodic \emph{vector-valued Markov game} (VMG) specified by a tuple $\cG = (\cS,\cA,\cB,H,\Pr,\br)$, where $\cA$ and $\cB$ are the action spaces for the min-player and max-player, respectively. The $d$-dimensional return function $\br$ and the transition probabilities $\Pr$, now depend on the current state and the action of both players. 

\paragraph{Interaction protocol.} In each episode, we start at a \emph{fixed} initial state $s_1$. Then, at each step $h \in [H]$, both players observe the current state $s_h$, take their own actions $a_h \in \cA$ and $b_h \in \cB$ simultaneously, observe stochastic sample of the return vector $\br_h(s_h,a_h,b_h)$ along with their opponent's action, and it causes the environment to transit to $s_{h+1} \sim \Pr_h(\cdot \mid s_h,a_h,b_h)$. We assume that stochastic samples of the return function are also in $\cB(1)$, almost surely.

\paragraph{Policy and value function.} A policy $\mu$ of the min-player is a collection of $H$ functions $\{\mu_h:\cS \rightarrow \Delta(\cA)\}_{h=1}^H$. Similarly, a policy $\nu$ of the max-player is a collection of $H$ functions $\{\nu_h:\cS \rightarrow \Delta(\cB)\}_{h=1}^H$. If the players are following $\mu$ and $\nu$, we have $a_h \sim \mu(\cdot | s)$ and $b_h \sim \nu(\cdot | s)$ at the $h^\text{th}$ step. We use $\bV^{\mu,\nu}_h:\cS \rightarrow \cB(H)$ and $\bQ^{\mu,\nu}_h:\cS \times \cA \times \cB \rightarrow \cB(H)$ to denote the value function and Q-value function at step $h$ under policies $\mu$ and $\nu$.

\paragraph{Scalarized markov game and Nash equilibrium.} For a VMG $\cG$ and $\btheta \in \cB(1)$, we define scalar-valued Markov game $\cG_\btheta=(\cS,\cA,H,\Pr,r_\btheta)$, where $r_\btheta = \{ \langle \btheta , \br_h \rangle : \cS \times \cA \times \cB \rightarrow [-1,1] \}_{h =1}^H$. We use $V^{\mu,\nu}_h(\cdot;\btheta)$ and $Q^{\mu,\nu}_h(\cdot,\cdot,\cdot;\btheta)$ to denote value function and Q-value function of $\cG_\btheta$, respectively. Note that we have $V^{\mu,\nu}_h(s;\btheta)=\langle \btheta , \bV^{\mu,\nu}_h(s)\rangle$ and $Q^{\mu,\nu}_h(s,a,b;\btheta)=\langle \btheta , \bQ^{\mu,\nu}_h(s,a,b) \rangle$. 

For any policy of the min-player $\mu$, there exists a \emph{best-response} policy $\nu_\dagger(\mu)$ of the max-player; i.e. $V^{\mu,\nu_\dagger(\mu)}_h(s;\btheta) = \max_{\nu} V^{\mu,\nu}_h(s;\btheta)$ for all $(s,h) \in \cS \times [H]$. We use $V^{\mu,\dagger}$ to denote $V^{\mu,\nu_\dagger(\mu)}$.  Similarly, we can define $\mu_\dagger(\nu)$ and $V^{\dagger,\nu}$. We further know \citep{filar2012competitive} that there exist policies $(\mu^\star,\nu^\star)$, known as \emph{Nash equilibrium}, satisfying the following equation for all $(s,h)\in \cS \times [H]$:
\begin{equation*}
	\min_{\mu}\max_{\nu}V^{\mu,\nu}_h(s;\btheta)=V^{\mu^\star,\dagger}_h(s;\btheta)=V^{\mu^\star,\nu^\star}_h(s;\btheta)= V^{\dagger,\nu^\star}_h(s;\btheta)=\max_{\nu}\min_{\mu} V^{\mu,\nu}(s;\btheta)
\end{equation*}
	In words, it means that no player can gain anything by changing her own policy. We abbreviate $V^{\mu^\star,\nu^\star}_h$ and $Q^{\mu^\star,\nu^\star}_h$ as $V^\star_h$ and $Q^\star_h$.

\subsubsection{Reward-free exploration (RFE) for VMGs}

Similar to Section~\ref{sec:reward-free}, we can define RFE algorithm for VMGs. Similarly, it consists of two phases. In the exploration phase the it explores the environment without guidance of return function. Later, in the planning phase, given any $\btheta \in \cB(1)$, it requires to output near optimal Nash equilibrium for $\cG_\btheta$.
%

\begin{definition}[RFE algorithm for VMGs]	
\label{def:reward-free-mg} 
For any $\epsilon,\delta >0$, after collecting $m_{\textsc{RFE}}(\epsilon,\delta)$ episodes during the exploration phase, with probability at least $1-\delta$, the algorithm for all $\btheta \in \cB(1)$, satisfies
\begin{equation*}
	 V^{\mu_\btheta,\dagger}_1(s_1;\btheta) - V^{\dagger,\nu_{\btheta}}_1(s_1;\btheta) = [V^{\mu_\btheta,\dagger}_1(s_1;\btheta)-V^\star_1(s_1;\btheta)]+[V^\star_1(s_1;\btheta)-V^{\dagger,\nu_{\btheta}}_1(s_1;\btheta)] \leq \epsilon
\end{equation*}
where $(\mu_\btheta,\nu_\btheta)$ is the output of the planning phase for vector $\btheta$ as input. The function $m_{\textsc{RFE}}$ determines the \emph{sample complexity} of the RFE algorithm. 
\end{definition}

\subsubsection{Blackwell approachability for VMGs}
We assume we are given a VMG $\cG$ and a target set $\cC$. The goal of the min-player is for the return vector to lie in the set $\cC$ while max-player wants the opposite. For the two-player vector-valued games it can be easily shown that the minimax theorem does no longer hold (see Section 2.1 of \cite{abernethy2011blackwell}). Namely, if for every policy of the max-player we have a response such that the return is in the set, we cannot hope to find a single policy for the min-player so that for every policy of the max-player the return vector lie in the set. However, approaching the set on average is possible.

\begin{definition}[Blackwell approachability] \label{def:blackwell-mg} We say the min-player is approaching the target $\cC$ with rate $\alpha(T)$, if for arbitrary sequence of max-player polices $\nu^1,\dots,\nu^T$, we have
\begin{equation*} 
	\textstyle{\dist(\frac{1}{T}\sum_{t=1}^T\bV^{\mu^t,\nu^t}_1(s_1),\cC) \leq \max_{\nu}\min_{\mu}\dist(\bV^{\mu,\nu}_1(s_1),\cC)+\alpha(T).}
\end{equation*}
\end{definition}

\subsection{Meta-algorithm for VMGs}
Similar to Section~\ref{sec:framework}, we introduce our main algorithmic framework for VMGs bridging reward-free algorithm and Blackwell approachability in VMGs. The pseudo-code is displayed in Algorithm~\ref{alg:meta-mg} and the theoretical guarantees are provided in Theorem~\ref{thm:meta-mg}. The proof can be found in Appendix~\ref{app:mg}.
\begin{algorithm}
    \caption{Meta-algorithm for VMGs}
    \label{alg:meta-mg}
    \begin{algorithmic}[1]
    	\STATE \textbf{Input:} Reward-Free Algorithm $\textsc{Rfe}$ for VMG (as in Definition~\ref{def:reward-free-mg}), Target Set $\cC$ 
        \STATE \textbf{Hyperparameters:} learning rate $\eta^t$
      	\STATE \textbf{Initialize:} run exploration phase of $\textsc{Rfe}$ for $K$ episodes
      	\STATE \textbf{Set:} $\btheta^1 \in \cB(1)$
        \FOR{$t=1,2,\dots,T$}
            \STATE Obtain near optimal Nash equilibrium for $\cG_{\btheta^t}$:
            \vspace{2mm}
            \begin{center}
            	$(\mu^t,\omega^t) \leftarrow$ output of planning phase of $\textsc{RFE}$ for the vector $\btheta^t$ as input 
            \end{center}
            \vspace{2mm}
            \STATE Play $\mu^t$ for one episode:
            \vspace{2mm}
            \begin{center}
            	Play $\mu^t$ against max-player playing arbitrary policy $\nu^t$ for one episode\\ and let $\hbv^t$ be the sum of vectorial returns          	
            \end{center}
             \vspace{2mm}
            \STATE Apply online gradient ascent update for utility function $u^t(\btheta)=\langle \btheta,\hbv^t \rangle - \max_{\bx \in \cC} \langle \btheta , \bx \rangle$:
            \vspace{2mm}
            \begin{center}	
            $\btheta^{t+1} \leftarrow \Gamma_{\cB(1)}[\btheta^t + \eta^t (\hbv^t-\argmax_{\bx \in \cC} \langle \btheta^t,\bx\rangle)]$
            \end{center}
            \vspace{2mm}
             where $\Gamma_{\cB(1)}$ is the projection into Euclidean unit ball
        \ENDFOR
    \end{algorithmic}
\end{algorithm}

\begin{theorem}
\label{thm:meta-mg}
	For any choice of RFE algorithm (Definition~\ref{def:reward-free-mg}) and for any $\epsilon \in (0,H]$ and $\delta \in (0,1]$, if we choose 
	$
		 K =  m_\textsc{RFE}(\epsilon/2,\delta/2)$ and $\eta^t = \sqrt{1/H^2t}
	$
	; then, with probability at least $1-\delta$, the min-player in Algorithm~\ref{alg:meta-mg}, satisfies Definition~\ref{def:blackwell-mg} with rate $\alpha(T)= \bigO(\epsilon/2 + \sqrt{H^2\iota/T})$ where $\iota=\log(d/\delta)$. Therefore to obtain $\epsilon$-optimality, the total sample complexity scales with $\bigO(m_\textsc{RFE}(\epsilon/2,\delta/2) +H^2\iota/\epsilon^2)$.
\end{theorem}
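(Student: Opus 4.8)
The plan is to run the Freund--Schapire no-regret-versus-best-response schema, exactly paralleling the proof of Theorem~\ref{thm:meta}, but with the single-agent planning oracle replaced by the Nash oracle of Definition~\ref{def:reward-free-mg}. Write $g_\cC(\btheta):=\max_{\bx\in\cC}\la\btheta,\bx\ra$ for the support function, and for each round $t$ define the population utility $\bar u^t(\btheta):=\la\btheta,\bV^{\mu^t,\nu^t}_1(s_1)\ra-g_\cC(\btheta)$ and its one-episode estimate $u^t(\btheta)=\la\btheta,\hbv^t\ra-g_\cC(\btheta)$ (the function actually fed to online gradient ascent). By Fenchel duality $\dist(\bx,\cC)=\max_{\btheta\in\cB(1)}[\la\btheta,\bx\ra-g_\cC(\btheta)]$, and since $g_\cC$ is independent of $t$, the target quantity factors as
\[
\dist\Big(\tfrac1T\textstyle\sum_{t=1}^T\bV^{\mu^t,\nu^t}_1(s_1),\cC\Big)=\max_{\btheta\in\cB(1)}\tfrac1T\textstyle\sum_{t=1}^T\bar u^t(\btheta).
\]
This reduces the claim to controlling the average population utility of the iterates $\btheta^t$.

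Let $\btheta^\star$ attain the outer maximum. I insert the empirical utilities and the online-gradient-ascent regret to obtain the chain
\[
\tfrac1T\textstyle\sum_t\bar u^t(\btheta^\star)\le \tfrac1T\sum_t u^t(\btheta^\star)+\xi_1\le \tfrac1T\sum_t u^t(\btheta^t)+\tfrac{\mathrm{Reg}_T}{T}+\xi_1\le \tfrac1T\sum_t\bar u^t(\btheta^t)+\tfrac{\mathrm{Reg}_T}{T}+\xi_1+\xi_2.
\]
Here the $u^t$ are concave in $\btheta$, so with $\eta^t=1/(H\sqrt t)$ and subgradients $\hbv^t-\argmax_{\bx\in\cC}\la\btheta^t,\bx\ra$ of norm $\bigO(H)$, the standard bound gives $\mathrm{Reg}_T/T=\bigO(H/\sqrt T)$. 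The error terms are $\xi_1=\la\btheta^\star,\tfrac1T\sum_t(\bV^{\mu^t,\nu^t}_1(s_1)-\hbv^t)\ra$ and $\xi_2=\tfrac1T\sum_t\la\btheta^t,\hbv^t-\bV^{\mu^t,\nu^t}_1(s_1)\ra$; conditioned on the past (which determines $\btheta^t,\mu^t,\nu^t$), each $\hbv^t$ is an unbiased estimate of $\bV^{\mu^t,\nu^t}_1(s_1)$ with deviation at most $2H$. Thus $\xi_2$ is a scalar bounded-difference martingale (Azuma), and $\xi_1$ is bounded by $\|\tfrac1T\sum_t(\bV^{\mu^t,\nu^t}_1(s_1)-\hbv^t)\|$ and controlled by a dimension-free vector Azuma inequality; both are $\bigO(H\sqrt{\iota/T})$ with probability $\ge 1-\delta/2$, where $\iota=\log(d/\delta)$.

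It remains to bound $\bar u^t(\btheta^t)$ by the benchmark, which is the crux. Since $\mu^t$ is the min-side of a near-Nash policy for $\cG_{\btheta^t}$ and its guarantee holds against the best response, Definition~\ref{def:reward-free-mg} with accuracy $\epsilon/2$ gives, for \emph{any} (even adversarial) $\nu^t$, that $V^{\mu^t,\nu^t}_1(s_1;\btheta^t)\le V^{\mu^t,\dagger}_1(s_1;\btheta^t)\le V^\star_1(s_1;\btheta^t)+\epsilon/2$. Because the scalarized game $\cG_{\btheta^t}$ is a standard zero-sum Markov game it has a value, so $V^\star_1(s_1;\btheta^t)=\min_\mu\max_\nu V^{\mu,\nu}_1(s_1;\btheta^t)=\max_\nu\min_\mu\la\btheta^t,\bV^{\mu,\nu}_1(s_1)\ra$; subtracting $g_\cC(\btheta^t)$ and applying $\la\btheta,\bx\ra-g_\cC(\btheta)\le\dist(\bx,\cC)$ inside the inner minimum yields $V^\star_1(s_1;\btheta^t)-g_\cC(\btheta^t)\le\max_\nu\min_\mu\dist(\bV^{\mu,\nu}_1(s_1),\cC)$, i.e.\ the benchmark. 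Hence $\bar u^t(\btheta^t)\le\max_\nu\min_\mu\dist(\bV^{\mu,\nu}_1(s_1),\cC)+\epsilon/2$, and chaining the two displays gives rate $\alpha(T)=\epsilon/2+\bigO(\sqrt{H^2\iota/T})$, with total failure probability $\delta$ split between the RFE oracle ($\delta/2$) and the concentration events ($\delta/2$). The main obstacle is exactly this last swap: the algorithm naturally produces a $\min_\mu\max_\nu$ value, while the benchmark is $\max_\nu\min_\mu\dist$, and the two can be reconciled only by passing through the scalarized game, for which the minimax theorem holds --- even though, as noted before Definition~\ref{def:blackwell-mg}, it fails for the vector-valued game itself. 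A secondary technical point is obtaining the $\iota=\log(d/\delta)$ dependence in $\xi_1$ via a dimension-free vector martingale inequality rather than a union bound over an $\epsilon$-net of $\cB(1)$, which would cost an extra $\sqrt d$.
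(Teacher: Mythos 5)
Your proposal is correct and follows essentially the same route as the paper's proof: Fenchel duality to rewrite the distance as a max over $\btheta\in\cB(1)$, the OGA regret bound, norm-subGaussian concentration for the empirical returns, the RFE guarantee to get $V^{\mu^t,\nu^t}_1(s_1;\btheta^t)\le V^\star_1(s_1;\btheta^t)+\epsilon/2$ against an arbitrary $\nu^t$, and a benchmark lemma showing $V^\star_1(s_1;\btheta)-\max_{\bx\in\cC}\la\btheta,\bx\ra\le\max_\nu\min_\mu\dist(\bV^{\mu,\nu}_1(s_1),\cC)$ (the paper's Lemma proves this via the Nash policy and projection onto $\cC$, while you pass through the scalarized minimax theorem and Fenchel's inequality inside the inner minimum --- the same substance). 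If anything, your treatment of the term $\frac{1}{T}\sum_t\la\btheta^t,\hbv^t-\bv^t\ra$ as a scalar bounded-difference martingale is slightly more careful than the paper's appeal to its concentration event plus Cauchy--Schwarz, since $\btheta^t$ varies with $t$.
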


%

\subsection{Tabular VMGs}
In this section, we consider tabular VMDPs; namely, we assume that $|\cS|\leq S$, $|\cA| \leq A$, and $|\cB| \leq B$. Similar to Section~\ref{sec:tabular}, by utilizing VI-Zero \citep{yu2021provably} we can have the following theoretical guarantees. The algorithm and the proof can be found in Appendix~\ref{app:mg}.

\begin{theorem}
\label{thm:tabular-mg}
There exists a reward-free algorithm for tabular VMGs and a right choice of hyperparameters that satisfies Definition~\ref{def:reward-free-mg} with sample complexity
$
	m_{\textsc{RFE}}(\epsilon,\delta)	 \leq \bigO(\min\{d,S\}H^4SAB\iota/\epsilon^2+H^3S^2AB\iota^2/\epsilon),
$
where $\iota=\log[dSABH/(\epsilon\delta)]$.
\end{theorem}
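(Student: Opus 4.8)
The plan is to mirror the proof of Theorem~\ref{thm:tabular} for the single-agent VMDP, replacing single-agent value iteration by minimax value iteration and treating the joint action space $\cA\times\cB$ as the object to be explored, which immediately accounts for the extra factor of $B$. Concretely, I would instantiate the game version of VI-Zero from \cite{liu2020sharp,yu2021provably}: during exploration we control both players (there is no adversary yet), so we run the uncertainty-driven greedy policy over joint actions $(a,b)$, maintaining an optimistic uncertainty function $\tQ_h(s,a,b)$ with a Hoeffding-type bonus $\beta_t\asymp\sqrt{H^2\iota/t}$ and reporting the empirical model of the episode with the smallest uncertainty value $\tV(s_1)$. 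The two deliverables are then an exploration guarantee controlling $\Delta:=\min_k\tV^k(s_1)$, and a planning guarantee translating small $\Delta$ into a small Nash gap for every scalarization, uniformly over $\btheta\in\cB(1)$.

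For the exploration guarantee I would adapt the VI-Zero analysis so that $\tV$ certifies accuracy against the $d$-dimensional returns rather than a scalar reward. The crucial quantity the bonus must dominate is the transition deviation applied to a vector-valued value function, $\sup_{\bV}\|(\hPr_h-\Pr_h)\bV\|$ with $\|\bV(\cdot)\|\le H$, and this admits two bounds: an $\ell_1$/total-variation estimate paying a $\sqrt{S}$ factor, and a covering of the scalarization direction $\btheta\in\cB(1)$ paying a $\sqrt{d}$ factor. Retaining the smaller is exactly what produces the $\min\{d,S\}$ in the final rate. A pigeonhole argument over the $K$ episodes then yields $\Delta\lesssim\sqrt{\min\{d,S\}H^4SAB\,\iota/K}+H^3S^2AB\,\iota^2/K$, where the leading term sums the per-step bonus across the $H$ steps and the $SAB$ joint cells, and the lower-order term comes from the transition-estimation error inside the bonus. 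Requiring $\Delta\lesssim\epsilon$ gives $K\gtrsim\min\{d,S\}H^4SAB\iota/\epsilon^2+H^3S^2AB\iota^2/\epsilon$, matching the claim; the Hoeffding (rather than Bernstein) bonus is precisely what costs the extra factor of $H$ relative to \cite{yu2021provably}, as already noted for the VMDP case.

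The planning phase is where the game-theoretic aspect enters. For a fixed $\btheta$, I would solve the empirical scalarized game $(\cS,\cA,\cB,H,\hProut,\langle\btheta,\hbr\rangle)$ to obtain an approximate Nash equilibrium $(\mu_\btheta,\nu_\btheta)$, and then bound the true Nash gap $V^{\mu_\btheta,\dagger}_1(s_1;\btheta)-V^{\dagger,\nu_\btheta}_1(s_1;\btheta)$ through a simulation lemma for Markov games, splitting it into a transition-model contribution and a return-estimation contribution. The transition contribution is reward-independent and is therefore controlled by $\Delta$ for every $\btheta$ simultaneously. The return contribution requires controlling $\sup_{\btheta\in\cB(1)}$ of the accumulated error $\langle\btheta,\hbr_h-\br_h\rangle$ along best-response trajectories, which is handled by the same $\epsilon_0$-net of $\cB(1)$ of cardinality $e^{\bigO(d)}$ together with the $\ell_1$-versus-covering dichotomy above, so it stays within the stated rate and the net's logarithmic cardinality is absorbed into $\iota=\log[dSABH/(\epsilon\delta)]$.

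The main obstacle I anticipate is the game-specific planning step: one must verify that the reward-free uncertainty certificate $\tV$, originally designed to bound single-agent suboptimality, simultaneously controls both one-sided deviations $V^{\mu_\btheta,\dagger}_1(s_1;\btheta)-V^\star_1(s_1;\btheta)$ and $V^\star_1(s_1;\btheta)-V^{\dagger,\nu_\btheta}_1(s_1;\btheta)$ that together make up the Nash gap of Definition~\ref{def:reward-free-mg}. This needs a two-sided simulation lemma in which the best-response operator (a maximization, not a fixed policy) interacts with the model error, plus care that the covering radius $\epsilon_0$ and the per-net-point failure probability are chosen so that the union bound over $e^{\bigO(d)}$ points only inflates the logarithmic factor $\iota$. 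Once this is arranged, the remaining estimates parallel the VMDP proof of Theorem~\ref{thm:tabular} essentially verbatim, with $A$ replaced by the joint count $AB$.
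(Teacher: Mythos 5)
Your proposal follows essentially the same route as the paper: the paper also runs VI-Zero over the joint action space $\cA\times\cB$ with the bonus $\beta_t \asymp \sqrt{\min\{d,S\}H^2\iota/t}+H^2S\iota/t$ (derived from exactly the $\sqrt{S}$-versus-$\sqrt{d}$ covering dichotomy you describe), proves the same pigeonhole bound $\sum_k \tV^k_1(s_1) \lesssim \sqrt{\min\{d,S\}H^4SABK\iota}+H^3S^2AB\iota^2$, and resolves the obstacle you correctly anticipate via a two-sided simulation lemma (its Lemma~\ref{lem:main-apptabular-mg}) showing that $\tQ^k_h$ simultaneously dominates $|\hQ^k_h-Q^{\hmu^k_\btheta,\dagger}_h|$ and $|\hQ^k_h-Q^{\dagger,\hnu^k_\btheta}_h|$, so that the Nash gap is bounded by $2\alpha_1\tV^{\mathrm{out}}_1(s_1)$. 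The only cosmetic difference is that the paper pivots the Nash-gap decomposition around the empirical Nash value $\hV^{\mathrm{out}}_1$ rather than $V^\star_1$, which changes nothing substantive.
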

The theorem provides a new sample complexity result of $\tilde{\bigO}(\min\{d,S\}H^4SAB\iota/\epsilon^2)$ for reward-free exploration in VMGs (Definition~\ref{def:reward-free-mg}). It immediately follows from Theorem~\ref{thm:tabular-mg} and Theorem~\ref{thm:meta-mg} that we can achieve total sample complexity of $\tilde{\bigO}(\min\{d,S\}H^4SAB\iota/\epsilon^2)$ for Blackwell approachability in VMGs (Definition~\ref{def:blackwell-mg}). Our rate for $\alpha(T)$ scales with $\tilde{\bigO}(\sqrt{\mathrm{poly}(H)/T})$ while the results in \cite{yu2021provably} has the rate of $\alpha(T)$ scaling with $\tilde{\bigO}(\sqrt{\mathrm{poly}(H)\min\{d,S\}SA/T})$. However, we require initial phase of self-play for $K=\bigO(m_\textsc{RFE})$ episodes which is not needed by \cite{yu2021provably}.




\section{Conclusion}
\label{sec:conclusion}


This paper provides a meta algorithm that takes a reward-free RL solver, and convert it to an algorithm for solving constrained RL problems. Our framework enables the direct translation of any progress in reward-free RL to constrained RL setting. Utilizing existing reward-free solvers, our framework provides sharp sample complexity results for constrained RL in tabular setting (matching best existing results up to factor of horizon dependence), new results for the linear function approximation setting. Our framework further extends to tabular two-player vector-valued Markov games for solving Blackwell approachability problem. 


\bibliographystyle{apalike}
\bibliography{ref.bib}

\begin{thebibliography}{}

\bibitem[Abernethy et~al., 2011]{abernethy2011blackwell}
Abernethy, J., Bartlett, P.~L., and Hazan, E. (2011).
\newblock Blackwell approachability and no-regret learning are equivalent.
\newblock In {\em Proceedings of the 24th Annual Conference on Learning
  Theory}, pages 27--46. JMLR Workshop and Conference Proceedings.

\bibitem[Altman, 1999]{altman1999constrained}
Altman, E. (1999).
\newblock {\em Constrained Markov decision processes}, volume~7.
\newblock CRC Press.

\bibitem[Azar et~al., 2017]{azar2017minimax}
Azar, M.~G., Osband, I., and Munos, R. (2017).
\newblock Minimax regret bounds for reinforcement learning.
\newblock In {\em International Conference on Machine Learning}, pages
  263--272. PMLR.

\bibitem[Babaioff et~al., 2015]{DynPricing-ec12}
Babaioff, M., Dughmi, S., Kleinberg, R.~D., and Slivkins, A. (2015).
\newblock Dynamic pricing with limited supply.
\newblock {\em TEAC}, 3(1):4.
\newblock Special issue for \emph{13th ACM EC}, 2012.

\bibitem[Besbes and Zeevi, 2009]{BZ09}
Besbes, O. and Zeevi, A. (2009).
\newblock Dynamic pricing without knowing the demand function: Risk bounds and
  near-optimal algorithms.
\newblock {\em Operations Research}, 57(6):1407--1420.

\bibitem[Blackwell et~al., 1956]{blackwell1956analog}
Blackwell, D. et~al. (1956).
\newblock An analog of the minimax theorem for vector payoffs.
\newblock {\em Pacific Journal of Mathematics}, 6(1):1--8.

\bibitem[Brantley et~al., 2020]{miryoosefi2020}
Brantley, K., Dudik, M., Lykouris, T., Miryoosefi, S., Simchowitz, M.,
  Slivkins, A., and Sun, W. (2020).
\newblock Constrained episodic reinforcement learning in concave-convex and
  knapsack settings.
\newblock In {\em Advances in Neural Information Processing Systems},
  volume~33, pages 16315--16326. Curran Associates, Inc.

\bibitem[Ding et~al., 2021]{ding2021provably}
Ding, D., Wei, X., Yang, Z., Wang, Z., and Jovanovic, M. (2021).
\newblock Provably efficient safe exploration via primal-dual policy
  optimization.
\newblock In Banerjee, A. and Fukumizu, K., editors, {\em Proceedings of The
  24th International Conference on Artificial Intelligence and Statistics},
  volume 130 of {\em Proceedings of Machine Learning Research}, pages
  3304--3312. PMLR.

\bibitem[Efroni et~al., 2020]{efroni2020exploration}
Efroni, Y., Mannor, S., and Pirotta, M. (2020).
\newblock Exploration-exploitation in constrained mdps.
\newblock {\em arXiv preprint arXiv:2003.02189}.

\bibitem[Filar and Vrieze, 2012]{filar2012competitive}
Filar, J. and Vrieze, K. (2012).
\newblock {\em Competitive Markov decision processes}.
\newblock Springer Science \& Business Media.

\bibitem[Freund and Schapire, 1999]{freund1999adaptive}
Freund, Y. and Schapire, R.~E. (1999).
\newblock Adaptive game playing using multiplicative weights.
\newblock {\em Games and Economic Behavior}, 29(1-2):79--103.

\bibitem[Jabbari et~al., 2017]{jabbari2017fairness}
Jabbari, S., Joseph, M., Kearns, M., Morgenstern, J., and Roth, A. (2017).
\newblock Fairness in reinforcement learning.
\newblock In {\em International Conference on Machine Learning}, pages
  1617--1626. PMLR.

\bibitem[Jin et~al., 2020a]{jin2020reward}
Jin, C., Krishnamurthy, A., Simchowitz, M., and Yu, T. (2020a).
\newblock Reward-free exploration for reinforcement learning.
\newblock In {\em International Conference on Machine Learning}, pages
  4870--4879. PMLR.

\bibitem[Jin et~al., 2019]{jin2019short}
Jin, C., Netrapalli, P., Ge, R., Kakade, S.~M., and Jordan, M.~I. (2019).
\newblock A short note on concentration inequalities for random vectors with
  subgaussian norm.
\newblock {\em arXiv preprint arXiv:1902.03736}.

\bibitem[Jin et~al., 2020b]{jin2020provably}
Jin, C., Yang, Z., Wang, Z., and Jordan, M.~I. (2020b).
\newblock Provably efficient reinforcement learning with linear function
  approximation.
\newblock In {\em Conference on Learning Theory}, pages 2137--2143. PMLR.

\bibitem[Le et~al., 2019]{Hoang2019BPLUC}
Le, H.~M., Voloshin, C., and Yue, Y. (2019).
\newblock Batch policy learning under constraints.
\newblock {\em CoRR}, abs/1903.08738.

\bibitem[Liu et~al., 2020]{liu2020sharp}
Liu, Q., Yu, T., Bai, Y., and Jin, C. (2020).
\newblock A sharp analysis of model-based reinforcement learning with
  self-play.
\newblock {\em arXiv preprint arXiv:2010.01604}.

\bibitem[Mao et~al., 2016]{Mao2016RLSystems}
Mao, H., Alizadeh, M., Menache, I., and Kandula, S. (2016).
\newblock Resource management with deep reinforcement learning.
\newblock In {\em Proceedings of the 15th ACM Workshop on Hot Topics in
  Networks}, page 50–56, New York, NY, USA. Association for Computing
  Machinery.

\bibitem[Miryoosefi et~al., 2019]{miryoosefi2019}
Miryoosefi, S., Brantley, K., Daume~III, H., Dudik, M., and Schapire, R.~E.
  (2019).
\newblock Reinforcement learning with convex constraints.
\newblock In {\em Advances in Neural Information Processing Systems},
  volume~32, pages 14093--14102. Curran Associates, Inc.

\bibitem[Neumann, 1928]{neumann1928theorie}
Neumann, J.~v. (1928).
\newblock Zur theorie der gesellschaftsspiele.
\newblock {\em Mathematische annalen}, 100(1):295--320.

\bibitem[Puterman, 2014]{puterman2014markov}
Puterman, M.~L. (2014).
\newblock {\em Markov decision processes: discrete stochastic dynamic
  programming}.
\newblock John Wiley \& Sons.

\bibitem[Qiu et~al., 2020]{qiu2020upper}
Qiu, S., Wei, X., Yang, Z., Ye, J., and Wang, Z. (2020).
\newblock Upper confidence primal-dual optimization: Stochastically constrained
  markov decision processes with adversarial losses and unknown transitions.
\newblock {\em arXiv preprint arXiv:2003.00660}.

\bibitem[Rockafellar, 2015]{rockafellar2015convex}
Rockafellar, R.~T. (2015).
\newblock {\em Convex analysis}.
\newblock Princeton university press.

\bibitem[Singh et~al., 2020]{singh2020learning}
Singh, R., Gupta, A., and Shroff, N.~B. (2020).
\newblock Learning in markov decision processes under constraints.
\newblock {\em arXiv preprint arXiv:2002.12435}.

\bibitem[Sun et~al., 2019]{sun2019provably}
Sun, W., Vemula, A., Boots, B., and Bagnell, J.~A. (2019).
\newblock Provably efficient imitation learning from observation alone.
\newblock {\em arXiv preprint arXiv:1905.10948}.

\bibitem[Syed and Schapire, 2007]{SyedSchapire2008}
Syed, U. and Schapire, R.~E. (2007).
\newblock A game-theoretic approach to apprenticeship learning.
\newblock In {\em Proceedings of the 20th International Conference on Neural
  Information Processing Systems}, NIPS’07, page 1449–1456, Red Hook, NY,
  USA. Curran Associates Inc.

\bibitem[Tessler et~al., 2019]{tessler2018reward}
Tessler, C., Mankowitz, D.~J., and Mannor, S. (2019).
\newblock Reward constrained policy optimization.
\newblock In {\em International Conference on Learning Representations}.

\bibitem[Vinyals et~al., 2019]{vinyals2019grandmaster}
Vinyals, O., Babuschkin, I., Czarnecki, W.~M., Mathieu, M., Dudzik, A., Chung,
  J., Choi, D.~H., Powell, R., Ewalds, T., Georgiev, P., et~al. (2019).
\newblock Grandmaster level in starcraft ii using multi-agent reinforcement
  learning.
\newblock {\em Nature}, 575(7782):350--354.

\bibitem[Wang et~al., 2020a]{wang2020reward}
Wang, R., Du, S.~S., Yang, L.~F., and Salakhutdinov, R. (2020a).
\newblock On reward-free reinforcement learning with linear function
  approximation.
\newblock {\em arXiv preprint arXiv:2006.11274}.

\bibitem[Wang et~al., 2020b]{wang2020provably}
Wang, R., Salakhutdinov, R., and Yang, L.~F. (2020b).
\newblock Provably efficient reinforcement learning with general value function
  approximation.
\newblock {\em arXiv preprint arXiv:2005.10804}.

\bibitem[Wu et~al., 2020]{wu2020accommodating}
Wu, J., Braverman, V., and Yang, L.~F. (2020).
\newblock Accommodating picky customers: Regret bound and exploration
  complexity for multi-objective reinforcement learning.
\newblock {\em arXiv preprint arXiv:2011.13034}.

\bibitem[Yu et~al., 2021]{yu2021provably}
Yu, T., Tian, Y., Zhang, J., and Sra, S. (2021).
\newblock Provably efficient algorithms for multi-objective competitive rl.
\newblock {\em arXiv preprint arXiv:2102.03192}.

\bibitem[Zanette et~al., 2020]{zanette2020provably}
Zanette, A., Lazaric, A., Kochenderfer, M.~J., and Brunskill, E. (2020).
\newblock Provably efficient reward-agnostic navigation with linear value
  iteration.
\newblock {\em arXiv preprint arXiv:2008.07737}.

\bibitem[Zhang et~al., 2020]{zhang2020task}
Zhang, X., Singla, A., et~al. (2020).
\newblock Task-agnostic exploration in reinforcement learning.
\newblock {\em arXiv preprint arXiv:2006.09497}.

\bibitem[Ziebart et~al., 2008]{ziebart2008maximum}
Ziebart, B.~D., Maas, A.~L., Bagnell, J.~A., and Dey, A.~K. (2008).
\newblock Maximum entropy inverse reinforcement learning.
\newblock In {\em Aaai}, volume~8, pages 1433--1438. Chicago, IL, USA.

\bibitem[Zinkevich, 2003]{zinkevich2003online}
Zinkevich, M. (2003).
\newblock Online convex programming and generalized infinitesimal gradient
  ascent.
\newblock In {\em Proceedings of the 20th international conference on machine
  learning (icml-03)}, pages 928--936.

\end{thebibliography}


\appendix
\newpage
\section{Proof for Section~\ref{sec:prelim}}
\label{app:prelim}
In this section we provide proofs and missing details for Section~\ref{sec:prelim}.
\subsection{Proof of Theorem~\ref{thm:blackwell-to-constrained}}
\label{app:blackwell-to-constrained}
Consider the following algorithm which is performing an approximate version of binary search on the optimal cost. We use $\oplus$ to denote vector concatenation.

\begin{algorithm}
    \caption{Solving Constrained RL Using Approachability}
    \begin{algorithmic}[1]
    \label{alg:blackwell-to-constrained}
    	\STATE \textbf{Input:} approachability algorithm $\mathrm{APP}$
        \STATE \textbf{Hyperparameters:} $\epsilon' > 0$
      	\STATE \textbf{Initialize:} $L \leftarrow 0$ and $R \leftarrow H$ 
      	\STATE Define the augmented VMDP model
      	\begin{center}
      		$\overline{\br}_h(s,a)=\br_h(s,a) \oplus c_h(s,a) \quad \forall h \in [H]$\\
      		$\overline{\cM}=\{\cS,\cA,H,\Pr,\overline{\br}\}$
      	\end{center} 
      	\FOR{iteration $t=1,2,\dots,T$}
      	\STATE Set $\mathrm{mid} = (R+L)/2$
      	\STATE Define the target set for approachability
      	\begin{center}
      		$\overline{\cC}^t=\{ \bx \oplus y \mid \bx \in \cC, y \leq \mathrm{mid}\}$\\
      	\end{center}
      	\STATE $\pi^t \leftarrow$  output of $\mathrm{APP}$ algorithm for the model $\overline{\cM}$ with target set $\overline{\cC}^t$ using $K_{\mathrm{APP}}$ episodes.
      	\STATE $\overline{\bv}^t \leftarrow $ estimate $\overline{\bV}^{\pi^t}_1(s_1)$ using $K_{\mathrm{est}}$ episodes, where $\overline{\bV}$ is the value function for $\overline{\cM}$.
      	\IF{$\dist(\overline{\bv}^t,\cC) \leq \epsilon'$}
      		\STATE $R \leftarrow \mathrm{mid}$
      	\ELSE
      		\STATE $L \leftarrow \mathrm{mid}$
      	\ENDIF
      	\ENDFOR
     	\STATE \textbf{Return} $\pi^T$
    \end{algorithmic}

\end{algorithm}

\begin{theorem}
\label{thm:blackwell-to-constrained-alg}
For any choice of approachability algorithm (as in Definiton~\ref{def:blackwell-VMDPs}) and for any $\epsilon,\delta >0$, if we choose
\begin{equation*}
	T = \bigO\big[\log(H/\epsilon)\big], \quad K_{\mathrm{APP}}=m_{\mathrm{APP}}(\epsilon,\epsilon\delta/(2H)), \quad K_{\mathrm{est}}=\bigO \big[\frac{H^2\log(dH/\epsilon\delta)}{\epsilon^2}\big], \quad \epsilon' = \bigO(\epsilon)
\end{equation*} 
then, with probability at least $1-\delta$, Algorithm~\ref{alg:blackwell-to-constrained} satisfies
\begin{equation*}
\begin{cases}	
	C^{\pi^{T}}_1(s_1)- {\displaystyle \min_{\pi: \bV^\pi_1(s_1) \in \cC}}C^{\pi}_1(s_1) \leq \bigO(\epsilon), \\
	\dist\big(\bV^{\pi^{T}}_1(s_1),\cC\big) \leq \bigO(\epsilon).\\
\end{cases}
\end{equation*}
\end{theorem}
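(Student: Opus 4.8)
The plan is to read Algorithm~\ref{alg:blackwell-to-constrained} as an approximate binary search on the optimal cost $C^\star := \min_{\pi:\bV^\pi_1(s_1)\in\cC} C^\pi_1(s_1)$, certifying each candidate budget by a single call to the approachability oracle on the augmented model $\overline{\cM}$. Write $\overline{\cC}(\tau) := \{\bx\oplus y : \bx\in\cC,\ y\le\tau\}$ for the augmented target at budget $\tau$ (so $\overline{\cC}^t = \overline{\cC}(\mathrm{mid})$), and define the approachability value
\[
	g(\tau) := \min_{\pi}\, \dist\big(\overline{\bV}^\pi_1(s_1),\,\overline{\cC}(\tau)\big).
\]
Because the augmented target is a product of $\cC$ with a half-line, the squared distance splits coordinate-wise as $\dist(\overline{\bV}^\pi_1(s_1),\overline{\cC}(\tau))^2 = \dist(\bV^\pi_1(s_1),\cC)^2 + \max\{0,\ C^\pi_1(s_1)-\tau\}^2$. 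I would extract three consequences of this identity: $g$ is non-increasing in $\tau$; $g(\tau)=0$ for every $\tau\ge C^\star$ (witnessed by the constrained optimum, whose augmented value lies in $\overline{\cC}(\tau)$), so by contraposition $g(\tau)>0$ forces $\tau<C^\star$; and $g$ is $1$-Lipschitz (changing the budget by $|\tau-\tau'|$ moves $\overline{\cC}(\cdot)$ by at most $|\tau-\tau'|$ in the cost coordinate).

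Next I would convert the oracle test into a test on $g$. The approachability guarantee (Definition~\ref{def:blackwell-VMDPs}) sandwiches the output distance, $g(\mathrm{mid}) \le \dist(\overline{\bV}^{\pi^t}_1(s_1),\overline{\cC}^t) \le g(\mathrm{mid})+\epsilon$, and a Hoeffding/Azuma bound over the $K_{\mathrm{est}}$ estimation episodes gives $\|\overline{\bv}^t - \overline{\bV}^{\pi^t}_1(s_1)\|\le\epsilon$, hence $|\dist(\overline{\bv}^t,\overline{\cC}^t)-\dist(\overline{\bV}^{\pi^t}_1(s_1),\overline{\cC}^t)|\le\epsilon$. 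Combining, the measured distance obeys $g(\mathrm{mid})-\epsilon \le \dist(\overline{\bv}^t,\overline{\cC}^t)\le g(\mathrm{mid})+2\epsilon$. Therefore, choosing $\epsilon'$ a fixed multiple of $\epsilon$ with $\epsilon'>2\epsilon$: a \emph{pass} ($\le\epsilon'$) certifies $g(\mathrm{mid})\le\epsilon'+\epsilon$, while a \emph{fail} certifies $g(\mathrm{mid})>\epsilon'-2\epsilon>0$, which by the properties above forces $\mathrm{mid}<C^\star$.

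I would then track two loop invariants: every update of $R$ leaves $g(R)\le\epsilon'+\epsilon$ (true also initially, since $g(H)=0$ as $C^\star\le H$), and every update of $L$ leaves $L<C^\star$ (true initially). After $T=\bigO(\log(H/\epsilon))$ bisections the width satisfies $R_T-L_T\le H/2^{T}\le\bigO(\epsilon)$. For the returned $\pi^T$ with $\mathrm{mid}_T=(L_T+R_T)/2$, monotonicity and $1$-Lipschitzness give $g(\mathrm{mid}_T)\le g(R_T)+(R_T-\mathrm{mid}_T)\le\bigO(\epsilon)$, while $\mathrm{mid}_T\le R_T\le L_T+\bigO(\epsilon)\le C^\star+\bigO(\epsilon)$. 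Feeding these back into the coordinate-wise decomposition together with $\dist(\overline{\bV}^{\pi^T}_1(s_1),\overline{\cC}^T)\le g(\mathrm{mid}_T)+\epsilon$ yields $\dist(\bV^{\pi^T}_1(s_1),\cC)\le g(\mathrm{mid}_T)+\epsilon\le\bigO(\epsilon)$ and $C^{\pi^T}_1(s_1)\le \mathrm{mid}_T+g(\mathrm{mid}_T)+\epsilon\le C^\star+\bigO(\epsilon)$, exactly the two claimed bounds.

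The main obstacle is that the search localizes not $C^\star$ itself but the threshold of \emph{approximate} feasibility (the test reads only the relaxed distance $\epsilon'$), so $\pi^T$ may originate from a ``fail'' iteration and need not have passed its own test; the crux is to show, purely from monotonicity and $1$-Lipschitzness of $g$, that $g(\mathrm{mid}_T)=\bigO(\epsilon)$ in every case while the fail-rule keeps $L_T<C^\star$, so that near-feasibility and near-optimality hold at once. The remaining accounting is routine: union-bounding the $T$ approachability calls and $T$ estimation steps, each at failure probability $\bigO(\epsilon\delta/H)$, keeps the total failure below $\delta$; $K_{\mathrm{est}}=\bigO(H^2\log(dH/\epsilon\delta)/\epsilon^2)$ secures the $\epsilon$ estimation accuracy used above; and the total sample cost $T\,(K_{\mathrm{APP}}+K_{\mathrm{est}})$ with $K_{\mathrm{APP}}=m_{\mathrm{APP}}(\epsilon,\epsilon\delta/(2H))$ matches the stated complexity.
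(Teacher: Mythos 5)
Your proof is correct, and it diverges from the paper's argument at exactly the step that matters. The paper runs a \emph{two-sided} localization of $c^\star=\min_{\pi:\bV^\pi_1(s_1)\in\cC}C^\pi_1(s_1)$: it maintains the invariants $L^t\le c^\star$ and $R^t\ge c^\star-3\epsilon$, concludes $\mathrm{mid}^T\in[c^\star-4\epsilon,\,c^\star+\epsilon]$, and then bounds the final augmented distance by comparing against the exact optimizer $\pi^\star$ through $\min_\pi\dist(\overline{\bV}^\pi_1(s_1),\overline{\cC}^T)\le\max\{c^\star-\mathrm{mid}^T,0\}\le 4\epsilon$. Its $R$-invariant rests on the claim (``Case $\mathrm{mid}\le c^\star-3\epsilon$'') that a budget $3\epsilon$ below $c^\star$ forces $\min_\pi\dist(\overline{\bV}^\pi_1(s_1),\overline{\cC}^t)\ge 3\epsilon$, i.e.\ that the test must fail there. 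You never prove, or need, any such lower bound on the fail side: your invariant on $R$ is the feasibility certificate $g(R)\le\epsilon'+\epsilon$ itself, maintained directly by the pass rule, transferred to $\mathrm{mid}_T$ by $1$-Lipschitzness of $g$, while only the one-sided invariant $L<c^\star$ feeds the cost bound.

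This difference is not cosmetic: the paper's Case-2 claim is false in general, and your argument is what repairs the proof. That claim implicitly requires that any policy within distance $3\epsilon$ of $\cC$ has cost not much below $c^\star$ --- a Slater-type regularity that is nowhere assumed. Concretely, take $H=1$, $d=1$, $\cC=\{0\}$, and two actions with (return, cost) pairs $(0,1)$ and $(\epsilon^2,0)$: then $c^\star=1$, yet mixtures achieve constraint distance at most $\epsilon^2$ at every cost level in $[0,1]$, so $g(\mathrm{mid})\le\epsilon^2$ for all $\mathrm{mid}\ge 0$, the test can pass far below $c^\star-3\epsilon$, and the invariant $R^t\ge c^\star-3\epsilon$ (hence the paper's final chain, which needs $\mathrm{mid}^T\ge c^\star-4\epsilon$) breaks. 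The theorem nonetheless holds, exactly by your reasoning: near-feasibility of $\pi^T$ follows from $g(\mathrm{mid}_T)\le g(R_T)+(R_T-\mathrm{mid}_T)=\bigO(\epsilon)$ regardless of how far $\mathrm{mid}_T$ drifts below $c^\star$, and near-optimality only needs $\mathrm{mid}_T\le R_T\le L_T+\bigO(\epsilon)<c^\star+\bigO(\epsilon)$, since the guarantee permits $C^{\pi^T}_1(s_1)$ to undercut $c^\star$. Two minor caveats you share with the paper: the initialization $L=0$, $R=H$ tacitly assumes $0\le c^\star\le H$ (the paper's costs may be negative), and feasibility of the constrained problem is assumed so that $g(H)=0$.
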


\begin{proof}[Proof of Theorem~\ref{thm:blackwell-to-constrained-alg}]
By definition~\ref{def:blackwell-VMDPs}, Lemma~\ref{lem:concentration-l2vector}, and union bound; with probability at least $1-\delta$, we have for all $t \in [T]$
\begin{equation}
\label{eq:blackwell-to-constrained-alg-1}
\begin{aligned}
	&\norm{\overline{\bv}^t-\overline{\bV}^{\pi^t}_1(s_1)}\leq \epsilon,\\
	&\dist(\overline{\bV}^{\pi^{t}}_1(s_1),\cC) \leq \min_{\pi}\dist(\overline{\bV}^\pi_1(s_1),\cC)+\epsilon.
\end{aligned}
\end{equation}
We use $L^t$, $R^t$, and $\mathrm{mid}^t$ to denote values of $L$, $R$, and $\mathrm{mid}$ during $t^\text{th}$ iteration.
By choice of $T$ we  have 
\begin{equation}
\label{eq:blackwell-to-constrained-alg-2}
	R^T-L^T \leq \epsilon.
\end{equation}
Define $c^* = \min_{\pi: \bV^\pi_1(s_1) \in \cC}C^{\pi}_1(s_1)$ and let $\pi^* = \argmin_{\pi: \bV^\pi_1(s_1) \in \cC}C^{\pi}_1(s_1)$.
Let's consider these cases
\begin{itemize}
	\item Case $\mathrm{mid \geq c^*}$: It's easy to see that $ \min_{\pi}\dist(\overline{\bV}^\pi_1(s_1),\cC) = 0$, therefore by second inequality in Equation~\ref{eq:blackwell-to-constrained-alg-1} we have 
	\begin{equation*}
		\dist(\overline{\bV}^{\pi^{t}}_1(s_1),\cC) \leq \epsilon.
	\end{equation*}
	Since distance function is $1$-Lipschitz with respect to Euclidean norm, by first inequality in Equation~\ref{eq:blackwell-to-constrained-alg-1}, we have 
	\begin{equation*}
		\dist(\overline{\bv}^t,\cC) \leq \epsilon + \epsilon = 2\epsilon
	\end{equation*}
	\item Case $\mathrm{mid} \leq c^*-3\epsilon$:  It's easy to see that $ \min_{\pi}\dist(\overline{\bV}^\pi_1(s_1),\cC) \geq 3\epsilon$, therefore by definition of minimum we have 
	\begin{equation*}
		\dist(\overline{\bV}^{\pi^{t}}_1(s_1),\cC) \geq 3\epsilon.
	\end{equation*}	
	Since distance function is $1$-Lipschitz with respect to Euclidean norm, by first inequality in Equation~\ref{eq:blackwell-to-constrained-alg-1}, we have 
	\begin{equation*}
		\dist(\overline{\bv}^t,\cC) \geq 3\epsilon - \epsilon = 2\epsilon.
	\end{equation*}
\end{itemize}
What we showed above implies that if we set $\epsilon' = 2\epsilon$, in all iterations $t \in [T]$ we have
\begin{equation*}
	L^t \leq c^*, \quad  R^t \geq c^*-3\epsilon.
\end{equation*}
Combining with Equation~\ref{eq:blackwell-to-constrained-alg-2}, we get 
\begin{equation*}
	c^*-4\epsilon \leq L^T \leq \mathrm{mid}^T \leq R^T \leq c^*+\epsilon 
\end{equation*}
Therefore we have, 
\begin{equation*}
\begin{aligned}
	&\max\{C^{\pi^{T}}_1(s_1)- \mathrm{mid}^T, \dist\big(\bV^{\pi^{T}}_1(s_1),\cC\big)\}  \\
	&\leq \dist(\overline{\bV}^{\pi^{T}}_1(s_1),\cC)\\
	&\leq \min_{\pi}\dist(\overline{\bV}^\pi_1(s_1),\cC) + \epsilon\\
	&\leq \dist(\overline{\bV}^{\pi^*}_1(s_1),\cC)+ \epsilon\\
	&\leq \max\{c^*-\mathrm{mid}^T,0\}+\epsilon \\
	&\leq c^*-(c^*-4\epsilon)+\epsilon = 5 \epsilon
\end{aligned}
\end{equation*}
It implies
\begin{equation*}
	\begin{cases}
		\dist\big(\bV^{\pi^{T}}_1(s_1),\cC\big) \leq 5\epsilon\\
		C^{\pi^{T}}_1(s_1) \leq 5\epsilon + \mathrm{mid}^T \leq c^* + 6\epsilon\\
	\end{cases}
\end{equation*}
Rescaling $\epsilon$ to $\epsilon/6$ completes the proof.
\end{proof}

\begin{proof}[Proof of Theorem~\ref{thm:blackwell-to-constrained}]
	Using Theorem~\ref{thm:blackwell-to-constrained-alg} the claim follows immediately: total sample complexity of Algorithm~\ref{alg:blackwell-to-constrained} is 
	\begin{equation*}
		T(K_{\mathrm{APP}}+K_{\mathrm{est}}) \leq \log(1/\epsilon) \cdot \bigO\Big( m_{\textsc{APP}}(\epsilon,\epsilon\delta/H) + \frac{H^2\log[d/\epsilon\delta]}{\epsilon^2}\Big). 
	\end{equation*}
\end{proof}

\section{Proof for Section~\ref{sec:framework}}
\label{app:framework}
In this section we provide proofs and missing details for Section~\ref{sec:framework}.
\subsection{Fenchel duality}
Consider a convex and closed function $f: \mathrm{dom}(f) \rightarrow \mathbb{R}$. We define the dual function $f^*$, called \emph{Fenchel conjugate}, as 
\begin{equation*}
	f^*(\btheta) = \max_{\bx \in \mathrm{dom}(f)} \Big[ \langle \btheta, \bx  \rangle - f(\bx)\Big]. 
\end{equation*}  
If function $f$ is $1$-Lipschitz and $\mathrm{dom}(f)=\cB(H)$; then, the conjugate function $f^*$ is $H$-Lipschitz with $\mathrm{dom}(f^*)=\cB(1)$ (Corollary 13.3.3 in \cite{rockafellar2015convex}). Therefore, Fenchel daulity implies
\begin{equation*}
	f(\bx)=\max_{\btheta \in \cB(1)} \Big[\langle \btheta , \bx \rangle - f^*(\btheta)\Big].
\end{equation*}
In particular, for closed, convex, and $1$-Lipschitz function $f$ defined as $$\begin{cases} f:\cB(H)\rightarrow \mathbb{R} \\
	f(\bx)=\dist(\bx,\cC)\\ \end{cases}$$ we have
\begin{equation*}
	f^*(\btheta) = \max_{\bx \in \cC} \langle \btheta,\bx \rangle.
\end{equation*}
It's easy to verify that $\partial f^*(\btheta) = \argmax_{\bx \in \cC} \langle \btheta, \bx \rangle$ is a subgradient of $f^*$ at $\btheta$. Fenchel duality implies that
\begin{equation}
\label{eq:dist-fenchel}
	\dist(x,\cC)= \max_{\btheta \in \cB(1)} \Big [ \langle \btheta, \bx \rangle - \max_{\bx' \in \cC} \langle \btheta, \bx' \rangle \Big ].
\end{equation}


\subsection{Online Convex Optimization (OCO)}

We will be using the guarantee of online gradient ascent algorithm \citep{zinkevich2003online} in the proof. Therefore, we briefly review the framework of online convex optimization. We can imagine an online game between the leaner and the environment: The learner is given a decision set $\cK$; at time $t=1,2,\dots,T$, the leaner makes a decision $\btheta^t \in \cK$, the environment reveals a concave utility function $u^t: \cK \rightarrow \mathbb{R}$, and the learner gains utility $u^t(\btheta^t)$. The learner's goal is to minimize \emph{regret} defined as 
\begin{equation*}
	\mathrm{Regret}_T \triangleq \max_{\btheta \in \cK} \big[ \sum_{t=1}^T u^t(\btheta)\big] - \big[\sum_{t=1}^T u^t(\btheta^t)\big].
\end{equation*}
An OCO algorithm is \emph{no-regret} if $\mathrm{Regret}_T=o(T)$, meaning its average utility approaches to best in hindsight. The \emph{online gradient ascent} (OGA) is an example of such algorithm (Algorithm~\ref{alg:oga}). In Theorem~\ref{thm:oga} we formally state the theoretical guarantee of this algorithm.

\begin{algorithm}[h]
	\caption{Online gradient ascent (OGA)}
	\label{alg:oga}
	\begin{algorithmic}[1]
 	 \STATE \textbf{input}: projection operator $\Gamma_{\cK}$ where $\Gamma_{\cK}(\btheta)=\mathrm{argmin}_{\btheta \in \cK}\norm {\btheta - \btheta'}$
 	 \STATE \textbf{init}: $\btheta^1$ arbitrarily
 	 \STATE \textbf{parameters}: step size $\eta_t$
 	 \FOR {$t=1$ \TO $T$}
 	 	\STATE observe concave utility function $u^t:\cK \rightarrow \mathbb{R}$
 	 	\STATE ${\mathbf \btheta}^{t+1}= \Gamma_{\cK}\big(\btheta^t + \eta_t \partial u^t(\btheta^t)\big)$ \COMMENT{where $\partial u^t(\btheta^t)$ is a subgradient of $u^t$ at $\btheta^t$}
 	 	\ENDFOR
	\end{algorithmic}
\end{algorithm}

\begin{theorem}[\cite{zinkevich2003online}]
\label{thm:oga}	
 Assume that for any $\btheta,\btheta' \in \cK$ we have $\norm{\btheta - \btheta'}  \leq D$ and $u^1,\dots,u^T$ are concave and $G$-Lipschitz.
By setting $\eta_t = \frac{D}{G\sqrt{t}}$, Algorithm~\ref{alg:oga} satisfies
\[
\mathrm{Regret}_T\leq \bigO(DG\sqrt{T}).
\]
\end{theorem}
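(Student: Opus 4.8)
The plan is to follow the standard potential-function analysis of projected online gradient ascent. Fix a comparator $\btheta^\star \in \argmax_{\btheta \in \cK}\sum_{t=1}^T u^t(\btheta)$ and write $g_t \in \partial u^t(\btheta^t)$ for the supergradient used in the update, noting that $\norm{g_t}\leq G$ since each $u^t$ is $G$-Lipschitz. Because every $u^t$ is concave, the supergradient inequality gives $u^t(\btheta^\star)-u^t(\btheta^t)\leq \langle g_t,\btheta^\star-\btheta^t\rangle$, and summing over $t$ reduces the task to bounding the linearized regret $\sum_{t=1}^T \langle g_t,\btheta^\star-\btheta^t\rangle$.

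First I would track the potential $\norm{\btheta^t-\btheta^\star}^2$. Writing $y^{t+1}=\btheta^t+\eta_t g_t$ for the un-projected iterate, non-expansiveness of the Euclidean projection onto the convex set $\cK$ (which contains $\btheta^\star$) yields $\norm{\btheta^{t+1}-\btheta^\star}^2\leq \norm{y^{t+1}-\btheta^\star}^2$. Expanding the right-hand side and rearranging gives the per-step inequality
\begin{equation*}
\langle g_t,\btheta^\star-\btheta^t\rangle \leq \frac{\norm{\btheta^t-\btheta^\star}^2-\norm{\btheta^{t+1}-\btheta^\star}^2}{2\eta_t}+\frac{\eta_t}{2}\norm{g_t}^2 .
\end{equation*}

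It then remains to sum this bound over $t=1,\dots,T$. The second term is immediate: using $\norm{g_t}\leq G$, $\eta_t=D/(G\sqrt{t})$, and $\sum_{t=1}^T t^{-1/2}\leq 2\sqrt{T}$, it contributes at most $DG\sqrt{T}$. The first term requires summation by parts because the step sizes vary with $t$; here the key point---and the only place where genuine care is needed---is that $\eta_t$ is non-increasing, so the coefficients $1/\eta_t-1/\eta_{t-1}$ are nonnegative. Bounding each $\norm{\btheta^t-\btheta^\star}^2$ by the squared diameter $D^2$ and telescoping then collapses the first sum to $D^2/(2\eta_T)=\tfrac12 DG\sqrt{T}$. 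Adding the two contributions gives $\mathrm{Regret}_T\leq \tfrac32 DG\sqrt{T}=\bigO(DG\sqrt{T})$, as claimed. The main (mild) obstacle is precisely this decreasing-step-size telescoping: for a fixed $\eta$ the first sum would telescope trivially, whereas the time-varying schedule forces the Abel-summation argument together with the uniform diameter bound on the potential.
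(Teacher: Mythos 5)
Your proof is correct: the paper does not prove Theorem~\ref{thm:oga} at all, but simply cites it from \cite{zinkevich2003online}, and your argument is precisely the standard projected online gradient ascent analysis from that source (supergradient linearization, non-expansiveness of $\Gamma_{\cK}$, the per-step potential inequality, and Abel summation over the decreasing step sizes $\eta_t = D/(G\sqrt{t})$ to get $\tfrac{3}{2}DG\sqrt{T}$). Nothing is missing; the only remark worth making is that your careful handling of the time-varying step sizes via the diameter bound is exactly the point where the classical fixed-$\eta$ telescoping argument must be generalized, and you did it correctly.
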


\subsection{Proof of Theorem~\ref{thm:meta}}
We use the following choice for parameters:
\begin{equation}
\label{eq:app_framework_choice_param}
	K \geq  m_{\mathrm{RFE}}(\epsilon/2, \delta/2),  \quad T \geq c  \cdot (H^2\iota/\epsilon^2).
\end{equation}
We denote $\bv^t:=\bV^{\pi^t}_1(s_1)$ and start with the following lemma.

\begin{lemma}
\label{lem:event0}
	Define even $E_0$ to be:
\begin{equation*}
	\begin{cases}
		\norm{\frac{1}{T}\sum_{t=1}^T\bv^t-{\hbv^t}} \leq \bigO(\sqrt{H^2\iota/T}), \\
		\bV^*_1(s_1;-\btheta^t) \leq \bV^{\pi^t}_1(s_1;-\btheta^t) + \epsilon/2 \quad \forall t \in [T].
	\end{cases}
\end{equation*}
where $\iota=\log(d/\delta)$. We have $\Pr(E_0) \geq 1-\delta$.
\end{lemma}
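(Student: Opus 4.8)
The plan is to prove Lemma~\ref{lem:event0} by establishing each of the two events separately and then combining them via a union bound to achieve total failure probability $\delta$.

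For the first event, I would control the deviation $\|\frac{1}{T}\sum_{t=1}^T (\bv^t - \hbv^t)\|$ using a martingale concentration argument. The key observation is that $\hbv^t$ is an unbiased estimate of $\bv^t = \bV^{\pi^t}_1(s_1)$, since $\hbv^t$ is the sum of vectorial returns from running $\pi^t$ for one episode and, conditioned on the history up to iteration $t$ (which determines $\pi^t$), its expectation equals $\bV^{\pi^t}_1(s_1)$. Therefore the sequence $\{\bv^t - \hbv^t\}_{t=1}^T$ forms a vector-valued martingale difference sequence, where each term is bounded in Euclidean norm by $\bigO(H)$ (since both $\hbv^t$ and $\bv^t$ lie in $\cB(H)$). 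I would then invoke a vector-valued Azuma--Hoeffding or Freedman-type inequality (the referenced Lemma~\ref{lem:concentration-l2vector}) to conclude that with probability at least $1-\delta/2$, the averaged deviation is $\bigO(\sqrt{H^2\iota/T})$ with $\iota = \log(d/\delta)$. The logarithmic dependence on $d$ arises from the dimension of the return vector, reflecting either a covering argument over the unit ball or a coordinate-wise union bound.

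For the second event, I would appeal directly to the guarantee of the reward-free algorithm in Definition~\ref{def:reward-free}. Since $K \geq m_{\textsc{RFE}}(\epsilon/2, \delta/2)$ and $\pi^t$ is the output of the planning phase for preference vector $-\btheta^t \in \cB(1)$, the RFE guarantee~\eqref{eq:def:reward-free} ensures that simultaneously for all $\btheta \in \cB(1)$, hence in particular for every $-\btheta^t$, we have $V_1^\star(s_1; -\btheta^t) - V_1^{\pi^t}(s_1; -\btheta^t) \leq \epsilon/2$, with probability at least $1-\delta/2$. The crucial point here is that the RFE guarantee holds uniformly over all of $\cB(1)$ after the exploration phase, so it covers the adaptively chosen sequence $\{-\btheta^t\}_{t=1}^T$ without any additional union bound over $t$. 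Rewriting $V_1^\star(s_1; -\btheta^t) = \bV_1^\star(s_1; -\btheta^t)$ in the vectorial notation gives exactly the second line of $E_0$.

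The main obstacle I anticipate is ensuring that the concentration bound for the first event genuinely holds uniformly despite the \emph{adaptivity} of the policies $\pi^t$: since $\pi^t$ depends on $\btheta^t$, which in turn depends on all previously observed $\hbv^1,\dots,\hbv^{t-1}$, the increments are not independent and one must carefully set up the filtration so that $\bv^t - \hbv^t$ is a bona fide martingale difference. The bound on the norm of each increment being $\bigO(H)$ must also account for stochastic returns within each coordinate. Once the martingale structure is correctly established and the appropriate vector concentration inequality (Lemma~\ref{lem:concentration-l2vector}) is applied with failure probability $\delta/2$, a final union bound over the two events yields $\Pr(E_0) \geq 1-\delta$, completing the proof.
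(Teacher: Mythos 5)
Your proposal is correct and follows essentially the same argument as the paper: a union bound over the two claims, with the first handled by applying the vector-valued martingale concentration bound (Lemma~\ref{lem:concentration-l2vector}) to the unbiased, $H$-bounded estimates $\hbv^t$, and the second following directly from the RFE guarantee of Definition~\ref{def:reward-free} with $K \geq m_{\textsc{RFE}}(\epsilon/2,\delta/2)$, whose uniformity over all of $\cB(1)$ indeed handles the adaptively chosen $-\btheta^t$ without a union bound over $t$. Your explicit discussion of the filtration and adaptivity issue is a correct elaboration of what the paper leaves implicit.
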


\begin{proof}[Proof of Lemma~\ref{lem:event0}]
We show that each claim holds with probability at least $1-\delta/2$; applying a union bound completes the proof.
\paragraph{First claim.} Let $\mathcal{F}_t$ be the filtration capturing all the randomness in the algorithm before iteration $t$. We have $\E[\hbv^t \mid \mathcal{F}_t]=\bv^t$ and we also know that $\norm{\hbv^t} \leq H$ almost surely. By applying Lemma~\ref{lem:concentration-l2vector}, with probability at least $1-\delta$ we have
\begin{equation*}
	\norm{\frac{1}{T}\sum_{t=1}^T\bv^t-{\hbv^t}} \leq \bigO(\sqrt{H^2\log[d/\delta]/T}),
\end{equation*} 
which completes the proof.
\paragraph{Second claim.} Choice of parameters in Equation~\ref{eq:app_framework_choice_param} along with Definition~\ref{def:reward-free} immediately implies that with probability at least $1-\delta/2$ we have 
\begin{equation*}
	\bV^*_1(s_1;-\btheta^t) \leq \bV^{\pi^t}_1(s_1;-\btheta^t) + \epsilon/2 \quad \forall t \in [T].
\end{equation*}
Note that in Algorithm~\ref{alg:meta}, $\pi^t$ is the output of the planning phase of the RFE algorithm for the vector $-\btheta^t$ as input.
\end{proof}

The following lemma states that if $\alpha = \min_{\pi} \dist(\bV^\pi_1(s_1),\cC) \geq 0$ is the closest achievable distance to target set $\cC$, then any halfspace containing $\cC$ is reachable up to error $\alpha$.
\begin{lemma}
\label{lem:delta-approachability}
	For any $\btheta \in \cB(1)$, we have
	\begin{equation*}
		\min_{\bx \in \cC} \langle \btheta,\bx \rangle \leq \min_{\pi} \dist(\bV^\pi_1(s_1),\cC)+\bV^*_1(s_1;\btheta).
	\end{equation*}
\end{lemma}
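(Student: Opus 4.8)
The plan is to unpack the definition of distance, use that the optimal policy achieving $\alpha$ lands within $\alpha$ of $\cC$, and then control the supporting-hyperplane value of that policy's return vector. Let me denote $\alpha = \min_\pi \dist(\bV^\pi_1(s_1),\cC)$ and let $\pi^\alpha$ be a policy attaining this minimum, so $\dist(\bV^{\pi^\alpha}_1(s_1),\cC) = \alpha$. By definition of Euclidean distance to a set, there exists a point $\bx_0 \in \cC$ with $\norm{\bV^{\pi^\alpha}_1(s_1) - \bx_0} \leq \alpha$.

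First I would bound $\min_{\bx \in \cC}\langle \btheta, \bx\rangle$ by plugging in the witness point $\bx_0$:
\begin{equation*}
	\min_{\bx \in \cC}\langle \btheta, \bx \rangle \leq \langle \btheta, \bx_0 \rangle = \langle \btheta, \bV^{\pi^\alpha}_1(s_1)\rangle + \langle \btheta, \bx_0 - \bV^{\pi^\alpha}_1(s_1)\rangle.
\end{equation*}
Since $\btheta \in \cB(1)$, Cauchy--Schwarz gives $\langle \btheta, \bx_0 - \bV^{\pi^\alpha}_1(s_1)\rangle \leq \norm{\btheta}\,\norm{\bx_0 - \bV^{\pi^\alpha}_1(s_1)} \leq \alpha$. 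The remaining term $\langle \btheta, \bV^{\pi^\alpha}_1(s_1)\rangle$ is exactly the scalarized value $V^{\pi^\alpha}_1(s_1;\btheta)$, by the identity $\langle \btheta, \bV^\pi_1(s)\rangle = V^\pi_1(s;\btheta)$ established in the scalarized-MDP preliminaries.

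Next I would relate this scalarized value to the optimal scalarized value: since $\pi^\alpha$ is just some policy, $V^{\pi^\alpha}_1(s_1;\btheta) \leq \sup_\pi V^\pi_1(s_1;\btheta) = V^\star_1(s_1;\btheta) = \bV^\star_1(s_1;\btheta)$, using the abbreviation for the optimal scalarized value. Chaining these bounds yields
\begin{equation*}
	\min_{\bx \in \cC}\langle \btheta, \bx\rangle \leq V^\star_1(s_1;\btheta) + \alpha = \min_\pi \dist(\bV^\pi_1(s_1),\cC) + \bV^\star_1(s_1;\btheta),
\end{equation*}
which is the claim. I do not anticipate a serious obstacle here; the only subtlety is confirming that the minimum distance is attained (so the witness point $\bx_0$ exists), which holds because $\cC$ is closed and convex and the value function takes values in the compact ball $\cB(H)$, guaranteeing the infimum is achieved. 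The essential ideas are just Cauchy--Schwarz against the unit-ball constraint on $\btheta$ and the observation that a single policy's scalarized value is dominated by the optimal one.
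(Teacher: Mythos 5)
Your proof is correct and follows essentially the same argument as the paper's: the paper likewise takes the distance-minimizing policy, projects its return vector onto $\cC$ (your witness point $\bx_0$), applies Cauchy--Schwarz against $\norm{\btheta}\leq 1$, and uses that the policy's scalarized value is dominated by $V^\star_1(s_1;\btheta)$. The only cosmetic difference is that you chain the inequalities starting from $\min_{\bx\in\cC}\langle\btheta,\bx\rangle$ while the paper starts from $\bV^\star_1(s_1;\btheta)$ and bounds it below, which is the identical argument read in reverse.
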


\begin{proof}[Proof of Lemma~\ref{lem:delta-approachability}]
	Let $\overline{\pi} = \argmin_\pi \dist(\bV^\pi_1(s_1),\cC)$ and define $\overline{\bv} = \bV^{\overline{\pi}}_1(s_1)$. Let $\tilde{\bv} = \Gamma_{\cC}(\overline{\bv})$ be the orthogonal projection of $\overline{\bv}$ into $\cC$. We have 
	\begin{equation*}
	\begin{aligned}
		\bV^*_1(s_1;\btheta) &\geq \bV^{\overline{\pi}}_1(s_1;\btheta) \\
		&= \langle \btheta , \overline{\bv} \rangle\\
		&= \langle \btheta , \overline{\bv}-\tilde{\bv} \rangle + \langle \btheta, \tilde{\bv} \rangle\\
		&\geq -\norm{\overline{\bv}-\tilde{\bv}}+\min_{\bx \in \cC} \langle \btheta,\bx\rangle \\
		&\geq -\min_{\pi} \dist(\bV^\pi_1(s_1),\cC)+\min_{\bx \in \cC} \langle \btheta,\bx\rangle
	\end{aligned}
	\end{equation*}
\end{proof}

Now we are ready to proceed with proof of Theorem~\ref{thm:meta}.

\begin{proof}[Proof of Theorem~\ref{thm:meta}]
With probability at least $1-\delta$ event $E_0$ holds and we have
\begin{equation*}
\begin{aligned}
\dist(\bV^{\pi^{\mathrm{out}}}_1(s_1),\cC) &=
\dist\Big(\frac{1}{T}\sum_{t=1}^T\bv^t,\cC\Big)\\
&\overset{(i)}{=} \max_{\btheta \in \cB(1)} \Big[\langle \btheta , \frac{1}{T}\sum_{t=1}^T\bv^t)\rangle - \max_{\bx \in \cC} \langle \btheta, \bx \rangle \Big]\\
&=\max_{\btheta \in \cB(1)} \Big[\frac{1}{T}\sum_{t=1}^T\big(\langle \btheta, {\hbv^t} \rangle - \max_{\bx \in \cC}\langle \btheta, \bx  \rangle) + \langle \btheta, \frac{1}{T}\sum_{t=1}^T \bv^t-{\hbv^t} \rangle \Big]\\
&\overset{(ii)}{\leq}\max_{\btheta \in \cB(1)} \Big[\frac{1}{T}\sum_{t=1}^T\big(\langle \btheta, {\hbv^t} \rangle - \max_{\bx \in \cC}\langle \btheta, \bx  \rangle)\Big] + \bigO(\sqrt{H^2\iota/T})\\
&\overset{(iii)}{\leq}\frac{1}{T}\sum_{t=1}^T\big(\langle \btheta^t, {\hbv^t} \rangle - \max_{\bx \in \cC}\langle \btheta^t, \bx  \rangle) + \bigO(\sqrt{H^2/ T})+ \bigO(\sqrt{H^2\iota/T})\\
&\overset{(iv)}{\leq} \min_{\pi} \dist(\bV^\pi_1(s_1),\cC) + \frac{1}{T}\sum_{t=1}^T\big(\langle \btheta^t, {\hbv^t} \rangle + \bV^*_1(s_1;-\btheta^t)\big) +\bigO(\sqrt{H^2\iota/T})\\
&\overset{(v)}{\leq} \min_{\pi} \dist(\bV^\pi_1(s_1),\cC) + \epsilon/2+ \frac{1}{T}\sum_{t=1}^T\big(\langle \btheta^t, {\hbv^t} \rangle + \bV^{\pi^t}_1(s_1;-\btheta^t)\big) +\bigO(\sqrt{H^2\iota/T})\\
&= \min_{\pi} \dist(\bV^\pi_1(s_1),\cC) + \epsilon/2+ \frac{1}{T}\sum_{t=1}^T\langle \btheta^t, {\hbv^t} -\bv^t \rangle +\bigO(\sqrt{H^2\iota/T})\\
&\overset{(vi)}{\leq}\min_{\pi} \dist(\bV^\pi_1(s_1),\cC) + \epsilon/2 +\bigO(\sqrt{H^2\iota/T})\\
&\overset{(vii)}{\leq} \min_{\pi} \dist(\bV^\pi_1(s_1),\cC) + \epsilon
\end{aligned}	
\end{equation*}
where $(i)$ is by Equation~\ref{eq:dist-fenchel}, $(ii)$ is by first inequality in event $E_0$ together with Cauchy-Schwarz, $(iii)$ is by guarantee of OGA in Theorem~\ref{thm:oga}, $(iv)$ is by Lemma~\ref{lem:delta-approachability}, $(v)$ is by second inequality in event $E_0$, $(vi)$ is by first inequality in event $E_0$ together with Cauchy-Schwarz, and finally $(vii)$ is by setting $T \geq c \big(H^2\iota/\epsilon^2\big)$ for large enough constant $c$, completing the proof.
\end{proof}

\section{Proof for Section~\ref{sec:tabular}}
\label{app:tabular}
In this section we provide proofs and missing details for Section~\ref{sec:tabular}.
\subsection{Proof of Theorem~\ref{thm:tabular}}

Let $\hPr^k$ and $\hbr^k$ be our empirical estimates of the transition and the return vectors at the beginning of the $k^\mathrm{th}$ episode in Algorithm~\ref{alg:tabular} and define $\widehat{\cM}^k=(\cS,\cA,H,\hPr^k,\hbr^k)$. We use $N^k_h(s,a)$ to denote the number of times we have visited state-action $(s,a)$ in step $h$ before $k^\mathrm{th}$ episode in Algorithm~\ref{alg:tabular}. We use superscript $k$ to denote variable corresponding to episode $k$; in particular, $(s^k_1,a^k_1,\dots,s^k_H,a^k_H)$ is the trajectory we have visited in the $k^\text{th}$ episode.

For any $\btheta \in \cB(1)$, let $\widehat{\cM}^k_\btheta$ be the scalarized MDP using vector $\btheta$ (defined in Section~\ref{sec:prelim}). We use $\hV^k(\cdot;\btheta)$, $\hQ^k(\cdot,\cdot;\btheta)$, and $\hpi^k_\btheta=\hpi^k(\cdot;\btheta)$ to denote the optimal value function, optimal Q-value function, and optimal policy of $\widehat{\cM}^k_\btheta$ respectively. Therefore, we have
\begin{equation}
\label{eq:bellman-app-tabular}
\begin{aligned}
	&\hQ^k_h(s,a;\btheta)=[\hPr^k_h\hV^k_{h+1}](s,a;\btheta)+\hr^k_h(s,a;\btheta),\\
	&\hV^k_h(s;\btheta)=\max_{a \in \cA}\hQ^k_h(s,a;\btheta),\\
	&\hpi^k_h(s;\btheta)=\argmax_{a \in \cA}\hQ^k_h(s,a;\btheta).
\end{aligned}
\end{equation}

\begin{theorem}[restatement of Theorem~\ref{thm:tabular}]
\label{restate:thm:tabular}
There exist absolute constants $c_\beta$ and $c_K$, such that for any $\epsilon \in (0,H]$, $\delta  \in (0,1]$, if we choose bonus $\beta_t = c_\beta\big(\sqrt{\min\{d,S\}H^2\iota/t}+H^2S\iota/t\big)$ where $\iota=\log[dSAKH/\delta]$, and run the exploration phase (Algorithm~\ref{alg:tabular}) for $K \geq c_K\big(\min\{d,S\}H^4SA\iota'/\epsilon^2+H^3S^2A(\iota')^2/\epsilon\Big)$ episodes where $\iota'=\log[dSAH/(\epsilon\delta)]$, then with probability at least $1-\delta$, the algorithm satisfies
\begin{equation*}
	\forall \btheta \in \cB(1): \quad V_1^{\star}(s_1;\btheta)-V^{\pi_\btheta}_1(s_1;\btheta) \leq \epsilon,
\end{equation*}
where $\pi_\btheta$ is the output of the any planning algorithm (e.g., value iteration) for the MDP $\widehat{\cM}^\mathrm{out}_\btheta$. Therefore, we have 
\begin{equation*}
m_{\textsc{RFE}}(\epsilon,\delta)	 \leq \bigO\Big(\frac{\min\{d,S\}H^4SA\iota'}{\epsilon^2}+\frac{H^3S^2A(\iota')^2}{\epsilon}\Big).
\end{equation*}
\end{theorem}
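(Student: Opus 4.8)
```latex
The plan is to establish the reward-free guarantee of Theorem~\ref{restate:thm:tabular} by following the standard VI-Zero analysis of \cite{liu2020sharp}, adapted to the vector-valued setting via the scalarization $\btheta$. The core object is the uncertainty value function $\tV_h(s)$ computed in Algorithm~\ref{alg:tabular}, which is driven by the bonus $\beta_t$. The first step is to show that $\tV_h(s)$ is a valid \emph{upper bound on the estimation error} uniformly over all preference vectors: namely, for any $\btheta \in \cB(1)$ and any policy $\pi$, the gap $|V_1^{\pi}(s_1;\btheta) - \hV^{\mathrm{out},\pi}_1(s_1;\btheta)|$ between the true scalarized value and the value computed in the empirical model $\widehat{\cM}^{\mathrm{out}}_\btheta$ is controlled by $\tV^{\mathrm{out}}(s_1)$. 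This requires two ingredients fed in separately: (a) a transition-estimation bound showing that the error propagated through $\hPr^{\mathrm{out}}$ is dominated by the accumulated bonuses, and (b) a return-estimation bound showing $\|\hbr_h(s,a) - \br_h(s,a)\|$ is small once $(s,a,h)$ has been visited enough times.

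First I would set up the concentration events. For the transition term, I would use a Bernstein/Hoeffding-type concentration (as in \cite{liu2020sharp}) to guarantee that $|[(\hPr^k_h - \Pr_h)\tV_{h+1}](s,a)| \lesssim \beta_t$ with $t = N^k_h(s,a)$, which is exactly what the bonus is designed to absorb; this is where the $\min\{d,S\}$ factor must be extracted, so I would invoke Lemma~\ref{lem:concentration-l2vector} together with an $\epsilon$-net argument over $\btheta \in \cB(1)$ to convert the single-reward concentration into a uniform bound over all scalarizations. The $\min\{d,S\}$ comes from taking the better of the covering-number bound (scaling with $d$) and the direct $S$-dimensional transition estimate. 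For the return term, since samples lie in $\cB(1)$ a.s., a vector Bernstein inequality gives $\|\hbr_h(s,a)-\br_h(s,a)\| \lesssim \sqrt{\min\{d,S\}\iota/N}$, again handled uniformly over $\btheta$ by the same net. The second step is the standard recursive ``simulation lemma'' argument: unrolling the Bellman equations \eqref{eq:bellman-app-tabular} and peeling off one step at a time shows the scalarized value error is bounded by $\E_\pi[\sum_h (\text{per-step error})] \le \tV^{\mathrm{out}}_1(s_1)$, using that $\tV$ greedily maximizes the bonus-inflated $\tQ$.

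The third step translates a bound on $\tV^{\mathrm{out}}_1(s_1)$ into the claimed sample complexity. By the greedy exploration design, $\Delta = \tV^{\mathrm{out}}_1(s_1) = \min_{k\le K}\tV^k_1(s_1)$, so it suffices to bound the \emph{average} of $\tV^k_1(s_1)$ over episodes and use that the minimum is no larger. Here I would invoke a regret-style telescoping: $\frac{1}{K}\sum_k \tV^k_1(s_1)$ is controlled by the sum of on-trajectory bonuses $\sum_{k,h}\beta_{N^k_h(s^k_h,a^k_h)}$, and the standard pigeonhole bound $\sum_k 1/\sqrt{N^k_h(s^k_h,a^k_h)} \lesssim \sqrt{KSA}$ (and $\sum_k 1/N \lesssim SA\log K$ for the lower-order term) yields $\Delta \lesssim \sqrt{\min\{d,S\}H^4SA\iota/K} + H^3S^2A\iota/K$. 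Setting this $\le \epsilon/2$ and solving for $K$ produces the two terms $\min\{d,S\}H^4SA\iota'/\epsilon^2$ and $H^3S^2A(\iota')^2/\epsilon$. Finally, since the planning phase outputs the exact optimizer $\pi_\btheta$ of $\widehat{\cM}^{\mathrm{out}}_\btheta$, combining the uniform error bound of step~one with $\Delta \le \epsilon/2$ gives $V^\star_1(s_1;\btheta) - V^{\pi_\btheta}_1(s_1;\btheta) \le \epsilon$ for all $\btheta \in \cB(1)$ simultaneously, matching Definition~\ref{def:reward-free}.

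The main obstacle I anticipate is the uniform-over-$\btheta$ control in step one: the reward-free machinery of \cite{liu2020sharp} is stated for a fixed or finite collection of reward functions, whereas Definition~\ref{def:reward-free} demands simultaneous guarantees over the entire ball $\cB(1)$. Handling this requires a careful $\epsilon$-covering of $\cB(1)$ (of size $(3/\epsilon_{\mathrm{net}})^d$, contributing the $d$ inside $\min\{d,S\}$ and a $\log d$ into $\iota$) together with a Lipschitz argument showing the scalarized value error varies continuously in $\btheta$, so that control on the net extends to all of $\cB(1)$ with only a benign blow-up. Getting the $\min\{d,S\}$ rather than a bare $d$ requires choosing, state-action-wise, the sharper of the two concentration strategies, which is the delicate bookkeeping point of the whole argument.
```
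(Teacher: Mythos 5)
Your overall scaffolding matches the paper in most places: the concentration event for $[(\hPr_h-\Pr_h)V^{\star}_{h+1}](\cdot;\btheta)$ via an $\epsilon$-net over $\btheta\in\cB(1)$ combined with Lipschitzness of $V^{\star}(\cdot;\btheta)$ in $\btheta$ (the $d$ branch) versus an $\ell_\infty$-net over value functions (the $S$ branch), the norm-subGaussian bound for the return estimates, the pigeonhole summation of on-trajectory bonuses giving $\frac1K\sum_k\tV^k_1(s_1)\lesssim\sqrt{\min\{d,S\}H^4SA\iota/K}+H^3S^2A\iota^2/K$, the use of $\tV^{\mathrm{out}}_1(s_1)\le\frac1K\sum_k\tV^k_1(s_1)$, and the final assembly. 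However, your step one asserts something that is false in the regime $d<S$, and it is exactly where the paper has to work hardest. You claim a simulation lemma \emph{uniform over all policies}: for every $\pi$ and every $\btheta$, the gap $|V^{\pi}_1(s_1;\btheta)-\hV^{\pi}_1(s_1;\btheta)|$ is controlled by $\tV^{\mathrm{out}}_1(s_1)$. Unrolling that recursion produces the per-step term $[(\hPr_h-\Pr_h)V^{\pi}_{h+1}](s,a;\btheta)$ for \emph{arbitrary} $\pi$, and the $d$-type covering argument only controls this for the class $\{V^{\star}_{h+1}(\cdot;\btheta):\btheta\in\cB(1)\}$ of optimal value functions; value functions of arbitrary policies do not live in any class of metric entropy $O(d\,\iota)$, so the only uniform bound available is the $\ell_\infty$-covering one, of order $H\sqrt{S\iota/N}$ per step. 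When $d<S$ and $N$ is large, $H\sqrt{S\iota/N}$ dominates both pieces of the bonus $\sqrt{\min\{d,S\}H^2\iota/N}+H^2S\iota/N$, so $\tV$ simply does not upper bound the uniform simulation error, and your step one cannot be repaired as stated without degrading $\min\{d,S\}$ to $S$.

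The missing idea is the paper's two-stage bootstrap, which replaces the uniform claim by statements about exactly the two policies your final decomposition needs. First (Lemma~\ref{lem:closensess-QandV-tabular}), $|\hV^k_h(s;\btheta)-V^{\star}_h(s;\btheta)|\le\tV^k_h(s)$: this only ever hits the event $E_1$ on the covered class $V^{\star}(\cdot;\btheta)$, so the $\min\{d,S\}$ bonus suffices. Second (Lemma~\ref{lem:main-apptabular}), a separate backward induction \emph{specific to the empirical optimizer} $\hpi^k_\btheta$ shows $|\hV^k_h(s;\btheta)-V^{\hpi^k_\btheta}_h(s;\btheta)|\le\alpha_h\tV^k_h(s)$ with $\alpha_h\le 5$. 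The dangerous correction term $[(\hPr_h-\Pr_h)(V^{\hpi^k_\btheta}_{h+1}-V^{\star}_{h+1})]$ is handled by combining the two induction hypotheses — which give $|V^{\hpi^k_\btheta}_{h+1}-V^{\star}_{h+1}|\le(\alpha_{h+1}+1)\tV^k_{h+1}$, a bound by the uncertainty function rather than by $2H$ — with the entry-wise empirical-Bernstein bound on $\hPr_h(s'\mid s,a)$ (the third clause of $E_1$, absent from your plan) and AM--GM, absorbing it into $\frac{\alpha_{h+1}+1}{H}[\hPr^k_h\tV^k_{h+1}](s,a)+O\big(H^2S\iota/N\big)$; this is precisely why the bonus carries the lower-order $H^2S\iota/t$ term and why the recursion inflates by $(1+1/H)$ per step. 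Your closing argument only requires the simulation statement for $\pi^{\star}_\btheta$ (whose value function is in the covered class, so it is fine) and for $\hpi_\btheta$ (which is not, and needs the bootstrap). A minor additional imprecision: the return concentration via Lemma~\ref{lem:concentration-l2vector} gives $\|(\hbr_h-\br_h)(s,a)\|\lesssim\sqrt{\iota/N}$ with dimension entering only logarithmically, not $\sqrt{\min\{d,S\}\iota/N}$; that one is harmless since either bound is absorbed by the bonus.
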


The bonus for episode $k$ can be written as
\begin{equation}
\label{eq:bonus-tabular}
	\beta^k_h(s,a) = c_\beta  \Big(\sqrt{\frac{\min\{d,S\}H^2\iota}{\max\{N^k_h(s,a),1\}}}+\frac{H^2S\iota}{\max\{N^k_h(s,a),1\}} \Big),
\end{equation}
where $\iota = \log[dSAKH/\delta]$ and $c_\beta$ is some large absolute constant.

We begin with the following lemma showing that the value function for a fixed $\pi$ and also the optimal value function is $H$-Lipschitz with respect to $\btheta$.
\begin{lemma}
\label{lem:lipschitz-V}
For all $(s,h) \in \cS \times [H]$, for all policies $\pi$, and for any two vectors $\btheta,\btheta' \in \cB(1)$, we have
\begin{equation*}
\begin{aligned}
	&|V_h^{\star}(s;\btheta)-V_h^{\star}(s;\btheta')| \leq (H-h+1)\norm{\btheta-\btheta'}\\
	&|V_h^{\pi}(s;\btheta)-V_h^{\pi}(s;\btheta')| \leq (H-h+1)\norm{\btheta-\btheta'} \quad 
\end{aligned}
\end{equation*}
\end{lemma}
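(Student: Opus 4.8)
The plan is to exploit the linear structure of the scalarized value function in $\btheta$, which the preliminaries already make explicit through the identity $V_h^\pi(s;\btheta) = \langle \btheta, \bV_h^\pi(s)\rangle$. The key quantitative ingredient is a uniform norm bound on the vector-valued value function: since $\bV_h^\pi(s) = \E_\pi[\sum_{h'=h}^H \br_{h'}(s_{h'},a_{h'}) \mid s_h = s]$ is an expectation of a sum of $H-h+1$ return vectors, each lying in $\cB(1)$, the triangle inequality together with $\norm{\E[X]} \le \E[\norm{X}]$ gives $\norm{\bV_h^\pi(s)} \le H-h+1$.

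First I would dispatch the fixed-policy bound, which is immediate. Writing the difference as $V_h^\pi(s;\btheta) - V_h^\pi(s;\btheta') = \langle \btheta - \btheta', \bV_h^\pi(s)\rangle$ and applying Cauchy--Schwarz with the norm bound above yields $|V_h^\pi(s;\btheta) - V_h^\pi(s;\btheta')| \le \norm{\bV_h^\pi(s)}\,\norm{\btheta - \btheta'} \le (H-h+1)\norm{\btheta - \btheta'}$.

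For the optimal value function the argument is the standard ``supremum of linear functions is Lipschitz'' trick. Let $\pi^\star_\btheta$ denote an optimal policy for the scalarized MDP $\cM_\btheta$, so that $V_h^{\star}(s;\btheta) = \langle \btheta, \bV_h^{\pi^\star_\btheta}(s)\rangle$. Since $\pi^\star_\btheta$ is merely a feasible (possibly suboptimal) policy for the preference vector $\btheta'$, we have $V_h^{\star}(s;\btheta') \ge \langle \btheta', \bV_h^{\pi^\star_\btheta}(s)\rangle$. Subtracting gives $V_h^{\star}(s;\btheta) - V_h^{\star}(s;\btheta') \le \langle \btheta - \btheta', \bV_h^{\pi^\star_\btheta}(s)\rangle \le (H-h+1)\norm{\btheta - \btheta'}$ by Cauchy--Schwarz. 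Swapping the roles of $\btheta$ and $\btheta'$ gives the matching lower bound, and together these produce the absolute-value estimate.

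There is no genuine obstacle here; the only point requiring a moment of care is justifying the norm bound $\norm{\bV_h^\pi(s)} \le H-h+1$ (an expectation of a sum of unit-ball vectors), after which both claims are one line of Cauchy--Schwarz. An alternative route would be backward induction on $h$ through the Bellman recursion of \eqref{eq:bellman-app-tabular}, but the direct linearity argument is cleaner and avoids propagating the Lipschitz constant by hand.
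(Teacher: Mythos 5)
Your proof is correct, and for the optimal-value claim it takes a genuinely different route from the paper. The paper proves the first inequality by backward induction on $h$: it passes through the Bellman recursion, bounds $|V_h^\star(s;\btheta)-V_h^\star(s;\btheta')|$ by $\max_a|Q_h^\star(s,a;\btheta)-Q_h^\star(s,a;\btheta')|$, splits off the one-step reward term $\langle\btheta-\btheta',\br_h(s,a)\rangle$, and propagates the constant $(H-h)$ from the induction hypothesis. You instead use the envelope argument: $V_h^\star(s;\btheta)=\langle\btheta,\bV_h^{\pi^\star_\btheta}(s)\rangle$ while $V_h^\star(s;\btheta')\ge\langle\btheta',\bV_h^{\pi^\star_\btheta}(s)\rangle$, so the difference is bounded by $\langle\btheta-\btheta',\bV_h^{\pi^\star_\btheta}(s)\rangle$, and one application of Cauchy--Schwarz with $\norm{\bV_h^{\pi^\star_\btheta}(s)}\le H-h+1$ finishes it after symmetrizing. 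This is valid: the preliminaries guarantee both the scalarization identity $V_h^\pi(s;\btheta)=\langle\btheta,\bV_h^\pi(s)\rangle$ for every policy and the existence of $\pi^\star_\btheta$ optimal at all $(s,h)$, and your norm bound $\norm{\bV_h^\pi(s)}\le H-h+1$ follows from Jensen and the assumption $\br_{h'}\in\cB(1)$. Your route is shorter and does not even need exact attainment of the supremum (an $\varepsilon$-optimal policy would do); what the paper's induction buys in exchange is robustness, since the same template applies to value functions defined by recursions that are \emph{not} linear in $\btheta$ (e.g.\ truncated or bonus-augmented Bellman updates, as in the later lemmas of Appendix~\ref{app:tabular}), where the identity $V_h^\star(s;\btheta)=\langle\btheta,\bV_h^{\pi^\star_\btheta}(s)\rangle$ is unavailable. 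For the fixed-policy claim your argument coincides with the paper's up to where the expectation and Cauchy--Schwarz are applied.
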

\begin{proof}[Proof of Lemma~\ref{lem:lipschitz-V}]
We prove each claim separately.
\paragraph{First claim.}
	We prove the lemma by backward induction on $h$. For $h=H+1$ we have $V_h^{\star}(s;\btheta)=V_h^{\star}(s;\btheta')=0$ and the inequality holds. Now assume that $|V_{h+1}^{\star}(s;\btheta)-V_{h+1}^{\star}(s;\btheta')| \leq (H-h)\norm{\btheta-\btheta'}$ holds, we want to show that the claim also holds for $h$. We have
	\begin{equation*}
	\begin{aligned}
		|V_h^{\star}(s;\btheta)-V_h^{\star}(s;\btheta')| &= |\max_{a \in \cA}Q^{\star}_h(s,a;\btheta) - \max_{a' \in \cA}Q^{\star}(s,a';\btheta')|\\
		&\leq \max_{a \in \cA}|Q^{\star}_h(s,a;\btheta)-Q^{\star}_h(s,a;\btheta')|\\
		&= \max_{a \in \cA}|\langle \btheta-\btheta',\br_h(s,a)\rangle + \sum_{s' \in \cS}\Pr(s' \mid s,a)(V^{\star}_{h+1}(s';\btheta)-V^{\star}_{h+1}(s';\btheta'))\\
		&\leq \max_{a \in \cA}||\langle \btheta-\btheta',\br_h(s,a)\rangle|+\max_{a \in \cA}|\sum_{s' \in \cS}\Pr(s' \mid s,a)(V^{\star}_{h+1}(s';\btheta)-V^{\star}_{h+1}(s';\btheta'))|\\
		&\leq \norm{\btheta-\btheta'} + (H-h)\norm{\btheta-\btheta'}\\
		&=(H-h+1)\norm{\btheta-\btheta'}.\\
	\end{aligned}
	\end{equation*}
	It completes the proof of the lemma.
\paragraph{Second claim.}
The second claim is much easier to prove, since we have
\begin{equation*}
\begin{aligned}
	|V_h^{\pi}(s;\btheta)-V_h^{\pi}(s;\btheta')| &= \Big|\E_\pi\big[ \sum_{h'=1}^H\langle \btheta-\btheta',\br_h(s_h',a_h') \rangle \big]\Big|\\
	&\leq \E_\pi\big[\sum_{h'=1}^H |\langle \btheta-\btheta',\br_h(s_h',a_h')\rangle|\big]\\
	&\leq E_\pi\big[\sum_{h'=1}^H \norm{ \btheta-\btheta'}\big]\\
	&=(H-h+1)\norm{ \btheta-\btheta'}\\
\end{aligned}
\end{equation*}
where the first inequality uses Jensen, and second inequality uses Cauchy-Schwarz.
\end{proof}

\begin{lemma}
\label{lem:event1}
Let $c$ be some large absolute constant such that $2c+12c^2 \leq c_\beta$. Define event $E_1$ to be: for all $(s,a,s',h) \in \cS \times \cA \times \cS \times [H]$, $k \in [K]$, and $ \btheta \in \cB(1)$,
\begin{equation}
	\begin{cases}
		|[(\hPr_h^k-\Pr_h)V^{\star}_{h+1}](s,a;\btheta)|&\leq {c}\sqrt{\frac{\min\{d,S\}H^2\iota}{\max\{N^k_h(s,a),1\}}},\\
		|(\hr^k_h-r_h)(s,a;\btheta)|&\leq {c}\sqrt{\frac{\iota}{\max\{N^k_h(s,a),1\}}},\\
		|(\hPr_h^k-\Pr_h)(s' \mid s,a)|&\leq {c}\Big(\sqrt{\frac{\hPr_h^k(s'\mid s,a)\iota}{\max\{N^k_h(s,a),1\}}}+\frac{\iota}{\max\{N^k_h(s,a),1\}}\Big),
	\end{cases}
\end{equation}
where $\iota = \log[dSAKH/\delta]$. We have $\Pr(E_1) \geq 1-\delta$.
\end{lemma}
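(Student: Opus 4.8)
The plan is to treat the three inequalities separately; in each case I would first reduce the adaptively-collected data to an i.i.d.\ concentration problem and then pay for the union bound over the relevant index sets. For the reduction I would fix a triple $(s,a,h)$ and use the fact that the next states observed on successive visits to $(s,a)$ at step $h$ are i.i.d.\ draws from $\Pr_h(\cdot\mid s,a)$ (and the return samples i.i.d.\ draws bounded in $\cB(1)$), regardless of the adaptive policy, since the Markov transition depends only on $(s,a,h)$. A standard time-uniform/maximal argument (or a union bound over the $K$ possible values of the random count $N_h^k(s,a)$) then lets me apply fixed-sample concentration with $n=\max\{N_h^k(s,a),1\}$ and obtain a bound that holds simultaneously over all $k\in[K]$. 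The outer union bound runs over $(s,a,s',h)\in\cS\times\cA\times\cS\times[H]$ and $k\in[K]$, contributing $\bigO(\iota)$ to every tail term, which is exactly why $\iota=\log[dSAKH/\delta]$ appears everywhere.

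For the \emph{return} bound the key point is to avoid a spurious $\sqrt{d}$ factor. I would use Cauchy--Schwarz to write $|(\hr_h^k-r_h)(s,a;\btheta)|=|\langle\btheta,\hbr_h^k(s,a)-\br_h(s,a)\rangle|\le\norm{\hbr_h^k(s,a)-\br_h(s,a)}$ for \emph{every} $\btheta\in\cB(1)$ at once, so no covering over $\btheta$ is needed. It then suffices to bound the norm of the empirical-mean deviation of a $d$-dimensional vector supported in the unit ball, whose per-sample variance is at most $1$ and range at most $2$, both dimension-free. A vector Bernstein/Azuma inequality (Lemma~\ref{lem:concentration-l2vector}) then gives $\norm{\hbr_h^k(s,a)-\br_h(s,a)}\le\bigO(\sqrt{\iota/n})$, matching the claim with only a $\log d$ folded into $\iota$.

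The \emph{transition-against-value} bound is where the $\min\{d,S\}$ factor is produced, and I would obtain it by running two arguments and taking the better one. The $S$-route bounds $|[(\hPr_h^k-\Pr_h)V^{\star}_{h+1}](s,a;\btheta)|\le\norm{(\hPr_h^k-\Pr_h)(\cdot\mid s,a)}_1\cdot\norm{V^{\star}_{h+1}(\cdot;\btheta)}_\infty$; since $\norm{V^{\star}_{h+1}(\cdot;\btheta)}_\infty\le H$ and the $L_1$ deviation of an empirical distribution over $S$ outcomes is $\bigO(\sqrt{S\iota/n})$, this yields $\bigO(\sqrt{SH^2\iota/n})$ uniformly over all bounded test functions, hence over all $\btheta$ for free. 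The $d$-route instead fixes $\btheta$ on an $\epsilon_0$-net of $\cB(1)$ of log-cardinality $\bigO(d\log(1/\epsilon_0))$, applies scalar Azuma to the \emph{nonrandom} function $V^{\star}_{h+1}(\cdot;\btheta)$ to get $\bigO(\sqrt{dH^2\iota/n})$ per net point, and extends to all $\btheta\in\cB(1)$ via the Lipschitz estimate of Lemma~\ref{lem:lipschitz-V}, which controls the discretization error by $\norm{(\hPr_h^k-\Pr_h)(\cdot\mid s,a)}_1\cdot H\epsilon_0\le 2H\epsilon_0$; choosing $\epsilon_0$ polynomially small makes this negligible and absorbs $\log(1/\epsilon_0)$ into $\iota$. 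Taking the minimum of the two routes gives the stated $\sqrt{\min\{d,S\}H^2\iota/n}$.

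For the \emph{entrywise transition} bound I would apply Bernstein's inequality to the Bernoulli indicator of transitioning to $s'$, producing a deviation of order $\sqrt{\Pr_h(s'\mid s,a)\iota/n}+\iota/n$, and then convert the true variance proxy $\Pr_h(s'\mid s,a)$ into the empirical $\hPr_h^k(s'\mid s,a)$ by a short self-bounding step (or by invoking the empirical Bernstein inequality directly), absorbing the change into $c$. Intersecting the three events and choosing $c$ so that $2c+12c^2\le c_\beta$ (as already arranged in the statement), a union bound over all the above failure events gives total failure probability at most $\delta$, i.e.\ $\Pr(E_1)\ge1-\delta$. I expect the main obstacle to be the first inequality: one must orchestrate the net/Lipschitz argument together with the adaptive-in-$k$ concentration so that neither the net cardinality nor the time-uniformity inflates the rate, while checking that the parallel $L_1$ argument genuinely delivers the $S$-dependent alternative that produces the $\min\{d,S\}$.
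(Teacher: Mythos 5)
Your proposal is correct, and its skeleton coincides with the paper's proof: the same three-way split into separate claims, the same union-bound bookkeeping that produces $\iota=\log[dSAKH/\delta]$, the same Cauchy--Schwarz trick combined with the norm-subGaussian inequality (Lemma~\ref{lem:concentration-l2vector}) to handle all $\btheta\in\cB(1)$ simultaneously in the return bound, the same $\btheta$-net plus scalar Azuma plus Lipschitz extension (Lemma~\ref{lem:lipschitz-V}) for the $d$-branch, and empirical Bernstein for the entrywise transition bound. The one place you genuinely depart from the paper is the $S$-branch of the first inequality. The paper obtains the $\sqrt{SH^2\iota/n}$ alternative by a second covering argument, this time over the class of test functions: an $\ell_\infty$-net $\mathcal{V}_{\epsilon'}$ of the ball $\{\bv\in\R^S:\norm{\bv}_\infty\le H\}$ of log-cardinality $\bigO(S\log(H/\epsilon'))$, scalar Azuma at each net point, and an $\ell_\infty$-discretization step (with $\epsilon'=1/N_h^k(s,a)$). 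You instead use H\"older, $|[(\hPr_h^k-\Pr_h)V^{\star}_{h+1}](s,a;\btheta)|\le\norm{(\hPr_h^k-\Pr_h)(\cdot\mid s,a)}_1\cdot H$, together with Weissman-type $L_1$ concentration of the empirical distribution, $\norm{\hPr_h^k(\cdot\mid s,a)-\Pr_h(\cdot\mid s,a)}_1\le\bigO(\sqrt{S\iota/n})$. Both give the same rate; your route is slightly more economical (no second net, no discretization residual, and uniformity over every test function bounded by $H$---hence over every $\btheta$---is automatic), while the paper's value-function-net argument is the more flexible template when the relevant test class is richer than an $\ell_\infty$ ball. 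You are also more explicit than the paper about the random-count issue, union bounding over the $K$ possible values of $N_h^k(s,a)$; the paper leaves this implicit in its ``fixed $(s,a,k,h)$'' phrasing, and your treatment is the rigorous way to fill that in.
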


\begin{proof}[Proof of Lemma~\ref{lem:event1}]
The proof is by applying concentration and covering arguments together with union bounds. The following shows that each claim holds with probability at least $1-\delta$; rescaling $\delta$ to $\delta/3$ and applying a union bound completes the proof.
\paragraph{First claim:} For a fixed $(s,a,k,h,\btheta) \in \cS \times \cA \times [K] \times [H] \times \cB(1)$, using Azuma-Hoeffding inequality, with probability at least $1-\delta'$ we have
\begin{equation*}
	|[(\hPr_h^k-\Pr_h)V^{\star}_{h+1}](s,a;\btheta)|\leq \bigO\Big(\sqrt{\frac{H^2\log(1/\delta')}{N^k_h(s,a)}}\Big).
\end{equation*}
Now consider an $\epsilon'$-covering $\cB_{\epsilon'}$ for the unit Euclidean ball $\cB(1)$ with $\log |\cB_{\epsilon'}| \leq \bigO(d \log(1/\epsilon'))$. For any $\btheta \in \cB(1)$, there exists $\btheta' \in \cB_{\epsilon'}$ satisfying $\norm{\btheta - \btheta} \leq \epsilon'$. The concentration inequality above along with a union bound implies that with probability at least $1-\delta$ for any $(s,a,k,h,\btheta')\in \cS \times \cA \times [K] \times [H] \times \cB_{\epsilon'}$ we have 
\begin{equation*}
	|[(\hPr_h^k-\Pr_h)V^{\star}_{h+1}](s,a;\btheta')|\leq \bigO\Big(\sqrt{\frac{dH^2}{N^k_h(s,a)}\log(\frac{SAKH}{\epsilon'\delta})}\Big).
\end{equation*}
Now consider an arbitrary $(s,a,k,h,\btheta) \in \cS \times \cA \times [K] \times [H] \times \cB(1)$. Let $\btheta' \in \cB_{\epsilon'}$ be such that $\norm{\btheta - \btheta'} \leq \epsilon'$; we have
\begin{equation*}
\begin{aligned}
	&|[(\hPr_h^k-\Pr_h)V^{\star}_{h+1}](s,a;\btheta)|\\ &\overset{(i)}{\leq} |[\hPr_h^k(V^{\star}_{h+1}(\cdot;\btheta)-V^{\star}_{h+1}(\cdot;\btheta'))](s,a)]| + |[(\hPr_h^k-\Pr_h)V^{\star}_{h+1}](s,a;\btheta')|\\ &+ |[\Pr_h(V^{\star}_{h+1}(\cdot;\btheta')-V^{\star}_{h+1}(\cdot;\btheta))](s,a)]| \\
	&\overset{(ii)}{\leq} 2H\norm{\btheta - \btheta'} + \bigO\Big(\sqrt{\frac{dH^2}{N^k_h(s,a)}\log(\frac{SAKH}{\epsilon'\delta})}\Big)\\
	&\leq 2H\epsilon' + \bigO\Big(\sqrt{\frac{dH^2}{N^k_h(s,a)}\log(\frac{SAKH}{\epsilon'\delta})}\Big),
\end{aligned}
\end{equation*}
where $(i)$ is by adding and subtracting the term $[(\hPr_h^k-\Pr_h)V^{\star}_{h+1}](s,a;\btheta')$ along with triangle inequality, and $(ii)$ is by Lemma~\ref{lem:lipschitz-V}. Setting $\epsilon' = \frac{1}{H N_h^k(s,a)} \geq \frac{1}{HK}$ results in
\begin{equation*}
	[(\hPr_h^k-\Pr_h)V^{\star}_{h+1}](s,a;\btheta)| \leq \bigO\Big(\sqrt{\frac{dH^2}{N^k_h(s,a)}\log(\frac{SAKH}{\delta})}\Big).
\end{equation*}
On the other hand, consider an $\epsilon'$-cover $\mathcal{V}_{\epsilon'}$ for the $\ell_\infty$ ball of radius $H$ in dimension $S$, i.e. $\{ \bv \in \mathbb{R}^S \mid \norm{\bv}_\infty \leq H\}$. For a fixed $(s,a,k,h,\bv) \in \cS \times \cA \times [K] \times [H] \times \mathcal{V}_{\epsilon'}$, using Azuma-Hoeffding inequality, with probability at least $1-\delta'$ we have
\begin{equation*}
	|[(\hPr_h^k-\Pr_h)\bv](s,a)|\leq \bigO\Big(\sqrt{\frac{H^2\log(1/\delta')}{N^k_h(s,a)}}\Big).
\end{equation*}
Note that $|\mathcal{V}_{\epsilon'}| \leq (3H/\epsilon')^d$, therefore by putting $\delta' = \delta/(SAKH|\mathcal{V}_{\epsilon'}|)$ we get for all $(s,a,k,h,\bv) \in \cS \times \cA \times [K] \times [H] \times \mathcal{V}_{\epsilon'}$
\begin{equation*}
	|[(\hPr_h^k-\Pr_h)\bv](s,a)|\leq \bigO\Big(\sqrt{\frac{SH^2\log(SAKH/(\epsilon'\delta))}{N^k_h(s,a)}}\Big).
\end{equation*}
Now consider an arbitrary  $(s,a,k,h,\btheta) \in \cS \times \cA \times [K] \times [H] \times \cB(1)$, and let $\bv \in \mathcal{V}_{\epsilon'}$ be such that $\norm{V^{\star}_{h+1}(\cdot ; \btheta) - \bv}_\infty \leq \epsilon'$. We have
\begin{equation*}
\begin{aligned}
	|[(\hPr_h^k-\Pr_h)V^{\star}_{h+1}](s,a;\btheta)| &\leq |[\hPr_h^k(V^{\star}_{h+1}(\cdot;\btheta)-\bv)](s,a)|+ |[(\hPr_h^k-\Pr_h)\bv](s,a)|\\
	&\hspace{5mm} + |[\Pr_h(V^{\star}_{h+1}(\cdot;\btheta)-\bv)](s,a)|\\
	&\leq 2\epsilon' + \bigO\Big(\sqrt{\frac{SH^2\log(SAKH/(\epsilon'\delta))}{N^k_h(s,a)}}\Big).
	\end{aligned}
\end{equation*}
Setting  $\epsilon' = \frac{1}{ N_h^k(s,a)} \geq \frac{1}{K}$ results in 
\begin{equation*}
	|[(\hPr_h^k-\Pr_h)V^{\star}_{h+1}](s,a;\btheta)| \leq \bigO\Big(\sqrt{\frac{SH^2}{N^k_h(s,a)}\log(\frac{SAKH}{\delta})}\Big)
\end{equation*}
The two bounds together complete the proof for the first claim.

\paragraph{Second claim:} We have $\norm{\br^k_h}\leq 1$ almost surely and $\E[\br^k_h \mid \mathcal{F}^k_h]=\br_h(s^k_h,a^k_h)$. For a fixed $(s,a,k,h) \in \cS \times \cA \times [K] \times [H]$, applying Lemma~\ref{lem:concentration-l2vector} implies that with probability at least $1-\delta'$ we have
\begin{equation*}
	\norm{(\hbr^k_{h}-\br_{h})(s,a)} \leq \bigO\Big(\sqrt{\frac{\log(d/\delta')}{N^k_h(s,a)}}\Big).
\end{equation*}
 Setting $\delta'=\delta/(SAKH)$ and applying a union bound, for all $(s,a,k,h) \in \cS \times \cA \times [K] \times [H]$, we have
 \begin{equation*}
	 \norm{(\hbr^k_{h}-\br_{h})(s,a)} \leq \bigO\Big(\sqrt{\frac{\log(dSAKH/\delta)}{N^k_h(s,a)}}\Big). 
\end{equation*}
Now consider an arbitrary $(s,a,k,h,\btheta) \in \cS \times \cA \times [K] \times [H] \times \cB(1)$, we have (by Cauchy-Schwarz)
\begin{equation*}
\begin{aligned}
		|(\hr^k_h-r_h)(s,a;\btheta)| &= |\langle \btheta, (\hbr^k_{h}-\br_{h})(s,a) |\\ &\leq \norm{\btheta} \norm{(\hbr^k_{h}-\br_{h})(s,a)} \\&\leq \bigO\Big(\sqrt{\frac{\log(dSAKH/\delta)}{N^k_h(s,a)}}\Big),
\end{aligned}
\end{equation*}
completing proof of this claim.

\paragraph{Third claim:} For a fixed $(s,a,s',k,h) \in \cS \times \cA \times \cS \times [K] \times [H]$, using empirical Bernstein inequality, with probability at least $1-\delta'$ we have
\begin{equation*}
	|(\hPr_h^k-\Pr_h)(s' \mid s,a)| \leq \bigO\Big(\sqrt{\frac{\hPr_h^k(s'\mid s,a)\log(1/\delta')}{N^k_h(s,a)}}+\frac{\log(1/\delta')}{N^k_h(s,a)}\Big)
\end{equation*}
Applying a union bound and setting $\delta'=\delta/S^2AKH$ completes the proof.
\end{proof}

The following lemma shows that the optimal value functions of $\widehat{\cM}^k_\btheta$ are close to the optimal value functions of $\cM_\btheta$ and their difference is controlled by $\tQ$ and $\tV$ computed in Algorithm~\ref{alg:tabular}.

\begin{lemma}
\label{lem:closensess-QandV-tabular}
Suppose event $E_1$ holds (defined in Lemma~\ref{lem:event1}); then, for all $(s,a,k,h,\btheta) \in \cS \times \cA \times [K] \times [H] \times \cB(1)$ we have
\begin{equation}
\label{eq:closensess-QandV-tabular}
\begin{aligned}
&|\hQ^k_h(s,a;\btheta)-Q^{\star}_h(s,a;\btheta)| \leq \tQ^k_h(s,a),\\
&|\hV^k_h(s;\btheta)-V^{\star}_h(s;\btheta)| \leq \tV^k_h(s).\\
\end{aligned}
\end{equation}
\end{lemma}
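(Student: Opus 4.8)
The plan is to prove both inequalities simultaneously by backward induction on $h$, running from $h=H+1$ down to $h=1$, at each level first establishing the $Q$-bound and then deducing the $V$-bound from it. Throughout I fix an arbitrary $\btheta \in \cB(1)$ and work on the event $E_1$ of Lemma~\ref{lem:event1}. The crucial point is that $E_1$ already holds \emph{uniformly} over all $\btheta \in \cB(1)$ (this is exactly what the covering arguments in the proof of Lemma~\ref{lem:event1} buy us), so inside this lemma $\btheta$ may be treated as fixed and no further concentration is needed. The base case $h=H+1$ is immediate, since $\hV^k_{H+1}(\cdot;\btheta)=V^\star_{H+1}(\cdot;\btheta)=0$ and $\tV^k_{H+1}=0$.

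For the inductive step, suppose the $V$-bound holds at step $h+1$, i.e.\ $|\hV^k_{h+1}(s';\btheta)-V^\star_{h+1}(s';\btheta)| \le \tV^k_{h+1}(s')$ for all $s'$. Using the empirical Bellman equation \eqref{eq:bellman-app-tabular} together with the true Bellman equation for $Q^\star$, I would write the difference as three terms:
\[
\hQ^k_h(s,a;\btheta)-Q^\star_h(s,a;\btheta) = [\hPr^k_h(\hV^k_{h+1}-V^\star_{h+1})](s,a;\btheta) + [(\hPr^k_h-\Pr_h)V^\star_{h+1}](s,a;\btheta) + (\hr^k_h-r_h)(s,a;\btheta).
\]
The first term is bounded in absolute value by $[\hPr^k_h\tV^k_{h+1}](s,a)$, since $\hPr^k_h$ is a probability distribution and the induction hypothesis controls the integrand pointwise; note this is precisely the quantity appearing in the recursion defining $\tQ^k_h$. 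The second and third terms are bounded by the first and second claims of event $E_1$ respectively, giving a total of at most $2c\sqrt{\min\{d,S\}H^2\iota/\max\{N^k_h(s,a),1\}}$ (the reward term being of lower order, as $\min\{d,S\}H^2\ge 1$).

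Combining, $|\hQ^k_h-Q^\star_h| \le [\hPr^k_h\tV^k_{h+1}](s,a) + 2c\sqrt{\min\{d,S\}H^2\iota/\max\{N^k_h(s,a),1\}}$, and since $c_\beta \ge 2c$ the trailing error is at most the bonus $\beta^k_h(s,a)$ of \eqref{eq:bonus-tabular}; hence $|\hQ^k_h-Q^\star_h| \le [\hPr^k_h\tV^k_{h+1}](s,a)+\beta^k_h(s,a)$. To match the truncated update $\tQ^k_h=\min\{[\hPr^k_h\tV^k_{h+1}]+\beta^k_h,\,H\}$, I would treat the truncated branch separately, bounding the gap directly via the a priori range of the scalarized value functions $\hQ^k_h$ and $Q^\star_h$, so that $|\hQ^k_h-Q^\star_h|$ is at most the minimum of the two quantities, i.e.\ at most $\tQ^k_h(s,a)$. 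The $V$-bound then follows from the $Q$-bound by the elementary inequality $|\max_a f(a)-\max_a g(a)| \le \max_a|f(a)-g(a)|$, giving $|\hV^k_h-V^\star_h| \le \max_a|\hQ^k_h-Q^\star_h| \le \max_a \tQ^k_h = \tV^k_h$, which closes the induction.

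The delicate bookkeeping — and the main thing to get right — is matching the accumulated statistical error exactly to the algorithmic bonus: only the leading $\sqrt{\min\{d,S\}H^2\iota/N}$ part of $\beta^k_h$ and only the first two claims of $E_1$ are consumed here, while the remaining $H^2S\iota/N$ term of the bonus and the third (Bernstein transition) claim of $E_1$ are held in reserve for the sample-complexity summation in the proof of Theorem~\ref{thm:tabular}, not for this lemma. The other point requiring care is the $\min\{\cdot,H\}$ truncation, where one must argue the boundary case from the boundedness of the scalarized value functions rather than from the bonus bound.
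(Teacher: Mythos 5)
Your proposal is correct and follows essentially the same route as the paper's own proof: the same backward induction, the same three-term decomposition of $\hQ^k_h-Q^{\star}_h$ (with the first two claims of event $E_1$ absorbed into $\beta^k_h$, the induction hypothesis turning $|[\hPr^k_h(\hV^k_{h+1}-V^{\star}_{h+1})]|$ into $[\hPr^k_h\tV^k_{h+1}]$), the same (equally informal) treatment of the $\min\{\cdot,H\}$ truncation, and the same $|\max_a f - \max_a g|\leq \max_a|f-g|$ step for the $V$-bound. One peripheral bookkeeping nit: the pieces you correctly ``hold in reserve'' (the Bernstein transition claim of $E_1$ and the $H^2S\iota/N$ part of the bonus) are consumed in Lemma~\ref{lem:main-apptabular} (comparing $\hQ^k_h$ with $Q^{\hpi^k_\btheta}_h$), not in the bonus-summation bound of Theorem~\ref{thm:apptabular-sumbonus}, though this does not affect the proof of the present lemma.
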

\begin{proof}[Proof of Lemma~\ref{lem:closensess-QandV-tabular}]
	We prove the lemma by backward induction on $h$. For $h=H+1$ the claim holds trivially. Now suppose that the claim is true for $(h+1)^{\mathrm{th}}$ step, we want to show that the claim is also true for $h^{\mathrm{th}}$ step. For the Q-value function we have 
\begin{equation*}
\begin{aligned}
	&|\hQ^k_h(s,a;\btheta)-Q^{\star}_h(s,a;\btheta)|\\
	&\leq \min \Big\{ \underbrace{|[(\hPr^k_h-\Pr_h)V^{\star}_{h+1}](s,a;\btheta)| + |(\hr^k_h-r_h)(s,a;\btheta)|}_{(T_1)} + \underbrace{|[\hPr^k_h(\hV^k_{h+1}-V^{\star}_{h+1})](s,a;\btheta)|}_{(T_2)} ,H \Big\}\\
	&\overset{(i)}{\leq} \min \Big\{ \beta^k_h(s,a) + [\hPr^k_h \tV^k_{h+1}](s,a) ,H \Big\} \overset{(ii)}{=} \tQ^k_h(s,a),\\
\end{aligned}
\end{equation*}
where $(i)$ follows from $T_1 \leq \beta^k_h(s,a)$ (event $E_1$) and $T_2 \leq [\hPr^k_h \tV^k_{h+1}](s,a)$ (induction hypothesis), and $(ii)$ is due to definition of $\tQ^k_h$ in Algorithm~\ref{alg:tabular}. Now for the value function we have
\begin{equation*}
\begin{aligned}
&|\hV^k_h(s;\btheta)-V^{\star}_h(s;\btheta)|\\
&=|\max_{a \in \cA} \hQ^k_h(s,a;\btheta)-\max_{a' \in \cA} \hQ^{\star}(s,a';\btheta)|\\
&\leq \max_{a \in \cA}|\hQ^k_h(s,a;\btheta)- \hQ^{\star}(s,a;\btheta)|\\
&\leq \max_{a \in \cA} \tQ^k_h(s,a) = \tV^k_h(s),\\
\end{aligned}	
\end{equation*}
which completes the induction step and consequently the proof. 
\end{proof}

Now we are ready to introduce the main lemma that shows value of $\hpi^k_\btheta$ under the true model is close to its value under empirical model. The difference is controlled by $\tQ$ and $\tV$ computed in Algorithm~\ref{alg:tabular}.

\begin{lemma}
\label{lem:main-apptabular}
Suppose event $E_1$ holds (defined in Lemma~\ref{lem:event1}); then, for all $(s,a,k,h,\btheta) \in \cS \times \cA \times [K] \times [H] \times \cB(1)$ we have
\begin{equation}
\begin{aligned}
&|\hQ^k_h(s,a;\btheta)-Q_h^{\hpi^k_\btheta}(s,a;\btheta)| \leq \alpha_h \tQ^k_h(s,a), \\ 
&|\hV^k_h(s;\btheta)-V_h^{\hpi^k_\btheta}(s;\btheta)| \leq \alpha_h \tV^k_h(s), \\ 	
\end{aligned}
\end{equation}
where $\alpha_{H+1}=1$ and $\alpha_{h}=[(1+\frac{1}{H})\alpha_{h+1}+\frac{1}{H}]$; we have $ 1 \leq \alpha_h \leq 5$ for $h \in [H]$.
\end{lemma}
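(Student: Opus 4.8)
The plan is to prove both inequalities simultaneously by backward induction on $h$, mirroring the proof of Lemma~\ref{lem:closensess-QandV-tabular}; the base case $h=H+1$ is immediate since all value functions and $\tQ,\tV$ vanish there and $\alpha_{H+1}=1$. For the inductive step I would first handle the $Q$-bound. Using the Bellman equations \eqref{eq:bellman-app-tabular} for $\hQ^k_h$ together with $Q^{\hpi^k_\btheta}_h=r_h+\Pr_hV^{\hpi^k_\btheta}_{h+1}$, I split the difference (all terms evaluated at $(s,a;\btheta)$) as
\[
\hQ^k_h - Q^{\hpi^k_\btheta}_h = (\hr^k_h-r_h) + (\hPr^k_h-\Pr_h)V^{\star}_{h+1} + \hPr^k_h\big(\hV^k_{h+1}-V^{\hpi^k_\btheta}_{h+1}\big) + (\hPr^k_h-\Pr_h)\big(V^{\hpi^k_\btheta}_{h+1}-V^{\star}_{h+1}\big).
\]
The first two terms are bounded by $\beta^k_h(s,a)$ using the first two inequalities of event $E_1$, and the third by $\alpha_{h+1}[\hPr^k_h\tV^k_{h+1}](s,a)$ using the inductive value-bound at step $h+1$.

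The crux, and the step I expect to be the main obstacle, is the last term $[(\hPr^k_h-\Pr_h)(V^{\hpi^k_\btheta}_{h+1}-V^{\star}_{h+1})](s,a;\btheta)$: event $E_1$ only controls $\hPr^k_h-\Pr_h$ tested against the \emph{fixed} optimal value $V^{\star}_{h+1}$, not against the empirically-defined $V^{\hpi^k_\btheta}_{h+1}$. To route around this I would first bound the integrand pointwise by the triangle inequality, $|V^{\hpi^k_\btheta}_{h+1}(s';\btheta)-V^{\star}_{h+1}(s';\btheta)|\le(\alpha_{h+1}+1)\tV^k_{h+1}(s')$, combining the inductive hypothesis with Lemma~\ref{lem:closensess-QandV-tabular} at $h+1$. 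Then, invoking the third (pointwise, Bernstein-type) inequality of $E_1$ and writing $N=\max\{N^k_h(s,a),1\}$, the dominant contribution $\sum_{s'}\sqrt{\hPr^k_h(s'\mid s,a)\iota/N}\,\tV^k_{h+1}(s')$ is split by AM-GM into $\tfrac{1}{2H}[\hPr^k_h\tV^k_{h+1}](s,a)$ plus a remainder of order $H^2S\iota/N$, so that the whole term is at most $\tfrac{\alpha_{h+1}+1}{2H}[\hPr^k_h\tV^k_{h+1}](s,a)+\bigO(H^2S\iota/N)$; the $\iota/N$ piece of the Bernstein bound only adds a further $\bigO(H^2S\iota/N)$ remainder. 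The factor $1/H$ manufactured by AM-GM is exactly what feeds the recursion defining $\alpha_h$.

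Collecting terms, in the uncapped regime $\tQ^k_h=\beta^k_h+[\hPr^k_h\tV^k_{h+1}]$ the coefficient multiplying $[\hPr^k_h\tV^k_{h+1}]$ is $\alpha_{h+1}+\tfrac{\alpha_{h+1}+1}{2H}\le\alpha_{h+1}+\tfrac{\alpha_{h+1}+1}{H}=\alpha_h$, while the first two terms together with all $\bigO(H^2S\iota/N)$ remainders are absorbed into $\beta^k_h$ --- this is precisely the role of the hypothesis $c_\beta\ge 2c+12c^2$ in Lemma~\ref{lem:event1}. Since $\alpha_h\ge1$ this yields $|\hQ^k_h-Q^{\hpi^k_\btheta}_h|\le\beta^k_h+\alpha_h[\hPr^k_h\tV^k_{h+1}]\le\alpha_h\tQ^k_h$. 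In the capped regime $\tQ^k_h=H$ I would instead use the trivial bound $|\hQ^k_h-Q^{\hpi^k_\btheta}_h|\le 2(H-h+1)$ (both scalarized $Q$-values lie in $[-(H-h+1),H-h+1]$) and check $2(H-h+1)\le\alpha_hH$. Setting $\gamma_h:=\alpha_h+1$ turns the recursion into $\gamma_h=(1+\tfrac1H)\gamma_{h+1}$ with $\gamma_{H+1}=2$, so $\alpha_h=2(1+\tfrac1H)^{H+1-h}-1$; this simultaneously certifies $1\le\alpha_h\le 2e-1\le5$ and, by Bernoulli's inequality, gives $\alpha_hH=2H(1+\tfrac1H)^{H+1-h}-H\ge (2H+2(H+1-h))-H\ge 2(H-h+1)$.

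Finally, the $V$-bound follows from the $Q$-bound and the greediness of $\hpi^k_\btheta$ with respect to $\hQ^k$: since $\hV^k_h(s;\btheta)=\hQ^k_h(s,\hpi^k_h(s;\btheta);\btheta)$ and $V^{\hpi^k_\btheta}_h(s;\btheta)=Q^{\hpi^k_\btheta}_h(s,\hpi^k_h(s;\btheta);\btheta)$, we obtain $|\hV^k_h(s;\btheta)-V^{\hpi^k_\btheta}_h(s;\btheta)|\le\alpha_h\tQ^k_h(s,\hpi^k_h(s;\btheta))\le\alpha_h\max_{a\in\cA}\tQ^k_h(s,a)=\alpha_h\tV^k_h(s)$, closing the induction.
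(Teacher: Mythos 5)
Your proof is correct and follows essentially the same route as the paper's: the same four-term decomposition of $\hQ^k_h-Q_h^{\hpi^k_\btheta}$, the same use of event $E_1$ (Lemma~\ref{lem:event1}) together with Lemma~\ref{lem:closensess-QandV-tabular} and the induction hypothesis to control the cross term $(\hPr^k_h-\Pr_h)(V^{\hpi^k_\btheta}_{h+1}-V^{\star}_{h+1})$ via AM-GM, the same absorption of remainders into $\beta^k_h$, and the same greedy-policy argument for the value bound. If anything, your write-up is slightly more careful than the paper's in two spots: you solve the recursion in closed form ($\alpha_h=2(1+1/H)^{H+1-h}-1$) to certify $1\le\alpha_h\le5$, and in the capped regime $\tQ^k_h=H$ you explicitly verify $|\hQ^k_h-Q_h^{\hpi^k_\btheta}|\le 2(H-h+1)\le\alpha_h H$, whereas the paper caps the difference at $H$ inside a $\min\{\cdot,H\}$ without justifying that bound.
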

\begin{proof}[Proof of Lemma~\ref{lem:main-apptabular}]
We prove the claim by backward induction on $h$. For $h=H+1$ the claim trivially holds. Now suppose that the claim is true for step $h+1$ and we want to show that it also holds for step $h$.
\begin{equation}
\label{eq:apptabular-main}
\begin{aligned}
&|\hQ^k_h(s,a;\btheta)-Q_h^{\hpi^k_\btheta}(s,a;\btheta)|\\
&\leq \min \Big\{\underbrace{|[(\hPr^k_h-\Pr_h)(V_{h+1}^{\hpi^k_\btheta}-V^{\star}_{h+1})](s,a;\btheta)|}_{(T_1)}\\
&+\underbrace{|[(\hPr_h^k-\Pr_h)V^{\star}_{h+1}](s,a;\btheta)|+|(\hr^k_h-r_h)(s,a;\btheta)|}_{(T_2)}\\
&+\underbrace{|[\hPr_h^k(\hV^k_{h+1}-V_{h+1}^{\hpi^k_\btheta})](s,a;\btheta)|}_{(T_3)}, H\Big\}\\	
\end{aligned}
\end{equation}
For the term $(T_3)$, by applying induction hypothesis we have
\begin{equation}
\label{eq:apptabular-T3}
	(T_3)\leq \alpha_{h+1}[\hPr^k_h\tV^k_{h+1}](s,a).
\end{equation}
Using event $E_1$, for the term $(T_2)$ we have
\begin{equation}
\label{eq:apptabular-T2}
(T_2)\leq 2{c} \sqrt{\frac{\min\{d,S\}H^2\iota}{\max\{N^k_h(s,a),1\}}}.
\end{equation}
It only remains to bound the term $(T_1)$; we have
\begin{equation}
\label{eq:apptabular-T1}
\begin{aligned}
(T_1) &\leq \sum_{s' \in \cS}|\hPr^k_h(s'\mid s,a)-\Pr_h(s'\mid s,a)||(V_{h+1}^{\hpi^k_\btheta}-V^{\star}_{h+1})(s')|\\
&\leq \sum_{s' \in \cS}|\hPr^k_h(s'\mid s,a)-\Pr_h(s'\mid s,a)|\Big[|(V_{h+1}^{\hpi^k_\btheta}-\hV^k_{h+1})(s')|+|(\hV^k_{h+1}-V^{\star}_{h+1})(s')|\Big]\\
&\overset{(i)}{\leq}\sum_{s' \in \cS}|\hPr^k_h(s'\mid s,a)-\Pr_h(s'\mid s,a)|(\alpha_{h+1}+1)\tV^k_{h+1}(s')\\
&\overset{(ii)}{\leq}\sum_{s' \in \cS}\Big[{c}(\sqrt{\frac{\hPr_h^k(s'\mid s,a)\iota}{\max\{N^k_h(s,a),1\}}}+\frac{\iota}{\max\{N^k_h(s,a),1\}})\Big](\alpha_{h+1}+1)\tV^k_{h+1}(s')\\
&\overset{(iii)}{\leq}\sum_{s' \in \cS}\Big[\frac{\hPr_h^k(s'\mid s,a)}{H}+\frac{{c}^2H\iota+{c}\iota}{\max\{N^k_h(s,a),1\}}\Big](\alpha_{h+1}+1)\tV^k_{h+1}(s')\\
&\leq \frac{\alpha_{h+1}+1}{H}[\hPr^k_h\tV^k_{h+1}](s,a)+2{c}^2(\alpha_{h+1}+1)\frac{H^2S\iota}{\max\{N^k_h(s,a),1\}},
\end{aligned}	
\end{equation}
where $(i)$ is due Lemma~\ref{lem:closensess-QandV-tabular} along with induction hypothesis, $(ii)$ is due to event $E_1$, and $(iii)$ is by AM-GM. Plugging equation \ref{eq:apptabular-T3}, \ref{eq:apptabular-T2}, and \ref{eq:apptabular-T1} back in \ref{eq:apptabular-main}, we get 
\begin{equation}
\begin{aligned}
&|\hQ^k_h(s,a;\btheta)-Q_h^{\hpi^k_\btheta}(s,a;\btheta)|\\
&\leq \min \Big\{ [(1+\frac{1}{H})\alpha_{h+1}+\frac{1}{h}][\hPr^k_h\tV^k_{h+1}](s,a)+2{c} \sqrt{\frac{\min\{d,S\}H^2\iota}{\max\{N^k_h(s,a)+,1\}}}\\
&+2{c}^2(\alpha_{h+1}+1)\frac{H^2S\iota}{\max\{N^k_h(s,a),1\}},H\Big\}\\
&\overset{(i)}{\leq} \min \Big\{ [(1+\frac{1}{H})\alpha_{h+1}+\frac{1}{h}][\hPr^k_h\tV^k_{h+1}](s,a)+\beta^k_h(s,a),H\Big\}\\
&\overset{(ii)}{\leq} \alpha_h \min\{[\hPr^k_h\tV^k_{h+1}](s,a)+\beta^k_h(s,a),H\}\\
&\overset{(iii)}{=}\alpha_h \tQ^k_h(s,a),
\end{aligned}
\end{equation}
where $(i)$ is by the definition of the bonus $\beta^k_h$ (we have $2{c} + 12{c}^2 \leq C$ and $(\alpha_{h+1}+1)\leq 6$), $(ii)$ is by the definition of $\alpha_h$ (note that $1 \leq \alpha_h$), and $(iii)$ is by the definition of $\tQ^k_h$ in Algorihtm~\ref{alg:tabular}. The inequality for value function follows immediately since we have
\begin{equation*}
\begin{aligned}
	&|\hV^k_h(s;\btheta)-V_h^{\hpi^k_\btheta}(s;\btheta)|\\
	&=|[\mathbb{D}_{\hpi^k_\btheta}\hQ^{\hpi^k_\btheta}_h](s;\btheta)-[\mathbb{D}_{\hpi^k_\btheta}Q^k_h](s;\btheta)|\\
	&\leq \alpha_h[\mathbb{D}_{\hpi^k_\btheta}\tQ^k_h](s)\\
	&\leq \alpha_h \max_{a \in \cA}\tQ^k_h(s,a)\\
	&= \alpha_h \tV^k_h(s).
\end{aligned}
\end{equation*}
It completes the induction step and consequently the proof of the lemma.
\end{proof}

\begin{theorem}[Similar to guarantee for UCB-VI from \cite{azar2017minimax}] 
\label{thm:apptabular-sumbonus}
For any $\delta \in (0,1]$, if we choose $\beta^k_t$ in Algorithm~\ref{alg:tabular} as in Equation~\ref{eq:bonus-tabular}; then, with probability at least $1-\delta$, we have
\begin{equation*}
	\sum_{k=1}^K \tV^k_1(s_1) \leq \bigO(\sqrt{\min\{d,S\}H^4SAK\iota}+H^3S^2A\iota^2).
\end{equation*}
\end{theorem}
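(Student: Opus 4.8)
The plan is to follow the template of the cumulative-regret analysis of UCB-VI \cite{azar2017minimax}, treating the uncertainty value functions $\tV^k_h$ produced by Algorithm~\ref{alg:tabular} exactly as one treats optimistic value functions there; the only conceptual difference is that $\tV^k_h$ propagates pure ``uncertainty'' (bonus) rather than reward, so the quantity I control is directly $\sum_k \tV^k_1(s_1)$. Throughout I condition on event $E_1$ of Lemma~\ref{lem:event1}, invoked with failure probability $\delta/2$ (this only inflates $\iota$ by a constant inside the logarithm), reserving the remaining $\delta/2$ for a single Azuma--Hoeffding bound on the martingale term introduced below.

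First I would set $\delta^k_h := \tV^k_h(s^k_h)$ along the trajectory actually played in episode $k$ and derive a one-step recursion. Since $a^k_h$ is greedy for $\tQ^k_h$, we have $\tV^k_h(s^k_h)=\tQ^k_h(s^k_h,a^k_h)\le [\hPr^k_h\tV^k_{h+1}](s^k_h,a^k_h)+\beta^k_h(s^k_h,a^k_h)$ whenever the clipping at $H$ is inactive (and $\tV^k_h\le H$ always, which only helps). I then replace $\hPr^k_h$ by the true $\Pr_h$: the error $[(\hPr^k_h-\Pr_h)\tV^k_{h+1}](s^k_h,a^k_h)$ is bounded, via the third (empirical-Bernstein) inequality of $E_1$ together with an AM--GM split exactly as in the bound on the term $(T_1)$ inside the proof of Lemma~\ref{lem:main-apptabular}, by $\tfrac{1}{H}[\hPr^k_h\tV^k_{h+1}](s^k_h,a^k_h)+\bigO(H^2S\iota/\max\{N^k_h(s^k_h,a^k_h),1\})$. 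Finally I split $[\Pr_h\tV^k_{h+1}](s^k_h,a^k_h)=\tV^k_{h+1}(s^k_{h+1})+\xi^k_h$, where $\xi^k_h$ is a martingale difference (mean zero given the history, bounded by $2H$) because $\tV^k_{h+1}$ is determined before the transition is sampled. Collecting terms yields the geometric recursion $\delta^k_h\le (1+\tfrac1H)\delta^k_{h+1}+\beta^k_h(s^k_h,a^k_h)+\xi^k_h+\bigO(H^2S\iota/\max\{N^k_h(s^k_h,a^k_h),1\})$.

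Next I unroll this recursion from $h=1$ to $H$; using $(1+1/H)^H\le e$ gives $\delta^k_1\le e\sum_{h=1}^H[\beta^k_h(s^k_h,a^k_h)+\xi^k_h+\bigO(H^2S\iota/\max\{N^k_h,1\})]$, and then I sum over $k\in[K]$. The martingale sum $\sum_{k,h}\xi^k_h$ is controlled by Azuma--Hoeffding at $\bigO(\sqrt{H^3K\iota})$ (there are $KH$ increments of size $\bigO(H)$), which is lower order. For the bonus and second-order terms I use the standard pigeonhole counting: for each fixed $h$, $\sum_{k}1/\sqrt{\max\{N^k_h(s^k_h,a^k_h),1\}}\le \bigO(\sqrt{SAK})$ by first summing $\sum_{n\le N_h(s,a)}n^{-1/2}\le\bigO(\sqrt{N_h(s,a)})$ over each $(s,a)$ and then applying Cauchy--Schwarz over the $SA$ cells, and likewise $\sum_k 1/\max\{N^k_h,1\}\le \bigO(SA\iota)$. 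Plugging in the explicit bonus \eqref{eq:bonus-tabular}, the $\sqrt{\min\{d,S\}H^2\iota/N}$ part sums to $\bigO(\sqrt{\min\{d,S\}H^4SAK\iota})$ and the $H^2S\iota/N$ contributions (from both the bonus and the transition-error residual) sum to $\bigO(H^3S^2A\iota^2)$, matching the claim after absorbing the extra $\log K$ into $\iota$.

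The step I expect to be the crux is the transition-error control: one must route $[(\hPr^k_h-\Pr_h)\tV^k_{h+1}]$ through the \emph{empirical-Bernstein} inequality (not Hoeffding) and the AM--GM trick so that its leading contribution becomes a $\tfrac1H$-fraction of $[\hPr^k_h\tV^k_{h+1}]$, which is precisely what lets the per-step recursion carry only a benign $(1+1/H)$ factor and hence keep the horizon dependence at $H^4$ rather than $H^5$. Getting this exponent bookkeeping right, and confirming that the residual $H^2S\iota/N$ term costs only the lower-order $H^3S^2A\iota^2$, is where the care is needed.
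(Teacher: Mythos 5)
Your proposal is correct, and it follows the same skeleton as the paper's proof: unroll the definition of $\tV^k_1(s_1)$ into a sum of bonuses plus stochastic one-step terms, bound the bonus sum by pigeonhole ($\bigO(\sqrt{\min\{d,S\}H^4SAK\iota})$ plus $\bigO(H^3S^2A\iota^2)$), and kill the stochastic sum with Azuma--Hoeffding at the lower order $\bigO(\sqrt{H^3K\iota})$. The genuine difference is in the stochastic term, and here your version is actually the more rigorous one. The paper defines $\zeta^k_h = [\hPr^k_h \tV^k_{h+1}](s^k_h,a^k_h)-\tV^k_{h+1}(s^k_{h+1})$, with the \emph{empirical} transition, and asserts this is a martingale difference sequence; strictly speaking it is not, since $\E[\tV^k_{h+1}(s^k_{h+1})\mid \mathcal{F}^k_h]=[\Pr_h\tV^k_{h+1}](s^k_h,a^k_h)$, so $\zeta^k_h$ carries a bias $[(\hPr^k_h-\Pr_h)\tV^k_{h+1}](s^k_h,a^k_h)$ that the paper's proof silently drops. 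You center on the true transition (your $\xi^k_h$) and route exactly this bias through the empirical-Bernstein inequality of event $E_1$ plus AM--GM --- the same device the paper uses for term $(T_1)$ inside Lemma~\ref{lem:main-apptabular} --- at the cost of a $(1+\bigO(1/H))$ per-step factor (compounding to a constant) and an extra $\bigO(H^2S\iota/\max\{N^k_h(s^k_h,a^k_h),1\})$ residual per step, which pigeonholes into the same lower-order $\bigO(H^3S^2A\iota^2)$ term that is already present; so both routes land on the identical bound. One small bookkeeping point in your recursion: the AM--GM step produces $\tfrac1H[\hPr^k_h\tV^k_{h+1}]$ on the right-hand side, whereas your recursion variable there is $\delta^k_{h+1}+\xi^k_h=[\Pr_h\tV^k_{h+1}]$; you must either solve the resulting inequality for $[\hPr^k_h\tV^k_{h+1}]$ (turning the factor into $1+\tfrac{1}{H-1}$ and placing it in front of $\xi^k_h$ as well) or apply AM--GM so as to extract a $\tfrac1H$-fraction of the true expectation. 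Either fix is immaterial: the compounded weights stay bounded by $e$, the reweighted increments remain $\bigO(H)$, and Azuma still applies.
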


\begin{proof}[Proof of Theorem~\ref{thm:apptabular-sumbonus}]
	For a fixed $k$, by definition of $\tV$ we have
	\begin{equation*}
		\tV^k_1(s_1) \leq \sum_{h=1}^H \big(\beta^k_h(s^k_h,a^k_h)+\zeta^k_h\big),
	\end{equation*}
	where $\zeta^k_h = [\hPr^k_h \tV^k_{h+1}](s^k_h,a^k_h)-\tV^k_{h+1}(s^k_{h+1})$. Summing over $k$ gives us,
	\begin{equation*}
		\sum_{k=1}^K \tV^k_1(s_1) \leq \underbrace{\sum_{k=1}^K\sum_{h=1}^H \beta^k_h(s^k_h,a^k_h)}_{(T_1)}+  \underbrace{\sum_{k=1}^K\sum_{h=1}^H \zeta^k_h}_{(T_2)} .
	\end{equation*}
	Now we bound each term separately. For the term $(T_1)$, using standard pigeonhole argument, we have
	\begin{equation*}
	\begin{aligned}
		(T_1) &= C \Big[\sum_{k=1}^K \sum_{h=1}^H \sqrt{\frac{\min\{d,S\}H^2{\iota}}{N^k_h(s^k_h,a^k_h)}}+\sum_{k=1}^K \sum_{h=1}^H\frac{H^2S{\iota}}{N^k_h(s^k_h,a^k_h)}\Big]\\
		&= C \Big[\sqrt{\min\{d,S\}H^2{\iota}}\sum_{{h,s,a}}\sum_{i=1}^{N^K_h(s,a)}\sqrt{\frac{1}{i}}+H^2S{\iota}\sum_{{h,s,a}}\sum_{i=1}^{N^K_h(s,a)}\frac{1}{i}\Big]\\
		&\leq C' \Big[\sqrt{\min\{d,S\}H^2{\iota}}\sum_{{h,s,a}}\sqrt{N^K_h(s,a)}+H^2S{\iota}\sum_{{h,s,a}}\log(KH) \Big]\\
		&\leq C'\Big[\sqrt{\min\{d,S\}H^2{\iota}}\sqrt{HSA}\sqrt{KH}+H^3S^2A{\iota}^2\Big]\\
		&\leq \bigO(\sqrt{\min\{d,S\}H^4SAK{\iota}}+H^3S^2A{\iota}^2).
	\end{aligned}
	\end{equation*}
	For the second term, note that $\zeta^k_h$ forms a martingale difference sequence; therefore, by Azuma-Hoeffding's inequality, with probability at least $1-\delta$, we have
	\begin{equation*}
		(T_2) \leq \bigO(H\sqrt{ (KH)\log(1/\delta)})=\bigO(\sqrt{H^3K\log(1/\delta)}),
	\end{equation*}
	resulting in a lower order term and completing the proof.
\end{proof}

\begin{proof}[Proof of Theorem~\ref{restate:thm:tabular} (restatement of Theroem~\ref{thm:tabular})]
By Algorithm~\ref{alg:tabular}, we have $\mathrm{out} = \argmin_{k \in [K]}\tV^k_1(s_1)$, resulting in $\tV^{\mathrm{out}}_1(s_1)\leq \frac{1}{K}\sum_{k=1}^K \tV^k_1(s_1)$. Therefore, with probability at least $1-2\delta$, for any vector $\btheta \in \cB(1)$ we have
\begin{equation*}
\begin{aligned}
	V_1^{\star}(s_1;\btheta)-V^{\hpi^\mathrm{out}_\btheta}_1(s_1;\btheta) &\leq |V_1^{\star}(s_1;\btheta)-\hV^\mathrm{out}_1(s_1;\btheta)|+|\hV^\mathrm{out}_1(s_1;\btheta)-V^{\hpi^\mathrm{out}_\btheta}_1(s_1;\btheta)|\\
	&\overset{(i)}{\leq} (1+\alpha_1)\tV^{\mathrm{out}}_1(s_1)\\
	&\leq 6\tV^{\mathrm{out}}_1(s_1) \\
	&\leq \frac{6}{K}\sum_{k=1}^K \tV^k_1(s_1)\\
	&\overset{(ii)}{\leq} \bigO(\sqrt{\min\{d,S\}H^4SA{\iota}/K}+H^3S^2A{\iota}^2/K)\\
	&\overset{(iii)}{\leq} \epsilon,
\end{aligned}
\end{equation*}
	where $(i)$ is due to Lemma~\ref{lem:closensess-QandV-tabular} and Lemma~\ref{lem:main-apptabular}, $(ii)$ is due to Theorem~\ref{thm:apptabular-sumbonus}, and $(iii)$ is due to $K \geq c_K (\min\{d,S\}H^4SA\iota'/\epsilon^2+H^3S^2A(\iota')^2/\epsilon)$ with a sufficiently large constant $c_K$. Rescaling $\delta$ completes the proof.
\end{proof}

\section{Proof for Section~\ref{sec:linear}}
\label{app:linear}
In this section we provide proofs and missing details for Section~\ref{sec:linear}.
\subsection{Reward-free algorithm for linear VMDPs}
We use slightly modified version of the reward-free algorithm introduced by \cite{wang2020reward}. The exploration phase and planning phase are displayed in Algorithm~\ref{alg:linear-exploration} and \ref{alg:linear-planning}, respectively.
\begin{algorithm}
    \caption{Reward-Free RL for Linear VMDPs: Exploration Phase}
    \label{alg:linear-exploration}
    \begin{algorithmic}[1]
    	\STATE \textbf{Hyperparameters:} Bonus coefficient $\beta$.
    	\FOR{episode $k=1,2,\dots,K$}
    		\FOR{step $h=H,H-1,\dots,1$}
    				\STATE $\tLambda^k_h = \sum_{i=1}^{k-1}{\bphi}(s^{i}_h,a^i_h){\bphi}(s^{i}_h,a^i_h)^\top + I$
    				\STATE $\tu^k_h(\cdot,\cdot) \leftarrow \min\{\beta\cdot\sqrt{{\bphi}(\cdot,\cdot)^\top(\tLambda^k_h)^{-1}{\bphi}(\cdot,\cdot)},H\}$
    				\STATE Define $\tr^k_h(\cdot,\cdot) \leftarrow \tu^k_h(\cdot,\cdot)/H$
    				\STATE $\tbw^k_h \leftarrow (\tLambda^k_h)^{-1} \sum_{i=1}^{k-1}{\bphi}(s^i_h,a^i_k) \tV^k_{h+1}(s^i_{h+1})$
    				\STATE $\tQ^k_h(\cdot,\cdot) \leftarrow \min\{(\tbw_h^k)^\top {\bphi}(\cdot,\cdot)+\tr^k_h(\cdot,\cdot)+\tu^k_h(\cdot,\cdot),H\}$
    				\STATE $\tV^k_h(\cdot) = \max_{a \in \cA} \tQ^k_h(\cdot,a)$ and $\tpi^k_h(\cdot) \leftarrow \argmax_{a \in \cA} \tQ^k_h(\cdot,a)$
    				
    		\ENDFOR

    		\STATE Observe initial state $s^k_1 \leftarrow s_1$
    		\FOR{step $h=1,2,\dots,H$}
    			\STATE Take action $a^k_h \leftarrow \tpi^k_h(s^k_h)$ and observe next state $s^k_{h+1}$
    		\ENDFOR
    	\ENDFOR
    	\STATE \textbf{Return} $\mathcal{D} \leftarrow \{(s^k_h,a^k_h)\}_{(h,k) \in [H] \times [K]}$
    \end{algorithmic} 
\end{algorithm}

\begin{algorithm}
    \caption{Reward-Free RL for Linear VMDPs: Planning Phase}
    \label{alg:linear-planning}
    \begin{algorithmic}[1]
    	\STATE \textbf{Hyperparameters:} Bonus coefficient $\beta$.
    	\STATE \textbf{Input:} Dataset $\mathcal{D}=\{(s^k_h,a^k_h)\}_{(k,h) \in [K]\times[H]}$, vector $\btheta \in \cB(1)$\\ \hspace{11mm}samples of return function $\{\br^k_h\}_{(k,h) \in [K] \times [H]}$
       		\FOR{step $h=H,H-1,\dots,1$}
    				\STATE $\widehat{\Lambda}_h = \sum_{i=1}^{K}{\bphi}(s^{i}_h,a^i_h){\bphi}(s^{i}_h,a^i_h)^\top + I$
    				\STATE $\widehat{u}_h(\cdot,\cdot) \leftarrow \min\{\beta\cdot\sqrt{{\bphi}(\cdot,\cdot)^\top(\widehat{\Lambda}_h)^{-1}{\bphi}(\cdot,\cdot)},H\}$
    				\label{line:linear_u}
    				\STATE $\hbw_h \leftarrow (\widehat{\Lambda}_h)^{-1} \sum_{i=1}^{K}{\bphi}(s^i_h,a^i_h) [\hV_{h+1}(s^i_{h+1})+\btheta^\top \br^i_h]$
    				\label{line:linear_w}
    				\STATE $\hQ_h(\cdot,\cdot) \leftarrow \min\{(\hbw_h)^\top {\bphi}(\cdot,\cdot)+\widehat{u}_h(\cdot,\cdot),H\}$
    				\label{line:linear_Q}
    				\STATE $\hV_h(\cdot) = \max_{a \in \cA} \hQ_h(\cdot,a)$ and $\hpi_h(\cdot) \leftarrow \argmax_{a \in \cA} \hQ_h(\cdot,a)$	
    		\ENDFOR 
    	\STATE \textbf{Return} $\pi_\btheta = \{\hpi_h\}_{h=1}^H$
    \end{algorithmic} 
\end{algorithm}

\subsection{Proof of Theorem~\ref{thm:linear}}

\begin{theorem}[restatement of Theorem~\ref{thm:linear}]
\label{restate:thm:linear}
There exist absolute constants $c_\beta$ and $c_K$, such that for any $\epsilon \in (0,H]$ and $\delta  \in (0,1]$, if we choose bonus coefficient $\beta = c_\beta \cdot \dlin H\sqrt{\iota}$ with $\iota=\log[\dlin d KH/\delta]$, and run the exploration algorithm (Algorithm~\ref{alg:linear-exploration}) for $K \geq c_K [\dlin^3H^6(\iota')^2/\epsilon^2]$ episodes where $\iota'=\log[\dlin d H/(\epsilon\delta)]$, then with probability at least $1-\delta$, for any $\btheta \in \cB(1)$, the output of the planning phase satisfies:
\begin{equation*}
	V_1^{\star}(s_1;\btheta)-V^{\pi_\btheta}_1(s_1;\btheta) \leq \epsilon,
\end{equation*}
where $\pi_{\btheta}$ is the output of the planning algorithm (Algorithm~\ref{alg:linear-planning}) given $\btheta$ as input. Therefore, in this case we have
\begin{equation*}
	m_{\textsc{RFE}}(\epsilon,\delta)	 \leq \bigO\Big(\dlin^3H^6(\iota')^2/\epsilon^2\Big).
\end{equation*}
\end{theorem}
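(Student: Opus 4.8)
The plan is to adapt the reward-free linear-MDP analysis of \cite{wang2020reward} to the vector-valued setting, where the only genuinely new requirement is that the planning guarantee hold \emph{uniformly} over all preference vectors $\btheta\in\cB(1)$ rather than for a single fixed reward. First I would observe that, for any fixed $\btheta$, the scalarized MDP $\widehat{\cM}_\btheta$ is an ordinary linear MDP: since $\br_h=W_h\bphi$ and $\Pr_h(\cdot\mid s,a)=\langle\bmu_h(\cdot),\bphi(s,a)\rangle$, the quantity $[\Pr_h\hV_{h+1}(\cdot;\btheta)](s,a)+\langle\btheta,\br_h(s,a)\rangle$ is linear in $\bphi(s,a)$, so the ridge estimate $\hbw_h$ in Algorithm~\ref{alg:linear-planning} is fitting a genuine linear parameter. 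The first and most delicate step is to establish a high-probability event on which, for every $h\in[H]$, every $(s,a)$, and every $\btheta\in\cB(1)$ simultaneously,
\begin{equation*}
\big|\langle\hbw_h,\bphi(s,a)\rangle-[\Pr_h\hV_{h+1}(\cdot;\btheta)](s,a)-\langle\btheta,\br_h(s,a)\rangle\big|\le \hu_h(s,a).
\end{equation*}
I would prove this via the self-normalized (Abbasi-Yadkori-type) concentration bound for a fixed value function, and then control the supremum over the induced class of value functions $\hV_{h+1}(\cdot;\btheta)$ by an $\epsilon$-net covering argument. The covering must simultaneously net the least-squares weights and the bonus radius (as in the scalar case) \emph{and} the vector $\btheta\in\cB(1)$; here I would use a Lipschitz-in-$\btheta$ property of the value functions analogous to Lemma~\ref{lem:lipschitz-V}, so that an $\epsilon'$-net of $\cB(1)$ of size $(1/\epsilon')^{O(\dlin)}$ suffices and the discretization error is of lower order.

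Given this event, the second step is a standard backward induction showing the planning estimate is optimistic, $\hV_h(s;\btheta)\ge V^\star_h(s;\btheta)$, and that its gap to the value of the returned policy telescopes into accumulated bonuses:
\begin{equation*}
V_1^{\star}(s_1;\btheta)-V^{\pi_\btheta}_1(s_1;\btheta)\le \hV_1(s_1;\btheta)-V^{\pi_\btheta}_1(s_1;\btheta)\le 2\,\E_{\pi_\btheta}\Big[\textstyle\sum_{h=1}^H \hu_h(s_h,a_h)\Big].
\end{equation*}
The third step is the crux of the reward-free reduction. I would note that the planning Gram matrix $\hLambda_h$ coincides with the final exploration matrix $\tLambda^{K+1}_h$, so the planning bonus $\hu_h$ equals the exploration bonus at the last episode, and by monotonicity of the bonus in the number of episodes one has $\hu_h\le\tu^k_h$ for every $k\le K$. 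Since the intrinsic reward driving Algorithm~\ref{alg:linear-exploration} is $\tr^k_h=\tu^k_h/H$ and the optimistic $\tV^k_1(s_1)$ upper-bounds the value of \emph{any} policy under $\tr^k$, averaging over $k$ gives $\E_{\pi_\btheta}\big[\sum_h\hu_h\big]\le\frac{1}{K}\sum_{k=1}^K\E_{\pi_\btheta}\big[\sum_h\tu^k_h\big]\le \frac{H}{K}\sum_{k=1}^K\tV^k_1(s_1)$, up to a martingale deviation term handled by Azuma.

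The fourth step bounds $\frac1K\sum_{k=1}^K\tV^k_1(s_1)$. Unrolling the optimistic Bellman recursion turns this into a sum of realized bonuses plus an Azuma-controlled martingale, and the elliptical potential lemma gives $\sum_{k=1}^K\sqrt{\bphi(s^k_h,a^k_h)^\top(\tLambda^k_h)^{-1}\bphi(s^k_h,a^k_h)}\le O(\sqrt{\dlin K\iota})$ for each $h$; with $\beta=c_\beta\dlin H\sqrt{\iota}$ this yields $\frac1K\sum_k\tV^k_1(s_1)=\tlO(\dlin^{3/2}H^2\sqrt{\iota/K})$. Combining with the decomposition of the second and third steps gives a uniform-in-$\btheta$ suboptimality of order $\tlO(\dlin^{3/2}H^3\sqrt{\iota/K})$, and choosing $K\ge c_K\dlin^3H^6(\iota')^2/\epsilon^2$ drives this below $\epsilon$; the passage from $\iota$ to $\iota'$ simply absorbs the $\log(1/\epsilon)$ factors generated by the nets. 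The hardest part will be the uniform-over-$\btheta$ concentration in the first step: in the scalar reward-free setting one fixes the reward and covers only the value-function class induced by the bonus and the regression weights, but here both the regression targets (through $\langle\btheta,\br^i_h\rangle$) and the resulting weights $\hbw_h$ depend on $\btheta$, so the covering cannot be a naive product of independent nets; it must range over the full $\btheta$-dependent value-function class, and one must verify that the Lipschitz constant in $\btheta$ keeps the required net only polynomially large in $1/\epsilon'$ so the extra $\dlin\log(1/\epsilon')$ factor stays inside $\iota'$. A secondary bookkeeping subtlety is ensuring the factors of $H$ introduced by the intrinsic-reward normalization $\tr^k_h=\tu^k_h/H$ and by the optimism argument combine to exactly the stated $H^6$ dependence.
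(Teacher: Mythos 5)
Your proposal is correct and follows essentially the same route as the paper's proof: the same uniform-over-$\btheta$ self-normalized concentration with a covering argument (the paper's event $E_2$), the same planning-phase optimism and telescoping into accumulated bonuses (Lemma~\ref{lem:app_linear_3} and Equation~\ref{eq:app_linear_eq3}), the same reward-free reduction via $\hLambda_h \succeq \tLambda^k_h$ and exploration-phase optimism (Lemmas~\ref{lem:app_linear_1} and~\ref{lem:app_linear_2}), and the same elliptical-potential-plus-Azuma bound on $\sum_k \tV^k_1(s_1)$. The only cosmetic difference is that you bound $\E_{\pi_\btheta}[\sum_h \hu_h(s_h,a_h)]$ directly by averaging over episodes rather than passing through the intermediate quantity $V^\star_1(s_1 \mid \hu/H)$ as the paper does, and your hedge about a martingale term in that step is unnecessary since that chain of inequalities involves only expectations.
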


In this section, we denote $\bphi^k_h := \bphi(s^k_h,a^k_h)$ for $ (k,h) \in [K] \times [H]$. For a scalar reward function $r':\cS \times \cA \rightarrow [-1,1]$ and a policy $\pi$, we use $V^\pi_h(\cdot \mid r')$ and $Q^\pi_h(\cdot,\cdot \mid r')$ to denote the value function and Q-value function for the MDP $(\cS,\cA,H,\Pr,r')$. Similarly we define the optimal value function and Q-value function denoted by $V^{\star}_h(\cdot \mid r')$ and $Q^{\star}_h(\cdot \mid r')$.

The bonus coefficient is defined to be 
\begin{equation}
\label{eq:bonus-linear2}
	\beta = c_\beta \cdot \dlin H\sqrt{\iota}
\end{equation}
where $\iota = \log [\dlin dHK/\delta]$.

We start with the following concentration lemma. 
\begin{lemma}
\label{lem:event2}
Suppose Assumption~\ref{asm:linear} holds. Let $c$ be some large absolute constant. Define event $E_2$ to be: for all $(k,h,\btheta) \in [K] \times [H] \times \cB(1)$, 
\begin{equation}
	\begin{cases}
		\bignorm{\sum_{i=1}^{k-1}\bphi^i_h\Big( \tV^k_{h+1}(s^i_{h+1})-[\Pr_h\tV^k_{h+1}](s^i_h,a^i_h) \Big)}_{({\tLambda}^k_h)^{-1}} &\leq c\cdot  H\sqrt{\dlin^2\iota},\\
		\bignorm{\sum_{i=1}^{K}\bphi^i_h\Big( \hV_{h+1}(s^i_{h+1})-[\Pr_h\hV_{h+1}](s^i_h,a^i_h) \Big)}_{({\hLambda}_h)^{-1}} &\leq c\cdot  H\sqrt{\dlin^2\iota},\\
		\bignorm{\sum_{i=1}^{K}\bphi^i_h\Big( \btheta^\top(\hbr_h-\br_h)(s^i_h,a^i_h) \Big)}_{({\hLambda}_h)^{-1}} &\leq c\cdot \sqrt{\dlin \iota},\\
		|\sum_{k=1}^K \sum_{h=1}^H [\Pr_h \tV^{k}_{h+1}](s^{k}_{h},a^{k}_{h})-\tV^k_{h+1}(s^{k}_{h})| &\leq c \cdot H^2 \sqrt{K\iota},\\
	\end{cases}
\end{equation}
where $\iota = \log [\dlin dHK/\delta]$. We have $\Pr(E_2) \geq 1-\delta$.
\end{lemma}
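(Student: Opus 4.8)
The plan is to prove each of the four inequalities defining $E_2$ separately, each with failure probability at most $\delta/4$, and then conclude by a union bound (rescaling the constant $c$ and the logarithmic factor $\iota$ accordingly). The first two inequalities are the substantive ones: they are self-normalized deviation bounds whose integrands $\tV^k_{h+1}$ and $\hV_{h+1}$ are \emph{themselves functions of the data} $\{(s^i_h,a^i_h,s^i_{h+1})\}$, so the summands $\bphi^i_h(\tV^k_{h+1}(s^i_{h+1})-[\Pr_h\tV^k_{h+1}](s^i_h,a^i_h))$ do \emph{not} form a martingale difference sequence with respect to the natural filtration, and a direct application of a martingale bound is unavailable. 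The last two are comparatively routine: the third is a self-normalized bound on genuinely martingale reward noise, and the fourth is an Azuma--Hoeffding bound on a bounded martingale difference sequence.

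For the first inequality the key step is a uniform-concentration (covering) argument. First I would read off from Algorithm~\ref{alg:linear-exploration} that every $\tV^k_{h+1}$ lies in a parametric class $\mathcal{V}$ of functions of the form $\max_a \min\{\mathbf{w}^\top\bphi(\cdot,a)+\min\{\beta\sqrt{\bphi(\cdot,a)^\top A\,\bphi(\cdot,a)},H\},H\}$, indexed by a bounded-norm weight $\mathbf{w}\in\mathbb{R}^{\dlin}$ and a bounded-operator-norm PSD matrix $A$ (playing the role of $(\tLambda)^{-1}$). I would then build a sup-norm $\varepsilon_0$-net $\mathcal{N}$ of $\mathcal{V}$; a volumetric estimate gives $\log|\mathcal{N}|=\bigO(\dlin\log(1/\varepsilon_0)+\dlin^2\log(1/\varepsilon_0))$, the dominant $\dlin^2$ term coming from covering the matrix parameter $A$. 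For each \emph{fixed} $V\in\mathcal{N}$, the sequence $\bphi^i_h(V(s^i_{h+1})-[\Pr_h V](s^i_h,a^i_h))$ is a vector-valued martingale difference with bounded increments, so the self-normalized (elliptical-potential) concentration inequality of Abbasi-Yadkori et al., as used in \cite{jin2020provably,wang2020reward}, bounds its $(\tLambda^k_h)^{-1}$-norm by $\bigO(H\sqrt{\dlin\log(\cdot)+\log(1/\delta')})$. Taking $\delta'=\delta/(|\mathcal{N}|HK)$ injects $\log|\mathcal{N}|=\bigO(\dlin^2\log(1/\varepsilon_0))$ into the confidence term, whose $\dlin^2$ dominates and produces the claimed $\bigO(H\sqrt{\dlin^2\iota})$ scaling after a union bound over $h,k$. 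Finally I would transfer the bound from the net to the actual $\tV^k_{h+1}$: replacing it by its nearest net element perturbs each summand by $\bigO(\varepsilon_0)$ in both its realized and its conditional-mean term, so choosing $\varepsilon_0=\mathrm{poly}(1/K)$ makes the accumulated error lower order in the weighted norm. The second inequality is structurally identical, applied to the planning-phase value functions $\hV_{h+1}$ and the matrix $\hLambda_h$, with the single addition that $\hV_{h+1}$ also depends on the input $\btheta$; I would therefore additionally place a net over $\btheta\in\cB(1)$, whose $\bigO(d\log(1/\varepsilon_0))$ log-cardinality is absorbed into $\iota=\log[\dlin d H K/\delta]$.

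For the third inequality the crucial observation is that $\btheta^\top(\hbr_h-\br_h)(s^i_h,a^i_h)$ equals the scalar reward noise $\btheta^\top(\br^i_h-\br_h(s^i_h,a^i_h))$, which is mean zero given the trajectory and bounded by $2$ (both samples and means lie in $\cB(1)$ and $\norm{\btheta}\le 1$). Since there is no data-dependent value function to cover, the self-normalized bound applies directly with variance proxy $\bigO(1)$, giving $\bigO(\sqrt{\dlin\iota})$ per fixed $\btheta$; a single net over $\btheta$ only adds $\bigO(d\log(1/\varepsilon_0))$ to the confidence term, which stays inside $\iota$, so the dimensional factor remains $\dlin$ rather than $\dlin^2$ --- precisely the stated $\sqrt{\dlin\iota}$. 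For the fourth inequality, the summands $[\Pr_h\tV^k_{h+1}](s^k_h,a^k_h)-\tV^k_{h+1}(s^k_{h+1})$ (realized minus expected next-state value) form a martingale difference sequence of length $KH$ with increments $\bigO(H)$, so Azuma--Hoeffding yields a deviation of $\bigO(H\sqrt{KH\iota})\le\bigO(H^2\sqrt{K\iota})$, as claimed. I expect the main obstacle to be the first two inequalities: pinning the covering number of $\mathcal{V}$ at $\dlin^2$ (and no worse) while simultaneously showing that the sup-norm approximation error propagates harmlessly through the $(\tLambda^k_h)^{-1}$-weighted norm of a length-$K$ sum; once this machinery is set up, the remaining claims follow from standard concentration.
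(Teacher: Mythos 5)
Your overall strategy---a self-normalized deviation bound for each \emph{fixed} function or vector, a covering argument, a union bound, and Azuma--Hoeffding for the last line---is exactly the route the paper takes: its own proof of Lemma~\ref{lem:event2} is a two-sentence pointer to Lemma B.3 of \cite{jin2020provably} and Lemma A.1 of \cite{wang2020reward} plus the same martingale observation for the fourth inequality. Your treatment of the first and fourth inequalities is correct and is essentially the cited argument written out: the $\bigO(\dlin^2\log(1/\varepsilon_0))$ covering number of the class $\max_a\min\{\w^\top\bphi(\cdot,a)+\min\{\beta\sqrt{\bphi(\cdot,a)^\top A\,\bphi(\cdot,a)},H\},H\}$ is what produces the $H\sqrt{\dlin^2\iota}$ rate (here $\log(1/\varepsilon_0)=\bigO(\log K)$ really is absorbed into $\iota$), and your single lexicographically-ordered martingale difference sequence for the double sum is a valid (even slightly sharper) variant of the paper's fixed-$h$ argument.

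There is, however, a genuine gap in how you handle the $\btheta$-dependence in the second and third inequalities. You assert that a net over $\btheta\in\cB(1)\subset\mathbb{R}^d$ contributes $\bigO(d\log(1/\varepsilon_0))$ which ``stays inside $\iota$'' / ``is absorbed into $\iota$''. It is not: $\iota=\log[\dlin d HK/\delta]$ is only \emph{logarithmic} in $d$, whereas the union bound over such a net adds $d\log(1/\varepsilon_0)$ \emph{additively} inside the square root. As written, your argument proves $\bigO(H\sqrt{\dlin^2\iota+d\log K})$ for the second inequality and $\bigO(\sqrt{\dlin\iota+d\log K})$ for the third, which match the stated bounds only when $d=\bigO(\dlin^2)$ (respectively $d=\bigO(\dlin)$); for larger $d$ this leaks polynomial $d$-dependence into $\beta$ and hence into the final sample complexity, contradicting the theorem being proved. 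The two lines need different repairs. For the second inequality no $\btheta$-net is needed at all: by Lines~\ref{line:linear_w}--\ref{line:linear_Q} of Algorithm~\ref{alg:linear-planning}, $\btheta$ enters $\hV_{h+1}$ only through the weight vector $\hbw_h$, so for \emph{every} $\btheta$ the function $\hV_{h+1}$ lies in the same $(\w,A)$-parameterized class you already covered for the first inequality; covering that class once suffices and gives the claimed $d$-free bound. For the third inequality the $\btheta$-dependence is intrinsic and a net over $\cB(1)$ is the wrong tool: instead, use linearity to rewrite $\sup_{\btheta\in\cB(1)}$ of the left-hand side as the operator norm of the $\dlin\times d$ matrix $(\hLambda_h)^{-1/2}\sum_{i=1}^K\bphi^i_h\,(\br^i_h-\br_h(s^i_h,a^i_h))^\top$, place the net on the \emph{feature-space} ($\dlin$-dimensional) side, and control, for each fixed direction $\w$, the $d$-dimensional martingale sum $\sum_i(\w^\top\bphi^i_h)(\br^i_h-\br_h(s^i_h,a^i_h))$ via norm-subGaussian concentration (the variance-adaptive form of Lemma~\ref{lem:concentration-l2vector}), which exploits $\norm{\br^i_h-\br_h(s^i_h,a^i_h)}\le 2$ and costs only $\log d$. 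This yields the stated $c\sqrt{\dlin\iota}$. To be fair, the paper's one-line proof (``covering argument over the value functions or $\btheta$'') glosses over the very same point, since the lemmas it cites concern scalar rewards; but your write-up makes the incorrect absorption claim explicit, so it must be fixed rather than inherited.
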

\begin{proof}[Proof of Lemma~\ref{lem:event2}]
	The first three inequalities follow from the standard concentration inequalities of the self-normalized process, a covering argument over the value functions or $\btheta$, and union bound. We refer readers to the proofs of Lemma B.3 in \cite{jin2020provably} or Lemma A.1 in \cite{wang2020reward} for details. The last inequality follows immediately from Azuma-Hoeffding's inequality since for a fixed $h$, $\{[\Pr_h \tV^{k}_{h+1}](s^{k}_{h},a^{k}_{h})-\tV^k_{h+1}(s^{k}_{h})\}_{k \in [K]}$ is a martingale difference sequence bounded by $H$.
\end{proof}

The following lemma shows that $\tV^k_1$ (defined in Algorithm~\ref{alg:linear-exploration}) is optimistic with respect to reward function $\tr^k$. In addition, it shows its sum over $k$ can be controlled by $\widetilde{\mathcal{O}}(\sqrt{\dlin^3H^4K})$.

\begin{lemma}
\label{lem:app_linear_1} 
Suppose Assumption~\ref{asm:linear} and event $E_2$ (defined in Lemma~\ref{lem:event2}) hold; we have
\begin{equation*}
\begin{aligned}
	&V_1^{\star}(s_1 \mid \tr^k) \leq \tV^k_1(s_1) \quad \forall k \in [K]\\
	&\sum_{k=1}^k \tV^k_1(s_1) \leq \bigO\Big(\sqrt{\dlin^3H^4K\iota^2}\Big) 	
\end{aligned}
\end{equation*}
\end{lemma}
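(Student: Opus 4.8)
The plan is to prove the two claims by the standard optimism-in-linear-MDP machinery (as in LSVI-UCB of \cite{jin2020provably} and its reward-free variant \cite{wang2020reward}), specialized here to the exploration reward $\tr^k_h = \tu^k_h/H$. The cornerstone is a one-step least-squares error bound: under $E_2$, for every $(k,h,s,a)$,
\[
\big| (\tbw^k_h)^\top \bphi(s,a) - [\Pr_h \tV^k_{h+1}](s,a)\big| \le \tu^k_h(s,a).
\]
To establish this I would exploit the linear structure: because $\Pr_h(\cdot\mid s,a)=\langle \bmu_h(\cdot),\bphi(s,a)\rangle$, the map $[\Pr_h \tV^k_{h+1}](s,a)=\langle \w^k_h,\bphi(s,a)\rangle$ is exactly linear in $\bphi$, where $\w^k_h$ is the weight induced by $\bmu_h$ and satisfies $\norm{\w^k_h}\le H\sqrt{\dlin}$ thanks to $\norm{\bmu_h(\cS)}\le\sqrt{\dlin}$. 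Writing $\tbw^k_h - \w^k_h = (\tLambda^k_h)^{-1}\big[\sum_i \bphi^i_h(\tV^k_{h+1}(s^i_{h+1})-[\Pr_h \tV^k_{h+1}](s^i_h,a^i_h)) - \w^k_h\big]$, I would bound $|\langle \tbw^k_h-\w^k_h,\bphi\rangle| \le \norm{\bphi}_{(\tLambda^k_h)^{-1}}\cdot\norm{\tbw^k_h-\w^k_h}_{\tLambda^k_h}$ by Cauchy--Schwarz in the $\tLambda^k_h$-geometry, then control $\norm{\tbw^k_h-\w^k_h}_{\tLambda^k_h}$ by the triangle inequality into (i) the self-normalized noise term, which is $\le cH\sqrt{\dlin^2\iota}$ by the first inequality of $E_2$, and (ii) $\norm{\w^k_h}_{(\tLambda^k_h)^{-1}}\le\norm{\w^k_h}\le H\sqrt{\dlin}$ using $\tLambda^k_h\succeq I$. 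Matching the sum against $\tu^k_h=\min\{\beta\norm{\bphi}_{(\tLambda^k_h)^{-1}},H\}$ fixes the choice $\beta = c_\beta\dlin H\sqrt{\iota}$.

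Given this, optimism ($V^\star_1(s_1\mid \tr^k)\le\tV^k_1(s_1)$) follows by backward induction on $h$, showing $Q^\star_h(s,a\mid\tr^k)\le\tQ^k_h(s,a)$. The base case $h=H+1$ is trivial. For the step, the Bellman equation $Q^\star_h(s,a\mid\tr^k)=\tr^k_h(s,a)+[\Pr_h V^\star_{h+1}(\cdot\mid\tr^k)](s,a)$, the induction hypothesis $V^\star_{h+1}\le\tV^k_{h+1}$ with monotonicity of $\Pr_h$, and the one-step bound give $Q^\star_h(s,a\mid\tr^k)\le \tr^k_h(s,a)+(\tbw^k_h)^\top\bphi(s,a)+\tu^k_h(s,a)$; since $Q^\star_h\le H$, the clipping in $\tQ^k_h=\min\{\cdot,H\}$ is harmless and the bound transfers to $\tQ^k_h$, whence taking $\max_a$ gives $V^\star_h\le\tV^k_h$.

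For the sum bound I would unroll $\tV^k_1$ along the sampled trajectory $(s^k_h,a^k_h)_h$. Using $\tV^k_h(s^k_h)=\tQ^k_h(s^k_h,a^k_h)\le (\tbw^k_h)^\top\bphi^k_h+\tr^k_h(s^k_h,a^k_h)+\tu^k_h(s^k_h,a^k_h)$ together with the one-step bound and $\tr^k_h=\tu^k_h/H$, I get the per-step inequality $\tV^k_h(s^k_h)\le \tV^k_{h+1}(s^k_{h+1})+\xi^k_h+(2+1/H)\tu^k_h(s^k_h,a^k_h)$, where $\xi^k_h=[\Pr_h\tV^k_{h+1}](s^k_h,a^k_h)-\tV^k_{h+1}(s^k_{h+1})$ is a martingale difference. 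Telescoping over $h$ (with $\tV^k_{H+1}=0$) and summing over $k$ bounds $\sum_k\tV^k_1(s_1)$ by $\sum_{k,h}\xi^k_h$ plus $3\sum_{k,h}\tu^k_h(s^k_h,a^k_h)$. The martingale sum is $\bigO(H^2\sqrt{K\iota})$ by the fourth inequality of $E_2$ (a lower-order term), while the bonus sum is handled by Cauchy--Schwarz over $k$ followed by the elliptical-potential (log-determinant) lemma, giving $\sum_k\min\{\norm{\bphi^k_h}^2_{(\tLambda^k_h)^{-1}},1\}\le \bigO(\dlin\iota)$ for each $h$; this yields $\sum_{k,h}\tu^k_h\le \bigO(H\beta\sqrt{\dlin K\iota})=\bigO(\sqrt{\dlin^3H^4K\iota^2})$, which dominates and gives the claim.

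I expect the main obstacle to be the one-step error bound --- specifically, calibrating $\beta$ so that it simultaneously absorbs the self-normalized concentration term $cH\sqrt{\dlin^2\iota}$ from $E_2$ and the regularization bias $\norm{\w^k_h}\le H\sqrt{\dlin}$ while keeping the $\dlin$-dependence tight, and carefully handling the $\min\{\cdot,H\}$ clipping of both $\tu^k_h$ and $\tQ^k_h$ so that the optimism direction is preserved even when the bonus saturates. The sum bound is then a routine elliptical-potential argument, and the only care needed is that the $\tr^k_h=\tu^k_h/H$ contribution merges with the bonus without inflating the rate.
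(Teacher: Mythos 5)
Your proposal follows essentially the same route as the paper's proof: the same cornerstone least-squares error decomposition (splitting $\bphi(s,a)^\top\tbw^k_h - [\Pr_h\tV^k_{h+1}](s,a)$ into the self-normalized noise term controlled by the first inequality of $E_2$ and the regularization bias $\norm{\bbw^k_h}\le H\sqrt{\dlin}$ via $\tLambda^k_h\succeq I$), the same backward induction for optimism, and the same telescoping argument with the martingale term handled by the last inequality of $E_2$ and the bonus sum by Cauchy--Schwarz plus the elliptical-potential lemma. The only cosmetic discrepancy is that you state the one-step bound with the clipped bonus $\tu^k_h(s,a)$ on the right-hand side, whereas the decomposition actually yields (and the paper uses) the unclipped $\beta\norm{\bphi(s,a)}_{(\tLambda^k_h)^{-1}}$; since you explicitly flag the saturation case as requiring care, this is immaterial.
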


\begin{proof}[Proof of Lemma~\ref{lem:app_linear_1}]
	Let ${\bbw}^k_h=\int \tV^k_{h+1}(s') \mathrm{d}\bmu_h(s')$; by Assumption~\ref{asm:linear}, we have 
	\begin{equation}
	\begin{aligned}
		&\norm{{\bbw}^k_h} \leq H\norm{\bmu_h(\cS)}\leq H\sqrt{\dlin}\\
		&[\Pr_h \tV^k_{h+1}](s,a) = \bphi(s,a)^\top {\bbw}^k_h \quad \forall (s,a) \in \cS \times \cA
	\end{aligned}
	\end{equation}
	For all $k,h,s,a \in [K] \times [H] \times \cS \times \cA$, we have
	\begin{equation*}
	\begin{aligned}
		&\bphi(s,a)^\top {\tbw}^k_h - [\Pr_h \tV^k_{h+1}](s,a)\\
		&=\bphi(s,a)^\top[{\tbw}^k_h - {\bbw}^k_h]\\
		&=\bphi(s,a)^\top ({\tLambda}^k_h)^{-1}\Big(\sum_{i=1}^{k-1}\bphi^i_h\tV^k_{h+1}(s^i_{h+1})- {\tLambda}^k_h {\bbw}^k_h\Big)\\
		&=\bphi(s,a)^\top ({\tLambda}^k_h)^{-1}\Big(\sum_{i=1}^{k-1}\bphi^i_h\tV^k_{h+1}(s^i_{h+1})- \sum_{i=1}^{k-1} \bphi^i_h (\bphi^i_h)^\top{\bbw}^k_h - {\bbw}^k_h\Big).
	\end{aligned}
	\end{equation*}
	Note that $(\bphi^i_h)^\top {\bbw}^k_h = [\Pr_h \tV^k_{h+1}](s^i_h,a^i_h)$. Therefore, we have
	\begin{equation*}
	\begin{aligned}
		&|\bphi(s,a)^\top {\tbw}^k_h - [\Pr_h \tV^k_{h+1}](s,a)|\\
		&=\Big|\bphi(s,a)^\top ({\tLambda}^k_h)^{-1}\Big[\sum_{i=1}^{k-1}\bphi^i_h\Big(\tV^k_{h+1}(s^i_{h+1})- [\Pr_h \tV^k_{h+1}](s^i_h,a^i_h)\Big)- {\bbw}^k_h\Big]\Big|\\
		&\leq \Big| \bphi(s,a)^\top ({\tLambda}^k_h)^{-1}\Big[\sum_{i=1}^{k-1}\bphi^i_h\Big(\tV^k_{h+1}(s^i_{h+1})- [\Pr_h \tV^k_{h+1}](s^i_h,a^i_h)\Big) \Big]\Big| + | \bphi(s,a)^\top ({\tLambda}^k_h)^{-1} {\bbw}^k_h |\\
		&\leq \bignorm{\sum_{i=1}^{k-1}\bphi^i_h\Big(\tV^k_{h+1}(s^i_{h+1})- [\Pr_h \tV^k_{h+1}](s^i_h,a^i_h)\Big)}_{({\tLambda}^k_h)^{-1}}\cdot \norm{\bphi(s,a)}_{({\tLambda}^k_h)^{-1}} + \norm{{\bbw}^k_h}_{({\tLambda}^k_h)^{-1}} \cdot \norm{\bphi(s,a)}_{({\tLambda}^k_h)^{-1}}.
	\end{aligned}
	\end{equation*}
	Note that $\norm{{\bbw}^k_h}_{({\tLambda}^k_h)^{-1}} \leq \norm{{\bbw}^k_h} \leq H\sqrt{\dlin}$ since $ {\tLambda}^k_h \succeq I$. By event $E_2$ we have $\bignorm{\sum_{i=1}^{k-1}\bphi^i_h\Big(\tV^k_{h+1}(s^i_{h+1})- [\Pr_h \tV^k_{h+1}](s^i_h,a^i_h)\Big)}_{({\tLambda}^k_h)^{-1}} \leq c\cdot H\sqrt{\dlin^2\iota}$. Plugging back, results in 
		\begin{equation}
		\label{eq:app_linear_eq1}
	\begin{aligned}
		&|\bphi(s,a)^\top {\tbw}^k_h - [\Pr_h \tV^k_{h+1}](s,a)|\\
		&\leq (H\sqrt{\dlin}+c\cdot H\sqrt{\dlin^2\iota}) \norm{\bphi(s,a)}_{({\tLambda}^k_h)^{-1}}\\
		&\leq (c_\beta \cdot H\sqrt{\dlin^2 \iota})\norm{\bphi(s,a)}_{({\tLambda}^k_h)^{-1}}\\
		&= \beta\norm{\bphi(s,a)}_{({\tLambda}^k_h)^{-1}}
	\end{aligned}
	\end{equation}
	Now we are ready to complete the proof:
	\paragraph{First claim:} we prove the claim 
	\begin{equation*}
		V_h^{\star}(s \mid \tr^k) \leq \tV^k_h(s) \quad \forall s \in \cS,
	\end{equation*}
	by backward induction on $h$. For $h=H+1$ the claim is trivial since both LHS and RHS are zero. Now suppose that we have
	\begin{equation*}
		V_{h+1}^{\star}(s \mid \tr^k) \leq \tV^k_{h+1}(s) \quad \forall s \in \cS.
	\end{equation*}
	Then, for all $s \in \cS$ we have
	\begin{equation*}
	\begin{aligned}
		V_{h}^{\star}(s \mid \tr^k) &= \max_{a \in \cA} Q^{\star}_h(s \mid \tr^k)\\
		&= \max_{a \in \cA} \{ \min \{ \tr^k_h(s,a) + [\Pr_h V^{\star}_{h+1}](s,a \mid \tr^k),H\}\}\\
		&\leq \max_{a \in \cA} \{ \min\{ \tr^k_h(s,a) + [\Pr_h \tV^k_{h+1}](s,a),H\}\}\\
		&\leq \max_{a \in \cA}\{\min\{\tr^k_h(s,a) + \bphi(s,a)^\top {\tbw}^k_h + \beta\norm{\bphi(s,a)}_{({\tLambda}^k_h)^{-1}},H\}\}\\
		&\leq \max_{a \in \cA} \tQ^k_h(s,a) = \tV^k_h(s),
	\end{aligned}
	\end{equation*}
	where the first inequality is due to induction hypothesis and the second inequality is due to Equation~\ref{eq:app_linear_eq1}. It proves the induction step and completes the induction.
	\paragraph{Second claim:} Let 
	\begin{equation*}
		\zeta^k_h = [\Pr_h \tV^{k}_{h+1}](s^{k}_{h},a^{k}_{h})-\tV^k_{h+1}(s^{k}_{h}) \quad \forall (k,h) \in [K] \times [H]
	\end{equation*}
	we have 
	\begin{equation*}
		\begin{aligned}
			\sum_{k=1}^K \tV^k_1(s^k_1) &\leq \sum_{k=1}^K \big((\tr^k_1+u^k_1)(s^k_1,a^k_1)+(\bphi^k_1)^\top {\tbw}^k_1\big)\\
			&= \sum_{k=1}^K \big((1+1/H)\beta \cdot \norm{\bphi(s,a)}_{({\tLambda}^k_1)^{-1}} +(\bphi^k_1)^\top {\tbw}^k_1\big)\\
			&\leq  \sum_{k=1}^K \big((2+1/H)\beta \cdot \norm{\bphi(s,a)}_{({\tLambda}^k_1)^{-1}} + [\Pr_1\tV^k_{2}][s^k_1,a^k_1]\big)\\
			&\leq \sum_{k=1}^K \big( \tV^k_2(s^k_2) +(2+1/H)\beta \cdot \norm{\bphi(s,a)}_{({\tLambda}^k_1)^{-1}} + \zeta^k_1 \big)
		\end{aligned}
	\end{equation*}
	By repeatedly applying the same argument we get
	\begin{equation*}
		\sum_{k=1}^K \tV^k_1(s^k_1) \leq (2+1/H)\beta \underbrace{\sum_{k=1}^K\sum_{h=1}^H \norm{\bphi(s,a)}_{({\tLambda}^k_h)^{-1}}}_{(T_1)} + \underbrace{\sum_{k=1}^K\sum_{h=1}^H \zeta^k_h}_{(T_2)}. 
	\end{equation*}
	For the term $(T_1)$ we have
	\begin{equation*}
	\begin{aligned}
		T_1 &= \sum_{k=1}^K\sum_{h=1}^H \norm{\bphi(s,a)}_{({\tLambda}^k_1)^{-1}} \\
		&\overset{(i)}{\leq} \sqrt{KH\sum_{k=1}^K\sum_{h=1}^H (\phi^k_h)^\top ({\tLambda}^k_h)(\phi^k_h) }\\
		&\overset{(ii)}{\leq} \sqrt{KH (2\dlin H\log(K))},\\
	\end{aligned}
	\end{equation*}
	where $(i)$ uses Cauchy-Schwarz, and $(ii)$ uses Lemma D.2 in \citet{jin2020provably} that implies $\sum_{k=1}^K\sum_{h=1}^H (\phi^k_h)^\top ({\tLambda}^k_h)(\phi^k_h) \leq 2\dlin H\log(K)$.
	
	For the term $(T_2)$, by the third inequality in event $E_2$, we have
	\begin{equation*}
		T_2 \leq c \cdot H^2 \sqrt{K\iota}.
	\end{equation*} 
	Plugging back in the original equation gives us
	\begin{equation*}
		\begin{aligned}
			&\sum_{k=1}^K \tV^k_1(s^k_1)\\ &\leq (2+1/H)\beta \cdot \sqrt{KH (2\dlin H\log(K))} + c \cdot H^2 \sqrt{K\iota}\\
			&\leq c' \sqrt{\dlin^3H^4K\iota^2},
		\end{aligned}
	\end{equation*}
	for some absolute constant $c'$, which completes the proof of the lemma.
\end{proof}

\begin{lemma}
\label{lem:app_linear_2}
Suppose Assumption~\ref{asm:linear} and event $E_2$ (defined in Lemma~\ref{lem:event2}) hold; Let $\hu=\{\hu_h\}_{h=1}^H$ (as defined in Line~\ref{line:linear_u} of Algorithm~\ref{alg:linear-planning}), we have
\begin{equation*}
	V^\star_1(s_1 \mid \hu/H) \leq \bigO\Big(\sqrt{\dlin^3H^4\iota^2/K}\Big)
\end{equation*}
\end{lemma}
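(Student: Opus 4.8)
The plan is to compare the planning-phase bonus $\hu$ against the exploration-phase rewards $\tr^k=\tu^k/H$, and then piggyback on Lemma~\ref{lem:app_linear_1}, which already controls the running sum $\sum_k \tV^k_1(s_1)$. The essential observation is that $\hu$ is computed from \emph{all} $K$ episodes while each $\tu^k$ uses only the first $k-1$, so $\hu$ should be dominated by every $\tu^k$.

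First I would establish this domination at the level of Gram matrices. Since
\[
  \hLambda_h=\textstyle\sum_{i=1}^K \bphi(s^i_h,a^i_h)\bphi(s^i_h,a^i_h)^\top+I \;\succeq\; \sum_{i=1}^{k-1}\bphi(s^i_h,a^i_h)\bphi(s^i_h,a^i_h)^\top+I=\tLambda^k_h
\]
for every $k\in[K]$, we get $(\hLambda_h)^{-1}\preceq(\tLambda^k_h)^{-1}$ and hence $\bphi(\cdot,\cdot)^\top(\hLambda_h)^{-1}\bphi(\cdot,\cdot)\le \bphi(\cdot,\cdot)^\top(\tLambda^k_h)^{-1}\bphi(\cdot,\cdot)$. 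Because $x\mapsto\min\{\beta\sqrt{x},H\}$ is nondecreasing, this yields the pointwise bound $\hu_h(\cdot,\cdot)\le\tu^k_h(\cdot,\cdot)$ for all $h\in[H]$ and all $k\in[K]$; equivalently $\hu_h/H\le\tr^k_h$ pointwise.

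Next I would use monotonicity of the optimal value function in the reward. The reward functions $\hu/H$ and $\tr^k$ take values in $[0,1]$ and share the same transition dynamics $\Pr$, so a routine backward-induction argument shows $r\le r'$ pointwise implies $V^\star_h(\cdot\mid r)\le V^\star_h(\cdot\mid r')$. Applying this with $r=\hu/H$ and $r'=\tr^k$ gives $V^\star_1(s_1\mid\hu/H)\le V^\star_1(s_1\mid\tr^k)$ for every $k\in[K]$. Combining this with the optimism conclusion of Lemma~\ref{lem:app_linear_1}, namely $V^\star_1(s_1\mid\tr^k)\le\tV^k_1(s_1)$, gives $V^\star_1(s_1\mid\hu/H)\le\tV^k_1(s_1)$ for each $k$.

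Finally, since the left-hand side does not depend on $k$, I would average over $k$ and invoke the aggregate bound $\sum_{k=1}^K\tV^k_1(s_1)\le\bigO(\sqrt{\dlin^3H^4K\iota^2})$ from Lemma~\ref{lem:app_linear_1}:
\[
  V^\star_1(s_1\mid\hu/H)=\frac{1}{K}\sum_{k=1}^K V^\star_1(s_1\mid\hu/H)\le\frac{1}{K}\sum_{k=1}^K\tV^k_1(s_1)\le\bigO\Big(\sqrt{\dlin^3H^4\iota^2/K}\Big),
\]
which is the claimed bound. The proof is short; the only real subtlety is lining up the planning-phase objects $\hLambda_h,\hu_h$ with their exploration-phase counterparts so that the single bonus from all $K$ episodes is dominated by each running bonus $\tu^k$, after which everything reduces to value-function monotonicity and the averaging trick on top of Lemma~\ref{lem:app_linear_1}.
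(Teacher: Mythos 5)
Your proposal is correct and follows essentially the same route as the paper's proof: dominate $\hu_h/H$ by each $\tr^k_h$ via $\hLambda_h \succeq \tLambda^k_h$, invoke monotonicity of the optimal value in the reward together with the optimism and aggregate bounds from Lemma~\ref{lem:app_linear_1}, and average over $k$. The only difference is that you spell out the monotonicity-in-reward step explicitly, which the paper leaves implicit.
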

\begin{proof}[Proof of Lemma~\ref{lem:app_linear_2}]
Note that $\hLambda_h \succeq {\tLambda}^k_h$ for all $k \in [K]$. Therefore for all $h \in [H]$ and $(s,a) \in \cS \times \cA$, we have
\begin{equation*}
	\hu_h(s,a)/H \leq \tu^k_h(s,a)/H = \tr^k_h(s,a) 
\end{equation*}
Using Lemma~\ref{lem:app_linear_1} we have
\begin{equation*}
\begin{aligned}
	K V^\star_1(s_1 \mid \hu/H) &\leq  \sum_{k=1}^K V^*_1(s_1 \mid \tr^k) \\
	&\leq  \sum_{k=1}^K \tV^k_1(s_1)\\
	&\leq \bigO\Big(\sqrt{\dlin^3H^4K\iota^2}\Big). \\
\end{aligned}
\end{equation*}
Dividing both sides by $K$ completes the proof.
\end{proof}

\begin{lemma}
\label{lem:app_linear_3}
Suppose Assumption~\ref{asm:linear} and event $E_2$ (defined in Lemma~\ref{lem:event2}) hold. For all $(s,a,h,\btheta) \in \cS \times \cA \times [H] \times \cB(1)$ we have
\begin{equation*}
	Q^*_h(s,a;\btheta) \leq \hQ_h(s,a) \leq \btheta^\top \br_h(s,a) + [\Pr_h \hV_{h+1}](s,a) + 2\hu_h(s,a).
\end{equation*}
\end{lemma}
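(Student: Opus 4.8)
The plan is to reduce both inequalities to a single one-step bound that compares the empirical weight vector $\hbw_h$ computed in Algorithm~\ref{alg:linear-planning} against the ``true'' weight vector representing $\btheta^\top\br_h + \Pr_h\hV_{h+1}$ as a linear function of $\bphi$, and then to close the two-sided estimate by a backward induction on $h$. First I would introduce the true weights: under Assumption~\ref{asm:linear}, setting $\bbw_h := \int \hV_{h+1}(s')\,\dd\bmu_h(s')$ and $\bw^r_h := W_h^\top\btheta$, we get $[\Pr_h\hV_{h+1}](s,a) = \bphi(s,a)^\top\bbw_h$ and $\langle\btheta,\br_h(s,a)\rangle = \bphi(s,a)^\top\bw^r_h$ for every $(s,a)$, with $\norm{\bbw_h}\le H\sqrt{\dlin}$ and $\norm{\bw^r_h}\le\norm{W_h}\,\norm{\btheta}\le\sqrt{\dlin}$. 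Hence $\btheta^\top\br_h(s,a)+[\Pr_h\hV_{h+1}](s,a)=\bphi(s,a)^\top(\bbw_h+\bw^r_h)$, and everything rests on controlling $\bphi(s,a)^\top(\hbw_h-\bbw_h-\bw^r_h)$.

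The crux, and the step I expect to be the main obstacle, is the one-step deviation bound
\[
\big|\bphi(s,a)^\top\hbw_h - \btheta^\top\br_h(s,a) - [\Pr_h\hV_{h+1}](s,a)\big| \;\le\; \beta\,\norm{\bphi(s,a)}_{(\hLambda_h)^{-1}}.
\]
To derive it I would substitute $\hbw_h=(\hLambda_h)^{-1}\sum_i\bphi^i_h[\hV_{h+1}(s^i_{h+1})+\btheta^\top\br^i_h]$ and use the identities $\hLambda_h\bbw_h=\sum_i\bphi^i_h[\Pr_h\hV_{h+1}](s^i_h,a^i_h)+\bbw_h$ and $\hLambda_h\bw^r_h=\sum_i\bphi^i_h\,\btheta^\top\br_h(s^i_h,a^i_h)+\bw^r_h$ (the latter since $(\bphi^i_h)^\top\bw^r_h=\btheta^\top\br_h(s^i_h,a^i_h)$). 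Subtracting and left-multiplying by $\bphi(s,a)^\top(\hLambda_h)^{-1}$ produces three groups inside the norm: a transition-noise sum $\sum_i\bphi^i_h(\hV_{h+1}(s^i_{h+1})-[\Pr_h\hV_{h+1}](s^i_h,a^i_h))$, a reward-noise sum $\sum_i\bphi^i_h\,\btheta^\top(\hbr_h-\br_h)(s^i_h,a^i_h)$ (where $\hbr_h(s^i_h,a^i_h)$ is the observed sample $\br^i_h$), and the bias $\bbw_h+\bw^r_h$. Cauchy--Schwarz in the $(\hLambda_h)^{-1}$ norm bounds the first two by $cH\sqrt{\dlin^2\iota}$ and $c\sqrt{\dlin\iota}$ through the second and third inequalities of event $E_2$ (Lemma~\ref{lem:event2}), and the bias by $\norm{\bbw_h+\bw^r_h}\le(H+1)\sqrt{\dlin}$ since $\hLambda_h\succeq I$; summing these and taking $c_\beta$ large enough makes the total at most $\beta\,\norm{\bphi(s,a)}_{(\hLambda_h)^{-1}}$. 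The delicate point is that this must hold uniformly over all $\btheta\in\cB(1)$ and for the data-dependent function $\hV_{h+1}$, so the underlying self-normalized concentration must already fold in a covering over $\btheta$ and over the value-function class; this is precisely what $E_2$ packages (via the arguments cited from \cite{jin2020provably,wang2020reward}), and the remaining work is only routing the two noise sources to the correct inequalities of $E_2$.

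Given the one-step bound, both inequalities follow by backward induction on $h$, with the base case $h=H+1$ trivial. For the upper bound $\hQ_h\le\btheta^\top\br_h+\Pr_h\hV_{h+1}+2\hu_h$, I use $\hQ_h\le\bphi^\top\hbw_h+\hu_h$ together with the one-step bound and $\hu_h\le\beta\,\norm{\bphi}_{(\hLambda_h)^{-1}}$. For the optimism bound $Q^*_h(\cdot;\btheta)\le\hQ_h$, I feed the induction hypothesis $V^*_{h+1}(\cdot;\btheta)\le\hV_{h+1}$ into the scalarized Bellman identity to get $Q^*_h=\btheta^\top\br_h+\Pr_h V^*_{h+1}(\cdot;\btheta)\le\btheta^\top\br_h+\Pr_h\hV_{h+1}\le\bphi^\top\hbw_h+\beta\,\norm{\bphi}_{(\hLambda_h)^{-1}}$, then compare with $\hQ_h=\min\{\bphi^\top\hbw_h+\hu_h,H\}$: when the bonus is below $H$ (so $\hu_h=\beta\norm{\bphi}_{(\hLambda_h)^{-1}}$) the inequality is immediate, and when the bonus saturates the clip $\hQ_h=H$ together with $Q^*_h\le H$ closes it. Finally, propagating both bounds from $\hQ_h$ to $\hV_h=\max_a\hQ_h$, using monotonicity of the maximum and $V^*_h=\max_a Q^*_h$, completes the induction and hence the lemma.
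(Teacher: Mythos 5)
Your proposal follows the paper's proof essentially step for step: the same weight-vector representation from Assumption~\ref{asm:linear} (you merely keep the reward part $\bw^r_h=W_h^\top\btheta$ and transition part $\bbw_h$ as two vectors where the paper combines them into one), the same one-step deviation bound $\big|\bphi(s,a)^\top\hbw_h-\btheta^\top\br_h(s,a)-[\Pr_h\hV_{h+1}](s,a)\big|\le\beta\norm{\bphi(s,a)}_{(\hLambda_h)^{-1}}$ obtained from the ridge-regression identity with the transition-noise and reward-noise terms routed to the second and third inequalities of event $E_2$ and the bias term bounded by $\norm{\bbw_h+\bw^r_h}\le 2H\sqrt{\dlin}$, and the same direct argument for the upper bound together with backward induction for optimism. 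The one loose point in your write-up---the claim that $\hQ_h=H$ when the bonus saturates, which fails if $\bphi(s,a)^\top\hbw_h<0$---is exactly the corner case the paper's own proof glosses over (its step $\min\{\bphi^\top\hbw_h+\beta\norm{\bphi}_{(\hLambda_h)^{-1}},H\}\le\min\{\bphi^\top\hbw_h+\hu_h,H\}$ is invalid in the same regime, since $\hu_h\le\beta\norm{\bphi}_{(\hLambda_h)^{-1}}$), so your argument is faithful to, and no less rigorous than, the original.
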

\begin{proof}[Proof of Lemma~\ref{lem:app_linear_3}]
	First note that by Assumption~\ref{asm:linear}, we have $\br_h(s,a)=W_h\bphi(s,a)$. Define
	\begin{equation*}
		\bbw_h = \int \hV_{h+1}(s') \mathrm{d}\bmu_h(s') + \btheta^\top W_h.
	\end{equation*}
	By Assumption~\ref{asm:linear}, we have
	\begin{equation*}
	\begin{aligned}
		\norm{\bbw_h} &\leq \norm{\int \hV_{h+1}(s') \mathrm{d}\bmu_h(s')} + \norm{\btheta^\top W_h} \\ 
		&\leq H \norm{\bmu_h(\cS)} + \norm{\btheta} \norm{W_h}\\ 
		&\leq H\cdot\sqrt{\dlin}+\sqrt{\dlin} \leq 2H\sqrt{\dlin}.
	\end{aligned}
	\end{equation*}
	Therefore we have
	\begin{equation}
	\begin{aligned}
		&\norm{\bbw_h} \leq 2H\cdot\sqrt{\dlin}\\
		&[\Pr_h \hV_{h+1}](s,a)+\btheta^\top \br_h(s,a) = \bphi(s,a)^\top \bbw_h \quad \forall (s,a) \in \cS \times \cA
	\end{aligned}
	\end{equation}
	Now using similar argument in Lemma~\ref{lem:app_linear_1}, for all $(s,a,h,\btheta) \in \cS \times \cA \times [H] \times \cB(1)$ we can have
	\begin{equation*}
	\begin{aligned}
		&\big|\bphi(s,a)^\top {\hbw}_h - [\Pr_h \hV_{h+1}](s,a)-\btheta^\top \br_h(s,a)\big|\\
		&\leq \underbrace{\bignorm{\sum_{i=1}^{K}\bphi^i_h\Big( \hV_{h+1}(s^i_{h+1})-[\Pr_h\hV_{h+1}](s^i_h,a^i_h) \Big)}_{({\hLambda}_h)^{-1}}}_{(T_1)}\cdot \norm{\phi(s,a)}_{(\hLambda_h)^{-1}}\\
		&+ \underbrace{\bignorm{\sum_{i=1}^{K}\bphi^i_h\Big( \btheta^\top(\hbr_h-\br_h)(s^i_h,a^i_h) \Big)}_{({\hLambda}_h)^{-1}}}_{(T_2)}\cdot \norm{\phi(s,a)}_{(\hLambda_h)^{-1}}\\
		&+ \underbrace{\norm{{\bbw}^k_h}_{({\hLambda}_h)^{-1}}}_{(T_3)}\cdot \norm{\phi(s,a)}_{(\hLambda_h)^{-1}}\\
	\end{aligned}
	\end{equation*}
	Note that $(T_3)=\norm{{\bbw}_h}_{({\hLambda}_h)^{-1}} \leq \norm{{\bbw}_h} \leq 2H\sqrt{\dlin}$ since $ {\hLambda}_h \succeq I$. The other two terms $(T_1)$ and $(T_2)$ are both upper-bounded by $c\cdot  H\sqrt{\dlin^2\iota}$ due to event $E_2$. Plugging back results in
	\begin{equation}
	\begin{aligned}
	\label{eq:app_linear_eq2}
		&\big|\bphi(s,a)^\top {\hbw}_h - [\Pr_h \hV_{h+1}](s,a)-\btheta^\top \br_h(s,a)\big|\\
		&\leq \Big[2H\sqrt{\dlin}+2cH\sqrt{\dlin^2\iota} \Big]\norm{\phi(s,a)}_{(\hLambda_h)^{-1}}\\
		&\leq [c_\beta \cdot H\sqrt{\dlin^2 \iota}]\norm{\phi(s,a)}_{(\hLambda_h)^{-1}}\\	
		&=\beta \norm{\phi(s,a)}_{(\hLambda_h)^{-1}}.
	\end{aligned}
	\end{equation}
	Now we are ready to complete the proof of the lemma. For all $(s,a,h,\btheta) \leq \cS \times \cA \times [H] \times \cB(1)$, we have
	\begin{equation*}
	\begin{aligned}
		\hQ_h(s,a) &= \min\{\bphi(s,a)^\top\hbw_h +\hu_h(s,a) ,H\}\\
		&\leq \min\{[\Pr_h \hV_{h+1}](s,a)+ \btheta^\top \br_h(s,a) +2\beta \norm{\phi(s,a)}_{(\hLambda_h)^{-1}} ,H\}\\
		&\leq [\Pr_h \hV_{h+1}](s,a)+ \btheta^\top \br_h(s,a) + 2\min\{\beta \norm{\phi(s,a)}_{(\hLambda_h)^{-1}},H\}\\ 
		&= [\Pr_h \hV_{h+1}](s,a)+ \btheta^\top \br_h(s,a) + 2\hu_h(s,a),
	\end{aligned}
	\end{equation*}
	where the first inequality uses Equation~\ref{eq:app_linear_eq2}. It completes the proof for one side of the inequality in Lemma~\ref{lem:app_linear_3}. For the other side we prove the claim by backward induction on $h$. For $h=H+1$ we the claim is trivial. Now suppose that 
	\begin{equation*}
		Q^*_{h+1}(s,a;\btheta) \leq \hQ_{h+1}(s,a),
	\end{equation*}
	we want to prove the claim for $h$. We have
	\begin{equation*}
	\begin{aligned}
		Q^*_{h}(s,a;\btheta) &= \min\{\btheta^\top \br_h(s,a)+[\Pr_h V^*_{h+1}](s,a;\btheta),H\}\\
		&\overset{(i)}{\leq} \min\{\btheta^\top \br_h(s,a)+[\Pr_h \hV_{h+1}](s,a;\btheta),H\}\\
		&\overset{(ii)}{\leq}\min\{\bphi(s,a)^\top {\hbw}_h + \beta \norm{\phi(s,a)}_{(\hLambda_h)^{-1}} ,H\}\\
		&\leq \min\{\bphi(s,a)^\top {\hbw}_h + \min\{\beta \norm{\phi(s,a)}_{(\hLambda_h)^{-1}},H\} ,H\}\\
		&=\min\{\bphi(s,a)^\top {\hbw}_h + \hu_h(s,a) ,H\} = \hQ_h(s,a),
	\end{aligned}
	\end{equation*}
	where $(i)$ uses induction hypothesis, and $(ii)$ uses Equation~\ref{eq:app_linear_eq2}. It completes the proof of the lemma.
\end{proof}

\begin{proof}[Proof of Theorem~\ref{restate:thm:linear} (restatement of Theorem~\ref{thm:linear})] 
With probability at least $1-\delta$, event $E_2$ holds and we have
\begin{equation}
\label{eq:app_linear_eq3}
\begin{aligned}
	&\hV_1(s_1)-V^{\hpi}_1(s_1;\btheta) \\
	&= \hQ_1(s_1,\hpi_1(s_1))-Q^{\hpi}_1(s_1,\hpi_1(s_1);\btheta)\\
	&\overset{(i)}{\leq} \Big([\Pr_1 \hV_{2}](s_1,\hpi_1(s_1))+ \btheta^\top \br_1(s_1,\hpi_1(s_1)) + 2\hu_1(s_1,\hpi_1(s_1))\Big)\\
	&\hspace{6mm}-\Big(\btheta^\top \br_1(s_1,\hpi_1(s_1)) + [\Pr_1 V^{\hpi}_{2}](s_1,\hpi_1(s_1);\btheta) \Big)\\
	&=2\hu_1(s_1,\hpi_1(s_1))+\Big([\Pr_1 \hV_{2}](s_1,\hpi_1(s_1))-[\Pr_1 V^{\hpi}_{2}](s_1,\hpi_1(s_1);\btheta)\Big)\\
	&=2\hu_1(s_1,\hpi_1(s_1))+\E_{s_2 \sim \hpi}[\hV_2(s_2)-V^{\hpi}_2(s_2;\btheta)]\\
	&=\dots\\
	&=2\mathbb{E}_{\hpi}[\sum_{h=1}^H \hu_h(s_h,a_h)]\\
	&=2V_1^{\hpi}(s_1 \mid \hu),
\end{aligned}
\end{equation}
where $(i)$ is uses Lemma~\ref{lem:app_linear_3}. Therefore we have 
\begin{equation*}
\begin{aligned}
	&V_1^{\star}(s_1;\btheta)-V^{\hpi}_1(s_1;\btheta)\\
	&\overset{(i)}{\leq}  \hV_1(s_1) - V^{\hpi}_1(s_1;\btheta)\\
	&\overset{(ii)}{\leq} 2V_1^{\hpi}(s_1 \mid \hu)\\
	&\overset{(iii)}{\leq} 2V_1^{\star}(s_1 \mid \hu)\\
	&= 2H \cdot V_1^{\star}(s_1 \mid \hu/H)\\
	&\overset{(iv)}{\leq}\bigO\Big(\sqrt{\dlin^3H^6\iota^2/K}\Big)\\
	&\overset{(v)}{\leq} \epsilon,
\end{aligned}
\end{equation*}
where $(i)$ uses Lemma~\ref{lem:app_linear_3}, $(ii)$ uses Equation~\ref{eq:app_linear_eq3}, $(iii)$ uses definition of optimal value function, $(iv)$ uses Lemma~\ref{lem:app_linear_2}, and $(v)$ is due to $K \geq c_K [\dlin^3H^6(\iota')^2/\epsilon^2]$ with a sufficiently large constant $c_K$; It completes the proof.

\end{proof}

\section{Proof for Section~\ref{sec:mg}}
\label{app:mg}
In this section we provide proofs and missing details for Section~\ref{sec:mg}.
\subsection{Proof of Theorem~\ref{thm:meta-mg}}
Define $\bv^t = V^{\mu^t,\nu^t}_1(s_1)$ and note that $\E[\hbv^t]=\bv^t$. 
\begin{lemma}
\label{lem:event3}
	Define even $E_3$ to be:
\begin{equation*}
	\begin{cases}
		\norm{\frac{1}{T}\sum_{t=1}^T\bv^t-\hbv^t} \leq \bigO(\sqrt{dH^2\iota/T}), \\
		V^{\mu^t,\nu^t}_1(s_1;\btheta^t) - V^\star_1(s_1;\btheta^t) \leq  \epsilon/2 \quad \forall t \in [T].
	\end{cases}
\end{equation*}
where $\iota=\log(d/\delta)$. We have $\Pr(E_0) \geq 1-\delta$.
\end{lemma}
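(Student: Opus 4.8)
The plan is to mirror the two-part argument used for its single-agent counterpart, Lemma~\ref{lem:event0}: I would show that each of the two inequalities defining $E_3$ fails with probability at most $\delta/2$ and then conclude $\Pr(E_3)\ge 1-\delta$ by a union bound.

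For the first inequality I would set up the martingale structure exactly as before. Let $\mathcal{F}_t$ be the filtration capturing all randomness produced before the $t^{\text{th}}$ episode of Algorithm~\ref{alg:meta-mg}, so that both the planned min-player policy $\mu^t$ and the max-player's policy $\nu^t$ are $\mathcal{F}_t$-measurable. Since $\hbv^t$ is the sum of the $H$ vectorial returns observed in a single rollout of $\mu^t$ against $\nu^t$, we have $\E[\hbv^t\mid\mathcal{F}_t]=\bV^{\mu^t,\nu^t}_1(s_1)=\bv^t$ and $\norm{\hbv^t}\le H$ almost surely, because each per-step return lies in $\cB(1)$. Thus $\{\bv^t-\hbv^t\}_t$ is a bounded vector-valued martingale difference sequence, and applying the vector concentration bound Lemma~\ref{lem:concentration-l2vector} gives the first inequality of $E_3$ with probability at least $1-\delta/2$ (with $\iota=\log(d/\delta)$).

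For the second inequality I would invoke the reward-free guarantee for VMGs. By the choice $K=m_{\textsc{RFE}}(\epsilon/2,\delta/2)$, Definition~\ref{def:reward-free-mg} holds at accuracy $\epsilon/2$ with probability at least $1-\delta/2$; since $\mu^t$ is the min-player component of the planning output on input $\btheta^t$, this yields $V^{\mu^t,\dagger}_1(s_1;\btheta^t)-V^\star_1(s_1;\btheta^t)\le\epsilon/2$. Here I would use that the two bracketed terms in the decomposition of Definition~\ref{def:reward-free-mg} are individually nonnegative --- this follows from the Nash identities $V^{\mu,\dagger}_1=\max_\nu V^{\mu,\nu}_1\ge V^\star_1$ and $V^{\dagger,\nu}_1=\min_\mu V^{\mu,\nu}_1\le V^\star_1$ --- so that the total gap of at most $\epsilon/2$ forces each term to be at most $\epsilon/2$. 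Finally, since $\nu^t$ is a fixed max-player policy and $\dagger$ denotes the best response, $V^{\mu^t,\nu^t}_1(s_1;\btheta^t)\le\max_\nu V^{\mu^t,\nu}_1(s_1;\btheta^t)=V^{\mu^t,\dagger}_1(s_1;\btheta^t)$, and chaining the two bounds gives $V^{\mu^t,\nu^t}_1(s_1;\btheta^t)-V^\star_1(s_1;\btheta^t)\le\epsilon/2$ for every $t\in[T]$, which is the second inequality.

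The part demanding the most care is the first inequality: because the max-player may act adversarially, the identity $\E[\hbv^t\mid\mathcal{F}_t]=\bv^t$ is valid only if $\nu^t$ is committed before the episode is rolled out, so I would make this adaptivity convention explicit and verify that the concentration lemma applies to martingale (rather than i.i.d.) differences. The remainder is a routine transcription of the VMDP argument, with $V^{\mu^t,\dagger}_1$ playing the role that $V^{\pi^t}_1(\cdot;-\btheta^t)$ plays in Lemma~\ref{lem:event0}.
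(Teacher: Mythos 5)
Your proof is correct and takes essentially the same route as the paper's: the same union bound over the two claims, the same filtration/martingale setup with $\E[\hbv^t\mid\mathcal{F}_t]=\bv^t$, $\norm{\hbv^t}\le H$ and Lemma~\ref{lem:concentration-l2vector} for the first claim, and the same use of the VMG reward-free guarantee plus best-response inequalities for the second. Your step of splitting the gap in Definition~\ref{def:reward-free-mg} into two individually nonnegative terms is only a cosmetic rephrasing of the paper's direct chain $V^{\mu^t,\nu^t}_1(s_1;\btheta^t)-V^\star_1(s_1;\btheta^t)\le V^{\mu^t,\dagger}_1(s_1;\btheta^t)-V^{\dagger,\omega^t}_1(s_1;\btheta^t)\le \epsilon/2$, which rests on the identical facts $V^{\dagger,\omega^t}_1\le V^\star_1$ and $V^{\mu^t,\nu^t}_1\le V^{\mu^t,\dagger}_1$.
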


\begin{proof}[Proof of Lemma~\ref{lem:event3}]
We prove each claim holds with probability at least $1-\delta/2$; applying union bound completes the proof.
\paragraph{First claim.} Let $\mathcal{F}_t$ be the filtration capturing all the randomness in the algorithm before iteration $t$. We have $\E[\hbv^t \mid \mathcal{F}_t]=\bv^t$ and we also know that $\norm{\hbv^t} \leq H$ almost surely. By applying Lemma~\ref{lem:concentration-l2vector}, with probability at least $1-\delta$ we have
\begin{equation*}
	\norm{\frac{1}{T}\sum_{t=1}^T\bv^t-{\hbv^t}} \leq \bigO(\sqrt{H^2\log[d/\delta]/T}),
\end{equation*} 
which completes the proof.
\paragraph{Second claim.}
We have $K \geq m_\textsc{RFE}(\epsilon/2,\delta/2)$, therefore by probability at least $1-\delta/2$ (Definition~\ref{def:reward-free-mg}) we have
\begin{equation*}
	 V^{\mu^t,\dagger}_1(s_1;\btheta^t) - V^{\dagger,\omega^t}_1(s_1;\btheta^t) \leq \epsilon/2,
\end{equation*}
Since $(\mu^t,\omega^t)$ is the output of the planning phase. By definition of $V^\star$, $V^{\cdot,\dagger}$, and $V^{\dagger,\cdot}$, we further know that 
\begin{equation*}
\begin{aligned}
	 & V^\star_1(s_1;\btheta^t) =  \max_{\nu}V^{\dagger,\nu}_1(s_1;\btheta^t) \geq V^{\dagger,\omega^t}_1(s_1;\btheta^t)\\
	 &V^{\mu^t,\dagger}_1(s_1;\btheta^t) = \max_{\nu}V^{\mu^t,\nu}_1(s_1;\btheta^t) \geq V^{\mu^t,\nu^t}_1(s_1;\btheta^t)\\
\end{aligned}
\end{equation*}
Combining the three equations gives us,
\begin{equation*}
	V^{\mu^t,\nu^t}_1(s_1;\btheta^t) - V^\star_1(s_1;\btheta^t) \leq  V^{\mu^t,\dagger}_1(s_1;\btheta^t) - V^{\dagger,\omega^t}_1(s_1;\btheta^t) \leq \epsilon/2,
\end{equation*}
and completes the proof.
\end{proof}

\begin{lemma}
\label{lem:delta-approachability-mg}
	For any $\btheta \in \cB(1)$, we have
	\begin{equation*}
		V^*_1(s_1;\btheta) \leq \max_{\bx \in \cC} \langle \btheta,\bx \rangle + \max_{\nu}\min_{\mu}\dist(\bV^{\mu,\nu}_1(s_1),\cC)
	\end{equation*}
\end{lemma}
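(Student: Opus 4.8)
The plan is to mirror the VMDP argument of Lemma~\ref{lem:delta-approachability}, replacing the single minimization over $\pi$ by the min-max structure of the game, and to invoke the minimax theorem for the scalarized game $\cG_\btheta$ recorded in the preliminaries. The starting point is the Fenchel characterization of the distance in Equation~\ref{eq:dist-fenchel}: for any point $\bx$ and any fixed $\btheta \in \cB(1)$,
\begin{equation*}
	\dist(\bx,\cC) \geq \langle \btheta, \bx \rangle - \max_{\bx' \in \cC}\langle \btheta, \bx' \rangle,
\end{equation*}
since the right-hand side is one particular feasible choice in the maximization over $\cB(1)$ that defines $\dist(\bx,\cC)$.

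First I would apply this pointwise to $\bx = \bV^{\mu,\nu}_1(s_1)$ for every pair of policies $(\mu,\nu)$, using the identity $V^{\mu,\nu}_1(s_1;\btheta) = \langle \btheta, \bV^{\mu,\nu}_1(s_1)\rangle$ from the scalarized-game definition. This gives
\begin{equation*}
	\dist(\bV^{\mu,\nu}_1(s_1),\cC) \geq V^{\mu,\nu}_1(s_1;\btheta) - \max_{\bx' \in \cC}\langle \btheta, \bx' \rangle \qquad \forall (\mu,\nu).
\end{equation*}
The key observation is that the subtracted term $\max_{\bx'\in\cC}\langle\btheta,\bx'\rangle$ does not depend on $\mu$ or $\nu$, so it behaves as an additive constant under the subsequent min-max operations.

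Next I would push $\min_\mu$ and then $\max_\nu$ through both sides. Since the inequality holds for every $\mu$, monotonicity of the minimum gives, for each fixed $\nu$,
\begin{equation*}
	\min_\mu \dist(\bV^{\mu,\nu}_1(s_1),\cC) \geq \min_\mu V^{\mu,\nu}_1(s_1;\btheta) - \max_{\bx' \in \cC}\langle \btheta, \bx' \rangle;
\end{equation*}
taking $\max_\nu$ and again using monotonicity yields
\begin{equation*}
	\max_\nu\min_\mu \dist(\bV^{\mu,\nu}_1(s_1),\cC) \geq \max_\nu\min_\mu V^{\mu,\nu}_1(s_1;\btheta) - \max_{\bx' \in \cC}\langle \btheta, \bx' \rangle.
\end{equation*}
Finally I would invoke the minimax theorem for $\cG_\btheta$ stated in the preliminaries, which guarantees the existence of a Nash value and in particular $\max_\nu\min_\mu V^{\mu,\nu}_1(s_1;\btheta) = V^\star_1(s_1;\btheta)$; substituting this and rearranging the last display gives exactly the claimed inequality.

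The argument carries no substantial obstacle: the only point requiring care is ensuring the $\min$/$\max$ are applied in the direction that preserves the inequality, i.e. that we are lower-bounding the left-hand side, so each step uses the fact that $f \ge g$ pointwise implies $\min f \ge \min g$ and $\max f \ge \max g$, together with the correct citation of the Nash value so that the ordered $\max_\nu\min_\mu$ evaluates to $V^\star_1(s_1;\btheta)$. No concentration or sampling arguments enter here, since this is a purely deterministic statement relating the exact value functions to the optimal approachability distance.
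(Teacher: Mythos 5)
Your proof is correct and rests on exactly the same two ingredients as the paper's: the one-sided Fenchel/Cauchy--Schwarz bound $\langle \btheta,\bv\rangle - \max_{\bx\in\cC}\langle\btheta,\bx\rangle \le \dist(\bv,\cC)$ and the Nash structure of the scalarized game $\cG_\btheta$, so it is essentially the same argument organized in the reverse direction. The paper instantiates the bound at the Nash pair $(\mu^\star,\nu^\star)$ together with the distance-minimizing response $\overline{\mu}(\nu^\star)$ and re-derives the Fenchel inequality inline via projection and Cauchy--Schwarz, whereas you apply the bound uniformly over all $(\mu,\nu)$, push $\min_\mu$ and $\max_\nu$ through by monotonicity, and close with the minimax identity $\max_\nu\min_\mu V^{\mu,\nu}_1(s_1;\btheta)=V^\star_1(s_1;\btheta)$ from the preliminaries --- a cosmetic reorganization, not a different route.
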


\begin{proof}[Proof of Lemma~\ref{lem:delta-approachability-mg}]
	Let $\alpha = \max_{\nu}\min_{\mu}\dist(\bV^{\mu,\nu}_1(s_1),\cC)$; therefore, for every max-player policy $\nu$ there exist a min-player policy $\overline{\mu}(\nu)$ such that $\dist(\bV^{\overline{\mu}(\nu),\nu}_1(s_1),\cC) \leq \alpha$. Let $\Gamma_{\cC}$ be the (Euclidean) projection operator into $\cC$. We have
	\begin{equation*}
		\begin{aligned}
			V^*_1(s_1;\btheta) &= V^{\mu^\star,\nu^*}_1(s_1;\btheta)\\
			&\leq V^{\overline{\mu}(\nu^*),\nu^*}_1(s_1;\btheta)\\
			&= \langle \btheta , \bV^{\overline{\mu}(\nu^*),\nu^*}_1(s_1) \rangle \\
			&= \langle \btheta , \bV^{\overline{\mu}(\nu^*),\nu^*}_1(s_1)  - \Gamma_{\cC}\Big[\bV^{\overline{\mu}(\nu^*),\nu^*}_1(s_1)\Big] \rangle + \langle \btheta , \Gamma_{\cC}\Big[\bV^{\overline{\mu}(\nu^*),\nu^*}_1(s_1)\Big] \rangle\\
			&\leq \norm{\btheta} \dist( \bV^{\overline{\mu}(\nu^*),\nu^*}_1(s_1),\cC)+ \max_{\bx \in \cC} \langle \btheta , \bx \rangle \\
			&\leq \alpha + \max_{\bx \in \cC} \langle \btheta , \bx \rangle,\\
		\end{aligned}
	\end{equation*}
	Recalling that $\alpha = \max_{\nu}\min_{\mu}\dist(\bV^{\mu,\nu}_1(s_1),\cC)$ completes the proof.
\end{proof}

\begin{proof}[Proof of Theorem~\ref{thm:meta-mg}]
With probability at least $1-\delta$, event $E_3$ (as in Definition~\ref{lem:event3}) holds and we have
\begin{equation*}
\begin{aligned}
\dist(\frac{1}{T}\sum_{t=1}^T\bV^{\mu^t,\nu^t}_1(s_1),\cC) &=
\dist\Big(\frac{1}{T}\sum_{t=1}^T\bv^t,\cC\Big)\\
&\overset{(i)}{=} \max_{\btheta \in \cB(1)} \Big[\langle \btheta , \frac{1}{T}\sum_{t=1}^T\bv^t)\rangle - \max_{\bx \in \cC} \langle \btheta, \bx \rangle \Big]\\
&=\max_{\btheta \in \cB(1)} \Big[\frac{1}{T}\sum_{t=1}^T\big(\langle \btheta, \hbv^t \rangle - \max_{\bx \in \cC}\langle \btheta, \bx  \rangle) + \langle \btheta, \frac{1}{T}\sum_{t=1}^T \bv^t-\hbv^t \rangle \Big]\\
&\overset{(ii)}{\leq}\max_{\btheta \in \cB(1)} \Big[\frac{1}{T}\sum_{t=1}^T\big(\langle \btheta, \hbv^t \rangle - \max_{\bx \in \cC}\langle \btheta, \bx  \rangle)\Big] + \bigO(\sqrt{dH^2\iota/T})\\
&\overset{(iii)}{\leq}\frac{1}{T}\sum_{t=1}^T\big(\langle \btheta^t, \hbv^t \rangle - \max_{\bx \in \cC}\langle \btheta^t, \bx  \rangle) + \bigO(\sqrt{H^2/ T})+ \bigO(\sqrt{dH^2\iota/T})\\
&\overset{(iv)}{\leq} \max_{\nu}\min_{\mu}\dist(\bV^{\mu,\nu}_1(s_1),\cC) + \frac{1}{T}\sum_{t=1}^T\big(\langle \btheta^t, \hbv^t \rangle - \bV^*_1(s_1;\btheta^t)\big) +\bigO(\sqrt{dH^2\iota/T})\\
&\overset{(v)}{\leq} \max_{\nu}\min_{\mu}\dist(\bV^{\mu,\nu}_1(s_1),\cC) + \epsilon/2+ \frac{1}{T}\sum_{t=1}^T\big(\langle \btheta^t, \hbv^t \rangle  -\bV^{\mu^t,\nu^t}_1(s_1;\btheta^t)\big) +\bigO(\sqrt{dH^2\iota/T})\\
&= \max_{\nu}\min_{\mu}\dist(\bV^{\mu,\nu}_1(s_1),\cC) + \epsilon/2+ \frac{1}{T}\sum_{t=1}^T\langle \btheta^t, \hbv^t -\bv^t \rangle +\bigO(\sqrt{dH^2\iota/T})\\
&\overset{(vi)}{\leq}\max_{\nu}\min_{\mu}\dist(\bV^{\mu,\nu}_1(s_1),\cC) + \epsilon/2 +\bigO(\sqrt{dH^2\iota/T})\\
&\overset{(vii)}{\leq} \max_{\nu}\min_{\mu}\dist(\bV^{\mu,\nu}_1(s_1),\cC) + \epsilon
\end{aligned}	
\end{equation*}
where $(i)$ is by Equation~\ref{eq:dist-fenchel}, $(ii)$ is by first inequality in event $E_3$ together with Cauchy-Schwarz, $(iii)$ is by guarantee of OGA in Theorem~\ref{thm:oga}, $(iv)$ is by Lemma~\ref{lem:delta-approachability-mg}, $(v)$ is by second inequality in event $E_3$, $(vi)$ is by first inequality in event $E_3$ together with Cauchy-Schwarz, and finally $(vii)$ is by setting $T \geq c \big(dH^2\iota/\epsilon^2\big)$ for large enough constant $c$, completing the proof.
\end{proof}

\subsection{Proof of Theorem~\ref{thm:tabular-mg}}
\subsubsection{Algorithm}
\paragraph{Exploration phase.}
Similar to Algorithm~\ref{alg:tabular}, we use VI-Zero proposed by \cite{liu2020sharp} with different choice of hyperparameters. The pseudo-code is provided in Algorithm~\ref{alg:tabular-mg}.
\paragraph{Planning phase.} In the planning phase, given $\btheta \in \cB(1)$ as input we can use any planning algorithm for $\widehat{\cG}_\btheta = (\cS,\cA,\cB,H,\hProut,\langle \btheta,\hbr \rangle)$ where $\hbr$ is empirical estimate of $\br$ using collected samples $\{\br^k_h\}$. One such algorithm could be Nash value iteration (e.g. see Algorithm 5 in \cite{liu2020sharp}) that computes Nash equilibrium policy for a \emph{known} model.
\begin{algorithm}
    \caption{VI-Zero for VMGs: Exploration Phase}
    \label{alg:tabular-mg}
    \begin{algorithmic}[1]
    	\STATE \textbf{Hyperparameters:} Bonus $\beta_t$.
    	\STATE \textbf{Initialize:} 
    	for all $(s,a,b,h)\in \cS \times \cA \times \cB \times [H]$: $\tQ_h(s,a,b)\leftarrow H$ and $N_h(s,a,b) \leftarrow 0$,    	
    	\STATE \hspace{14.5mm} for all $(s,a,b,h,s') \in \cS \times \cA \times \cB \times [H] \times \cS$: $N_h(s,a,b,s') \leftarrow 0$,
		\STATE \hspace{14.5mm} $\Delta \leftarrow 0$.
    	\FOR{episode $k=1,2,\dots,K$}
    		\FOR{step $h=H,H-1,\dots,1$}
    			\FOR{state-action pair $(s,a,b) \in \cS \times \cA \times \cB$}
    				\STATE $t \leftarrow N_h(s,a,b)$.
    				\IF{$t>0$}
    					\STATE $\tQ_h(s,a,b) \leftarrow \min\{[\hPr_h\tV_{h+1}](s,a,b)+\beta_t,H\}$.
    				\ENDIF
    			\ENDFOR
    			\FOR{state $s \in \cS$}
    				\STATE $\tV_h(s) \leftarrow \max_{(a,b) \in \cA \times \cB}\tQ_h(s,a,b)$ and  $\pi_h(s) \leftarrow \argmax_{(a,b) \in \cA \times \cB}\tQ_h(s,a,b)$
    			\ENDFOR
    		\ENDFOR
    		\IF{ $\tV(s_1) \leq \Delta$}
    			\STATE $\Delta \leftarrow \tV(s_1)$ and $\hProut \leftarrow \hPr_h$
    		\ENDIF
    		\FOR{step $h=1,2,\dots,H$}
    			\STATE Take action $(a_h,b_h) \leftarrow \pi_h(s_h)$ and observe next state $s_{h+1}$
    			\STATE Update $N_h(s_h,a_h,b_h) \leftarrow N_h(s_h,a_h,b_h) + 1$
    			\STATE Update $N_h(s_h,a_h,b_h,s_{h+1}) \leftarrow N_h(s_h,a_h,b_h,s_{h+1}) + 1$
    			\STATE $\hPr_h(\cdot \mid s_h,a_h,b_h) \leftarrow N_h(s_h,a_h,b_h,\cdot) / N_h(s_h,a_h,b_h)$
    		\ENDFOR
    	\ENDFOR
    	\STATE \textbf{Return} $\hProut$
    \end{algorithmic} 
\end{algorithm}
\subsubsection{Proof of Theorem~\ref{thm:tabular-mg}}
Proof is almost identical to proof of Theorem~\ref{thm:tabular} provided in Appendix~\ref{app:tabular}; therefore, we only provide the statement for the main lemmas without proof.

Let $\hPr^k$ and $\hbr^k$ be our empirical estimates of the transition and the return vectors at the beginning of the $k^\mathrm{th}$ episode in Algorithm~\ref{alg:tabular-mg} and define $\widehat{\cG}^k=(\cS,\cA,\cB,H,\hPr^k,\hbr^k)$. We use $N^k_h(s,a,b)$ to denote the number of times we have visited state-action $(s,a,b)$ in step $h$ before $k^\mathrm{th}$ episode in Algorithm~\ref{alg:tabular-mg}. We use superscript $k$ to denote variable corresponding to episode $k$; in particular, $(s^k_1,a^k_1,b^k_1,\dots,s^k_H,a^k_H,b^k_H)$ is the trajectory we have visited in the $k^\text{th}$ episode.

For any $\btheta \in \cB(1)$, let $\widehat{\cG}^k_\btheta$ be the scalarized VMG using vector $\btheta$ (defined in Section~\ref{sec:mg}). We use $\hV^k(\cdot;\btheta)$, $\hQ^k(\cdot,\cdot,\cdot;\btheta)$, and $(\hmu^k_\btheta,\hnu^k_\btheta)=(\hmu^k(\cdot;\btheta),\hnu^k(\cdot;\btheta))$ to denote the optimal value function, optimal Q-value function, and Nash equilibrium policy of $\widehat{\cG}^k_\btheta$ respectively. Therefore, we have
\begin{equation}
\label{eq:bellman-app-tabular-mg}
\begin{aligned}
	&\hQ^k_h(s,a,b;\btheta)=[\hPr^k_h\hV^k_{h+1}](s,a,b;\btheta)+\hr^k_h(s,a,b;\btheta),\\
	&\hV^k_h(s;\btheta)=\min_{\mu}\max_{\nu}[\mathbb{D}_{\mu \times \nu}\hQ^k_h](s;\btheta),\\
	&\hV^k_h(s;\btheta)=[\mathbb{D}_{\hmu^k_\btheta \times \hnu^k_\btheta}\hQ^k_h](s;\btheta).\\
\end{aligned}
\end{equation}

\begin{theorem}[restatement of Theorem~\ref{thm:tabular-mg}]
\label{restate:thm:tabular-mg}
There exist absolute constants $c_\beta$ and $c_K$, such that for any $\epsilon \in (0,H]$, $\delta  \in (0,1]$, if we choose bonus $\beta_t = c_\beta\big(\sqrt{\min\{d,S\}H^2\iota/t}+H^2S\iota/t\big)$ where $\iota=\log[dSABKH/\delta]$, and run the exploration phase (Algorithm~\ref{alg:tabular-mg}) for $K \geq c_K\big(\min\{d,S\}H^4SAB\iota'/\epsilon^2+H^3S^2AB(\iota')^2/\epsilon\Big)$ episodes where $\iota'=\log[dSABH/(\epsilon\delta)]$, then with probability at least $1-\delta$, the algorithm satisfies for all $\btheta \in \cB(1)$
\begin{equation*}
	 V^{\mu_\btheta,\dagger}_1(s_1;\btheta) - V^{\dagger,\nu_{\btheta}}_1(s_1;\btheta) = [V^{\mu_\btheta,\dagger}_1(s_1;\btheta)-V^\star_1(s_1;\btheta)]+[V^\star_1(s_1;\btheta)-V^{\dagger,\nu_{\btheta}}_1(s_1;\btheta)] \leq \epsilon,
\end{equation*}
where $(\mu_\btheta,\nu_\btheta)$ is the output of any planning algorithm (e.g., Nash value iteration) for the Markov game $\widehat{\cG}^\mathrm{out}_\btheta$. Therefore, we have 
\begin{equation*}
m_{\textsc{RFE}}(\epsilon,\delta)	 \leq \bigO\Big(\frac{\min\{d,S\}H^4SAB\iota'}{\epsilon^2}+\frac{H^3S^2AB(\iota')^2}{\epsilon}\Big).
\end{equation*}
\end{theorem}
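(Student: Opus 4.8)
The plan is to follow the VMDP analysis of Theorem~\ref{thm:tabular} essentially verbatim, replacing each state--action pair $(s,a)$ by the joint triple $(s,a,b)$ and replacing the greedy operator $\max_a$ by the Nash operator $\min_\mu\max_\nu\mathbb{D}_{\mu\times\nu}$, then treating the two best-response values of the output Nash policy symmetrically. Concretely, I would first restate the concentration event $E_1$ of Lemma~\ref{lem:event1} with $(s,a,b)$ in place of $(s,a)$; the three bounds (on $(\hPr^k_h-\Pr_h)V^\star_{h+1}$, on $(\hbr^k_h-\br_h)$, and on the per-entry transition error) are proved by the identical Azuma--Hoeffding/Bernstein plus $\epsilon$-net-over-$\btheta$ arguments, now over the larger index set $\cS\times\cA\times\cB\times[K]\times[H]$, which only affects $\iota$ and leaves the rate untouched. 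The Lipschitz bound of Lemma~\ref{lem:lipschitz-V} also carries over, since the Nash operator is $1$-Lipschitz in $\btheta$ through the return: the same backward induction goes through using $|\mathbb{D}_{\mu\times\nu}(Q-Q')|\le\max_{a,b}|Q-Q'|$ in place of the inequality for $\max_a$.

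Next I would reprove the two closeness lemmas. The analog of Lemma~\ref{lem:closensess-QandV-tabular}, namely $|\hV^k_h(s;\btheta)-V^\star_h(s;\btheta)|\le\tV^k_h(s)$, follows by the same backward induction, because both the true and empirical Nash value functions are images of their $Q$-functions under the non-expansive operator $\min_\mu\max_\nu\mathbb{D}_{\mu\times\nu}$, so the $\max_a$ step in the VMDP proof is simply replaced by this operator. The substantive step is the analog of Lemma~\ref{lem:main-apptabular}, which I would state for \emph{both} best responses of the output Nash policy $(\hmu^k_\btheta,\hnu^k_\btheta)$:
\begin{equation*}
|\hV^k_h(s;\btheta)-V^{\hmu^k_\btheta,\dagger}_h(s;\btheta)|\le\alpha_h\tV^k_h(s),\qquad |\hV^k_h(s;\btheta)-V^{\dagger,\hnu^k_\btheta}_h(s;\btheta)|\le\alpha_h\tV^k_h(s),
\end{equation*}
with the same recursion $\alpha_h\in[1,5]$. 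The proof is backward induction. For the max-player's best response against $\hmu^k_\btheta$, the Nash optimality in $\widehat{\cG}^k_\btheta$ gives the Bellman identity $\hV^k_h(s;\btheta)=\max_b\E_{a\sim\hmu^k_{\btheta,h}(s)}\hQ^k_h(s,a,b;\btheta)$, while $V^{\hmu^k_\btheta,\dagger}_h(s;\btheta)=\max_b\E_{a\sim\hmu^k_{\btheta,h}(s)}Q^{\hmu^k_\btheta,\dagger}_h(s,a,b;\btheta)$; subtracting and using $|\max_b\E_{a\sim\hmu}(\cdot)|\le\max_{a,b}|\cdot|$ reduces the bound to controlling $|\hQ^k_h-Q^{\hmu^k_\btheta,\dagger}_h|$, which splits into the three terms of \eqref{eq:apptabular-main} exactly as before: $(T_2)$ is handled by the concentration event, $(T_3)$ by the induction hypothesis, and $(T_1)$ by the Bernstein transition bound with the AM--GM splitting trick, reproducing precisely the bonus $\beta^k_h(s,a,b)$ and the $\alpha_h$ recursion. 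The min-player case is symmetric.

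Finally I would port the sum-of-bonuses estimate of Theorem~\ref{thm:apptabular-sumbonus}: the pigeonhole now runs over $(h,s,a,b)$, so the counting produces $\sqrt{HSAB}\cdot\sqrt{KH}$ and $H^3S^2AB\iota^2$, giving $\sum_{k=1}^K\tV^k_1(s_1)\le\bigO(\sqrt{\min\{d,S\}H^4SABK\iota}+H^3S^2AB\iota^2)$. Combining the pieces, the $\mathrm{out}$ selection yields $\tV^{\mathrm{out}}_1(s_1)\le\frac1K\sum_k\tV^k_1(s_1)$, and inserting $V^\star_1$ into each bracket of the duality gap and applying both closeness lemmas gives
\begin{equation*}
V^{\mu_\btheta,\dagger}_1(s_1;\btheta)-V^{\dagger,\nu_\btheta}_1(s_1;\btheta)\le 2(1+\alpha_1)\,\tV^{\mathrm{out}}_1(s_1)\le 12\,\tV^{\mathrm{out}}_1(s_1).
\end{equation*}
Choosing $K\ge c_K(\min\{d,S\}H^4SAB\iota'/\epsilon^2+H^3S^2AB(\iota')^2/\epsilon)$ makes the right-hand side at most $\epsilon$ uniformly over $\btheta\in\cB(1)$, which is the claimed $m_{\textsc{RFE}}$. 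The main obstacle is the two-sided best-response lemma: unlike the single-agent case, where one greedy policy is both played and evaluated, here the coupled Nash policy $(\hmu^k_\btheta,\hnu^k_\btheta)$ must simultaneously certify near-optimality against the min-player's \emph{and} the max-player's deviations, and the argument relies on the Nash--Bellman identity $\hV^k_h=\max_b\E_{a\sim\hmu}\hQ^k_h=\min_a\E_{b\sim\hnu}\hQ^k_h$ to reduce each side to the same $|\hQ^k_h-\cdot|$ decomposition.
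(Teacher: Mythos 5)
Your proposal matches the paper's proof essentially step for step: the paper likewise reduces the statement to a concentration event indexed by $(s,a,b)$ (Lemma~\ref{lem:event4}), a closeness lemma between empirical and true Nash values (Lemma~\ref{lem:closensess-QandV-tabular-mg}), a two-sided best-response lemma for $(\hmu^k_\btheta,\hnu^k_\btheta)$ with the identical $\alpha_h$ recursion (Lemma~\ref{lem:main-apptabular-mg}), and the pigeonhole bonus-sum bound over $(h,s,a,b)$ (Theorem~\ref{thm:apptabular-sumbonus-mg}), all combined via the $\argmin$ selection of $\mathrm{out}$. The only cosmetic difference is your final telescoping through $V^\star_1$ (giving constant $2(1+\alpha_1)\le 12$) versus the paper's direct decomposition around $\hV^{\mathrm{out}}_1$ (giving $2\alpha_1\le 10$), which is immaterial since constants are absorbed into $c_K$.
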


The bonus for episode $k$ can be written as
\begin{equation}
\label{eq:bonus-tabular-mg}
	\beta^k_h(s,a,b) = c_\beta  \Big(\sqrt{\frac{\min\{d,S\}H^2\iota}{\max\{N^k_h(s,a,b),1\}}}+\frac{H^2S\iota}{\max\{N^k_h(s,a,b),1\}} \Big),
\end{equation}
where $\iota = \log[dSABKH/\delta]$ and $c_\beta$ is some large absolute constant.

We start with the concentration lemma similar to Lemma~\ref{lem:event1}.

\begin{lemma}
\label{lem:event4}
Let $c$ be some large absolute constant. Define event $E_4$ to be: for all $(s,a,b,s',h) \in \cS \times \cA \times \cB \times \cS \times [H]$, $k \in [K]$, and $ \btheta \in \cB(1)$,
\begin{equation}
	\begin{cases}
		|[(\hPr_h^k-\Pr_h)V^{\star}_{h+1}](s,a,b;\btheta)|&\leq {c}\sqrt{\frac{\min\{d,S\}H^2\iota}{\max\{N^k_h(s,a,b),1\}}},\\
		|(\hr^k_h-r_h)(s,a,b;\btheta)|&\leq {c}\sqrt{\frac{\iota}{\max\{N^k_h(s,a,b),1\}}},\\
		|(\hPr_h^k-\Pr_h)(s' \mid s,a,b)|&\leq {c}\Big(\sqrt{\frac{\hPr_h^k(s'\mid s,a,b)\iota}{\max\{N^k_h(s,a,b),1\}}}+\frac{\iota}{\max\{N^k_h(s,a,b),1\}}\Big),
	\end{cases}
\end{equation}
where $\iota = \log[dSABKH/\delta]$. We have $\Pr(E_4) \geq 1-\delta$.
\end{lemma}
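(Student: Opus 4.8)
The plan is to follow the proof of Lemma~\ref{lem:event1} essentially verbatim, since the only structural change from the VMDP setting is that every state--action pair $(s,a)$ is replaced by a state--action--action triple $(s,a,b)$. This merely enlarges the relevant index set from $\cS \times \cA$ to $\cS \times \cA \times \cB$ and correspondingly inflates the logarithmic factor to $\iota = \log[dSABKH/\delta]$. Concretely, I would establish the three claims separately, each with probability at least $1-\delta/3$, and then take a union bound.

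Before that, the one genuinely new ingredient is the Markov-game analog of the Lipschitz bound in Lemma~\ref{lem:lipschitz-V}: for any $(s,h)$ and any $\btheta,\btheta' \in \cB(1)$, the Nash value function should satisfy $|V^\star_h(s;\btheta) - V^\star_h(s;\btheta')| \leq (H-h+1)\norm{\btheta - \btheta'}$, and similarly for $V^{\mu,\nu}_h$. I would prove this by backward induction on $h$, exactly as in Lemma~\ref{lem:lipschitz-V}, replacing the inequality $|\max_a Q_h(s,a;\btheta) - \max_a Q_h(s,a;\btheta')| \leq \max_a |Q_h(s,a;\btheta) - Q_h(s,a;\btheta')|$ by the corresponding non-expansiveness of the Nash (minimax) value operator $(s,\btheta) \mapsto \min_\mu \max_\nu [\mathbb{D}_{\mu\times\nu} Q_h](s;\btheta)$. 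The key fact is that the value of a matrix game is $1$-Lipschitz in its payoff entries with respect to the $\ell_\infty$ norm, so the same backward induction carries through unchanged; the fixed-policy bound is even easier, since $V^{\mu,\nu}_h(s;\btheta)$ is a linear functional of $\btheta$ whose coefficient is a cumulative return of norm at most $H-h+1$.

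With this Lipschitz property in hand, the three concentration claims follow exactly as in Lemma~\ref{lem:event1}. For the first claim I would run the two complementary covering arguments: an $\epsilon'$-net over the ball $\cB(1)$ (of log-cardinality $\bigO(d\log(1/\epsilon'))$) combined with Azuma--Hoeffding applied to the fixed Nash values $V^\star_{h+1}(\cdot;\btheta')$ and the Lipschitz bound yields the ``$d$'' branch of the $\sqrt{\min\{d,S\}H^2\iota/N}$ rate, while an $\epsilon'$-net over the $\ell_\infty$-ball of radius $H$ in $\R^S$ yields the ``$S$'' branch; taking the smaller of the two produces the $\min\{d,S\}$ factor, with all union bounds now ranging over $\cS \times \cA \times \cB \times [K] \times [H]$. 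For the second claim I would apply the vector concentration bound (Lemma~\ref{lem:concentration-l2vector}) to $\hbr^k_h - \br_h$ and then Cauchy--Schwarz against $\btheta$. For the third claim I would apply the empirical Bernstein inequality to each coordinate $\hPr^k_h(s'\mid s,a,b)$ and union bound over $\cS \times \cA \times \cB \times \cS \times [K] \times [H]$.

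The only step requiring genuine (rather than bookkeeping) care is the Lipschitz lemma for the Nash value, because the minimax operator replaces the max operator; once its non-expansiveness in $\ell_\infty$ is noted, this step is routine and the remainder of the argument is an exact transcription of the VMDP proof with $SA$ replaced by $SAB$. I therefore expect no essential obstacle beyond verifying that the scalarized Nash value $V^\star_{h+1}(\cdot;\btheta)$ is a fixed, $H$-bounded, $\btheta$-Lipschitz function, which is precisely what makes the covering arguments applicable.
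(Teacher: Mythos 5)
Your proposal is correct and follows exactly the route the paper intends: the paper omits the proof of Lemma~\ref{lem:event4} entirely, stating only that it is "almost identical" to the VMDP case (Lemma~\ref{lem:event1}), and your transcription with $(s,a)$ replaced by $(s,a,b)$ and $\iota = \log[dSABKH/\delta]$ is precisely that argument. You also correctly isolate the one genuinely new ingredient the paper glosses over --- the Lipschitz bound $|V^\star_h(s;\btheta)-V^\star_h(s;\btheta')| \leq (H-h+1)\norm{\btheta-\btheta'}$ for the Nash value, which indeed follows by the same backward induction once one uses the non-expansiveness of the matrix-game value operator in the entrywise $\ell_\infty$ norm in place of the $\max_a$ operator.
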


Similar to Lemma~\ref{lem:closensess-QandV-tabular}, the following lemma shows that the optimal value functions of $\widehat{\cG}^k_\btheta$ are close to the optimal value functions of $\cG_\btheta$ and their difference is controlled by $\tQ$ and $\tV$ computed in Algorithm~\ref{alg:tabular-mg}.

\begin{lemma}
\label{lem:closensess-QandV-tabular-mg}
Suppose event $E_4$ holds (defined in Lemma~\ref{lem:event4}); then, for all $(s,a,b,k,h,\btheta) \in \cS \times \cA \times \cB \times [K] \times [H] \times \cB(1)$ we have
\begin{equation}
\label{eq:closensess-QandV-tabular-mg}
\begin{aligned}
&|\hQ^k_h(s,a,b;\btheta)-Q^{\star}_h(s,a,b;\btheta)| \leq \tQ^k_h(s,a,b),\\
&|\hV^k_h(s;\btheta)-V^{\star}_h(s;\btheta)| \leq \tV^k_h(s).\\
\end{aligned}
\end{equation}
\end{lemma}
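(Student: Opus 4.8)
The plan is to reproduce the backward-induction argument of the VMDP analogue (Lemma~\ref{lem:closensess-QandV-tabular}) essentially verbatim, carrying the extra action coordinate $b$ throughout, with the single structural modification that the scalar $\max$ operator in the value recursion is replaced by the Nash (min--max) operator from \eqref{eq:bellman-app-tabular-mg}. I would induct on $h$ from $H+1$ down to $1$. The base case $h=H+1$ is immediate, since $\hQ^k_{H+1}$, $Q^{\star}_{H+1}$, $\tQ^k_{H+1}$ and the corresponding value functions all vanish, so both inequalities hold with equality at $0$.

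For the inductive step on the $Q$-functions, I would combine the Bellman equation for $\hQ^k_h$ in \eqref{eq:bellman-app-tabular-mg} with the definition of $Q^{\star}_h$ to obtain
\begin{equation*}
|\hQ^k_h(s,a,b;\btheta)-Q^{\star}_h(s,a,b;\btheta)| \leq \min\Big\{ \underbrace{\big|[(\hPr^k_h-\Pr_h)V^{\star}_{h+1}](s,a,b;\btheta)\big| + \big|(\hr^k_h-r_h)(s,a,b;\btheta)\big|}_{(T_1)} + \underbrace{\big|[\hPr^k_h(\hV^k_{h+1}-V^{\star}_{h+1})](s,a,b;\btheta)\big|}_{(T_2)},\ H\Big\}.
\end{equation*}
The three-variable versions of the concentration bounds in event $E_4$ (Lemma~\ref{lem:event4}) give $(T_1)\leq\beta^k_h(s,a,b)$ with $\beta^k_h$ as in \eqref{eq:bonus-tabular-mg}, while the induction hypothesis applied at step $h+1$ yields $(T_2)\leq[\hPr^k_h\tV^k_{h+1}](s,a,b)$, using that $\hPr^k_h$ is a probability distribution and $\tV^k_{h+1}\ge 0$. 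Recalling the definition of $\tQ^k_h$ in Algorithm~\ref{alg:tabular-mg} then produces $|\hQ^k_h-Q^{\star}_h|\leq\tQ^k_h$. This step is a direct transcription of the VMDP computation and requires no new concentration or covering work beyond $E_4$.

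The one genuinely new ingredient is the passage from $Q$ to $V$. Here I would invoke the (standard) nonexpansiveness of the min--max value in the sup-norm: for the two payoff tables $\hQ^k_h(s,\cdot,\cdot;\btheta)$ and $Q^{\star}_h(s,\cdot,\cdot;\btheta)$ over $\cA\times\cB$,
\begin{equation*}
\big|\min_\mu\max_\nu [\mathbb{D}_{\mu\times\nu}\hQ^k_h](s;\btheta) - \min_\mu\max_\nu [\mathbb{D}_{\mu\times\nu}Q^{\star}_h](s;\btheta)\big| \leq \max_{(a,b)\in\cA\times\cB}\big|\hQ^k_h(s,a,b;\btheta)-Q^{\star}_h(s,a,b;\btheta)\big|,
\end{equation*}
which follows because $\mathbb{D}_{\mu\times\nu}\hQ^k_h \le \mathbb{D}_{\mu\times\nu}Q^{\star}_h + \max_{a,b}|\hQ^k_h-Q^{\star}_h|$ holds pointwise in $(\mu,\nu)$ and this ordering is preserved by $\max_\nu$ and then $\min_\mu$, with the symmetric argument giving the reverse. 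Applying this, then chaining with the $Q$-bound just established and the identity $\tV^k_h(s)=\max_{(a,b)}\tQ^k_h(s,a,b)$ from Algorithm~\ref{alg:tabular-mg}, gives
\begin{equation*}
|\hV^k_h(s;\btheta)-V^{\star}_h(s;\btheta)| \leq \max_{(a,b)}\big|\hQ^k_h(s,a,b;\btheta)-Q^{\star}_h(s,a,b;\btheta)\big| \leq \max_{(a,b)}\tQ^k_h(s,a,b) = \tV^k_h(s),
\end{equation*}
closing the induction. I expect the only real obstacle to be isolating and stating this sup-norm nonexpansiveness of the Nash operator cleanly; once that replaces the elementary $|\max f-\max g|\le\max|f-g|$ used in the single-agent proof, the rest of the argument transfers without change.
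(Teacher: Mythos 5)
Your proposal is correct and follows exactly the route the paper intends: the paper omits the proof of Lemma~\ref{lem:closensess-QandV-tabular-mg} entirely, stating only that it is ``almost identical'' to the single-agent Lemma~\ref{lem:closensess-QandV-tabular}, and your backward induction with $(T_1)$ bounded by event $E_4$ and $(T_2)$ by the induction hypothesis is precisely that adaptation. You also correctly isolate and justify the one genuinely new ingredient --- the sup-norm nonexpansiveness of the Nash (min--max) operator replacing the elementary $|\max f - \max g| \le \max |f-g|$ --- which is exactly what the paper leaves implicit.
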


Similar to Lemma~\ref{lem:main-apptabular}, now we are ready to introduce the main lemma that shows value of $\hpi^k_\btheta$ under the true model is close to its value under empirical model. The difference is controlled by $\tQ$ and $\tV$ computed in Algorithm~\ref{alg:tabular-mg}.

\begin{lemma}
\label{lem:main-apptabular-mg}
Suppose event $E_4$ holds (defined in Lemma~\ref{lem:event4}); then, for all $(s,a,b,k,h,\btheta) \in \cS \times \cA \times \cB \times [K] \times [H] \times \cB(1)$ we have
\begin{equation}
\begin{aligned}
&|\hQ^k_h(s,a,b;\btheta)-Q_h^{\dagger,\hnu^k_\btheta}(s,a,b;\btheta)| \leq \alpha_h \tQ^k_h(s,a,b), \\ 
&|\hV^k_h(s;\btheta)-V_h^{\dagger,\hnu^k_\btheta}(s;\btheta)| \leq \alpha_h \tV^k_h(s), \\ 	
\end{aligned}
\end{equation}
and
\begin{equation}
\begin{aligned}
&|\hQ^k_h(s,a,b;\btheta)-Q_h^{\hmu^k_\btheta,\dagger}(s,a,b;\btheta)| \leq \alpha_h \tQ^k_h(s,a,b), \\ 
&|\hV^k_h(s;\btheta)-V_h^{\hmu^k_\btheta,\dagger}(s;\btheta)| \leq \alpha_h \tV^k_h(s), \\ 	
\end{aligned}
\end{equation}
where $\alpha_{H+1}=1$ and $\alpha_{h}=[(1+\frac{1}{H})\alpha_{h+1}+\frac{1}{H}]$; we have $ 1 \leq \alpha_h \leq 5$ for $h \in [H]$.
\end{lemma}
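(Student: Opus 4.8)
The plan is to prove all four inequalities simultaneously by backward induction on $h$, mirroring the single-agent argument of Lemma~\ref{lem:main-apptabular} and inserting a Nash saddle-point step wherever the single-agent $\max_a$ is replaced by the value of the matrix game $\hQ^k_h(s,\cdot,\cdot;\btheta)$. The base case $h=H+1$ holds trivially since both sides vanish. For the inductive step I would first establish the two $Q$-bounds, and then deduce the two $V$-bounds from them.

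For the bound against $Q^{\dagger,\hnu^k_\btheta}$, I would use that the best-response value obeys the Bellman recursion $Q_h^{\dagger,\hnu^k_\btheta}(s,a,b;\btheta)=r_h(s,a,b;\btheta)+[\Pr_h V_{h+1}^{\dagger,\hnu^k_\btheta}](s,a,b;\btheta)$; subtracting the empirical recursion \eqref{eq:bellman-app-tabular-mg} for $\hQ^k_h$ and splitting exactly as in Lemma~\ref{lem:main-apptabular} gives
\begin{align*}
|\hQ^k_h(s,a,b;\btheta)-Q_h^{\dagger,\hnu^k_\btheta}(s,a,b;\btheta)| \le \min\Big\{ T_1+T_2+T_3,\, H\Big\},
\end{align*}
with $T_1=|[(\hPr^k_h-\Pr_h)(V_{h+1}^{\dagger,\hnu^k_\btheta}-V^\star_{h+1})](s,a,b;\btheta)|$, $T_2=|[(\hPr^k_h-\Pr_h)V^\star_{h+1}](s,a,b;\btheta)|+|(\hr^k_h-r_h)(s,a,b;\btheta)|$, and $T_3=|[\hPr^k_h(\hV^k_{h+1}-V_{h+1}^{\dagger,\hnu^k_\btheta})](s,a,b;\btheta)|$. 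I would bound $T_2$ by the bonus $\beta^k_h(s,a,b)$ via event $E_4$ (Lemma~\ref{lem:event4}), bound $T_3\le\alpha_{h+1}[\hPr^k_h\tV^k_{h+1}](s,a,b)$ by the induction hypothesis on the $V$-bound, and bound $T_1$ by writing $V_{h+1}^{\dagger,\hnu^k_\btheta}-V^\star_{h+1}=(V_{h+1}^{\dagger,\hnu^k_\btheta}-\hV^k_{h+1})+(\hV^k_{h+1}-V^\star_{h+1})$ so that the induction hypothesis and Lemma~\ref{lem:closensess-QandV-tabular-mg} give $|V_{h+1}^{\dagger,\hnu^k_\btheta}-V^\star_{h+1}|\le(\alpha_{h+1}+1)\tV^k_{h+1}$; the Bernstein-type bound of event $E_4$ with AM-GM then yields $T_1\le\frac{\alpha_{h+1}+1}{H}[\hPr^k_h\tV^k_{h+1}](s,a,b)+2c^2(\alpha_{h+1}+1)\frac{H^2S\iota}{\max\{N^k_h(s,a,b),1\}}$. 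Collecting the three terms and invoking $\alpha_h=(1+\tfrac1H)\alpha_{h+1}+\tfrac1H$ together with the definition of $\tQ^k_h$ gives $\alpha_h\tQ^k_h(s,a,b)$, verbatim as in the single-agent case. The bound against $Q_h^{\hmu^k_\btheta,\dagger}$ follows from the identical computation with the roles of the two players interchanged.

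The genuinely new step is passing from the $Q$-bounds to the $V$-bounds, since here $\hV^k_h=\max_a\hQ^k_h$ is replaced by a matrix-game value. I would exploit the saddle-point property of the empirical Nash policies: because $(\hmu^k_\btheta,\hnu^k_\btheta)$ is a Nash equilibrium of the game $\hQ^k_h(s,\cdot,\cdot;\btheta)$, one has $\hV^k_h(s;\btheta)=\min_{\mu}[\mathbb{D}_{\mu\times\hnu^k_\btheta}\hQ^k_h](s;\btheta)$, while the best-response value satisfies $V_h^{\dagger,\hnu^k_\btheta}(s;\btheta)=\min_{\mu}[\mathbb{D}_{\mu\times\hnu^k_\btheta}Q_h^{\dagger,\hnu^k_\btheta}](s;\btheta)$, since against the fixed max-player policy $\hnu^k_\btheta$ the min-player faces an ordinary MDP. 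Both are minima over $\mu$ of expectations against the \emph{same} fixed opponent distribution $\hnu^k_\btheta$, so $|\min_\mu f(\mu)-\min_\mu g(\mu)|\le\max_\mu|f(\mu)-g(\mu)|$ gives
\begin{align*}
|\hV^k_h(s;\btheta)-V_h^{\dagger,\hnu^k_\btheta}(s;\btheta)| \le \max_{\mu}[\mathbb{D}_{\mu\times\hnu^k_\btheta}|\hQ^k_h-Q_h^{\dagger,\hnu^k_\btheta}|](s;\btheta) \le \alpha_h\max_{a,b}\tQ^k_h(s,a,b)=\alpha_h\tV^k_h(s),
\end{align*}
using the $Q$-bound just proved and the definition $\tV^k_h(s)=\max_{(a,b)}\tQ^k_h(s,a,b)$. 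The bound for $V_h^{\hmu^k_\btheta,\dagger}$ is symmetric, replacing $\min_\mu$ by $\max_\nu$ against the fixed min-player policy $\hmu^k_\btheta$, and this closes the induction.

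The hard part will be precisely this last reduction: one must check that the empirical Nash value and both one-sided best-response values can all be written as optimizations against a \emph{common fixed} opponent policy, which is what makes the $|\min-\min|$ (resp.\ $|\max-\max|$) contraction applicable and prevents the two players' optimizations from entangling. Concretely, the only point beyond a mechanical transcription of the single-agent proof is verifying that $V^{\dagger,\hnu^k_\btheta}$ and $V^{\hmu^k_\btheta,\dagger}$ obey single-player Bellman-optimality recursions of exactly the same form as the empirical recursion used for $\hV^k_h$; once this is in hand, the $T_1,T_2,T_3$ bookkeeping and the $\alpha_h$ recursion carry over unchanged.
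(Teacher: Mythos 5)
Your proposal is correct and follows exactly the route the paper intends: the paper omits this proof entirely, stating only that it is ``almost identical'' to that of Lemma~\ref{lem:main-apptabular}, and your transcription of the $T_1,T_2,T_3$ decomposition under event $E_4$ is precisely that adaptation. The one genuinely new ingredient you supply---writing both $\hV^k_h$ (via the Nash saddle-point property of $(\hmu^k_\btheta,\hnu^k_\btheta)$) and $V^{\dagger,\hnu^k_\btheta}_h$ (via Bellman optimality in the MDP induced by fixing the opponent) as optimizations against the \emph{same} fixed opponent policy, then applying the $|\min-\min|\le\max|\cdot|$ contraction and $\tV^k_h(s)=\max_{(a,b)}\tQ^k_h(s,a,b)$---is exactly the detail needed to replace the single-agent step $\hV^k_h=\max_a\hQ^k_h$, so the induction closes as claimed.
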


Similar to Lemma~\ref{thm:apptabular-sumbonus}, we can bound the uncertainty using the following lemma. 
\begin{theorem}
\label{thm:apptabular-sumbonus-mg}
For any $\delta \in (0,1]$, if we choose $\beta^k_t$ in Algorithm~\ref{alg:tabular-mg} as in Equation~\ref{eq:bonus-tabular-mg}; then, with probability at least $1-\delta$, we have
\begin{equation*}
	\sum_{k=1}^K \tV^k_1(s_1) \leq \bigO(\sqrt{\min\{d,S\}H^4SABK\iota}+H^3S^2AB\iota^2).
\end{equation*}
\end{theorem}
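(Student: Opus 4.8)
The plan is to mirror the proof of Theorem~\ref{thm:apptabular-sumbonus} for the single-agent case almost verbatim, exploiting the fact that the uncertainty value function $\tV^k$ computed in Algorithm~\ref{alg:tabular-mg} treats the joint action $(a,b) \in \cA \times \cB$ exactly as the single-agent version treats $a \in \cA$. The only bookkeeping change is that visitation counts are now indexed by state--joint-action triples $(s,a,b)$, so every appearance of the action-count $A$ in the pigeonhole step becomes $AB$. Importantly, this lemma concerns only the exploration dynamics and the scalar uncertainty function $\tV^k$, which is agnostic to the Nash/minimax structure of the game; the two-player structure only enters later, in Lemma~\ref{lem:main-apptabular-mg}, so no minimax subtleties arise here.

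First I would fix an episode $k$ and unroll the greedy recursion along the trajectory $(s^k_1,a^k_1,b^k_1,\dots,s^k_H,a^k_H,b^k_H)$ actually played. Since the algorithm plays the jointly greedy action $(a^k_h,b^k_h)=\pi_h(s^k_h)=\argmax_{(a,b)}\tQ^k_h(s^k_h,a,b)$, the definition of $\tQ^k_h$ in Algorithm~\ref{alg:tabular-mg} gives $\tV^k_h(s^k_h)=\tQ^k_h(s^k_h,a^k_h,b^k_h)\le [\hPr^k_h\tV^k_{h+1}](s^k_h,a^k_h,b^k_h)+\beta^k_h(s^k_h,a^k_h,b^k_h)$. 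Setting $\zeta^k_h=[\hPr^k_h\tV^k_{h+1}](s^k_h,a^k_h,b^k_h)-\tV^k_{h+1}(s^k_{h+1})$ and telescoping over $h$ yields $\tV^k_1(s_1)\le\sum_{h=1}^H\big(\beta^k_h(s^k_h,a^k_h,b^k_h)+\zeta^k_h\big)$. Summing over $k$ then splits the target into a bonus sum $T_1=\sum_{k,h}\beta^k_h(s^k_h,a^k_h,b^k_h)$ and a noise sum $T_2=\sum_{k,h}\zeta^k_h$.

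For $T_1$ I would apply the standard pigeonhole argument: regrouping by cells $(h,s,a,b)$, using $\sum_{i=1}^N i^{-1/2}\le 2\sqrt N$ and $\sum_{i=1}^N i^{-1}\le 1+\log N$, and then Cauchy--Schwarz together with $\sum_{h,s,a,b}N^K_h(s,a,b)=KH$ over the $HSAB$ cells to get $\sum_{h,s,a,b}\sqrt{N^K_h(s,a,b)}\le\sqrt{HSAB}\,\sqrt{KH}$. With the bonus form in Equation~\ref{eq:bonus-tabular-mg} this gives $T_1\le\bigO\big(\sqrt{\min\{d,S\}H^4SABK\iota}+H^3S^2AB\iota^2\big)$. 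For $T_2$, the sequence $\zeta^k_h$ is a martingale difference bounded by $H$ (as $\tV^k_{h+1}\in[0,H]$), so Azuma--Hoeffding yields $T_2\le\bigO(\sqrt{H^3K\log(1/\delta)})$, which is a lower-order term and is absorbed into the stated bound.

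The main (mild) obstacle is simply verifying the per-episode identity $\tV^k_h(s^k_h)=\tQ^k_h(s^k_h,a^k_h,b^k_h)$ for the jointly greedy policy the game-exploration algorithm actually executes, and confirming that the count $N^k_h(s,a,b)$ ranges over $SAB$ cells so the pigeonhole bound picks up the extra factor of $B$ under the square root inside $T_1$; both are immediate from the pseudocode of Algorithm~\ref{alg:tabular-mg}. Everything else is a routine transcription of Theorem~\ref{thm:apptabular-sumbonus}, so I would state the result and point to that proof rather than reproducing the identical calculations.
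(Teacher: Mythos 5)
Your proposal is correct and is exactly the argument the paper intends: the paper itself omits this proof, stating that it is "almost identical" to the single-agent case, and your transcription — greedy unrolling along the played trajectory, the bonus-sum/martingale split, pigeonhole over the $HSAB$ cells (yielding the extra factor $B$), and Azuma--Hoeffding for the lower-order noise term — matches the proof of Theorem~\ref{thm:apptabular-sumbonus} step for step. Your explicit verification that $\tV^k_h(s^k_h)=\tQ^k_h(s^k_h,a^k_h,b^k_h)$ under the jointly greedy action is the right (and only) point needing care in the translation, and it holds by the pseudocode of Algorithm~\ref{alg:tabular-mg}.
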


\begin{proof}[Proof of Theorem~\ref{restate:thm:tabular-mg} (restatement of Theorem~\ref{thm:tabular-mg})]
By Algorithm~\ref{alg:tabular-mg}, we have $\mathrm{out} = \argmin_{k \in [K]}\tV^k_1(s_1)$, resulting in $\tV^{\mathrm{out}}_1(s_1)\leq \frac{1}{K}\sum_{k=1}^K \tV^k_1(s_1)$. Therefore, with probability at least $1-2\delta$, for any vector $\btheta \in \cB(1)$ we have
\begin{equation*}
\begin{aligned}
	V^{\hmu^{\mathrm{out}}_\btheta,\dagger}_1(s_1;\btheta) - V^{\dagger,\hnu^{\mathrm{out}}_{\btheta}}_1(s_1;\btheta)  &\leq |V^{\hmu^{\mathrm{out}}_\btheta,\dagger}_1(s_1;\btheta)-\hV^\mathrm{out}_1(s_1;\btheta)|+|\hV^\mathrm{out}_1(s_1;\btheta)-V^{\dagger,\hnu^{\mathrm{out}}_{\btheta}}_1(s_1;\btheta)|\\
	&\overset{(i)}{\leq} 2\alpha_1\tV^{\mathrm{out}}_1(s_1)\\
	&\leq 10\tV^{\mathrm{out}}_1(s_1) \\
	&\leq \frac{10}{K}\sum_{k=1}^K \tV^k_1(s_1)\\
	&\overset{(ii)}{\leq} \bigO(\sqrt{\min\{d,S\}H^4SAB{\iota}/K}+H^3S^2AB{\iota}^2/K)\\
	&\overset{(iii)}{\leq} \epsilon,
\end{aligned}
\end{equation*}
where $(i)$ is due to Lemma~\ref{lem:main-apptabular-mg}, $(ii)$ is due to Theorem~\ref{thm:apptabular-sumbonus-mg}, and $(iii)$ is due to $K \geq c_K (\min\{d,S\}H^4SAB\iota'/\epsilon^2+H^3S^2AB(\iota')^2/\epsilon)$ with a sufficiently large constant $c_K$. Rescaling $\delta$ completes the proof.
\end{proof}

\section{Auxiliary tools}
\label{app:aux}

\begin{lemma}[Hoeffding type inequality for norm-subGaussian, Corollary 7 in \cite{jin2019short}] \label{lem:concentration-l2vector}
Let $\{\bX_t\}_{t \in [T]}$ be a $d$-dimensional vector-valued random variable. Consider filtration $\{\mathcal{F}_t\}_{t \in [T]}$ and define $\E_t[\cdot]=\E[\cdot \mid \mathcal{F}_t]$. If $\norm{\bX_t} \leq R$ almost surely, then it holds with probability at least $1-\delta$,
\begin{equation*}
	\norm{\sum_{t=1}^T \bX_t - \E_{t-1}[\bX_t]} \leq \bigO(R\sqrt{T\log[d/\delta]}).
\end{equation*}
	
\end{lemma}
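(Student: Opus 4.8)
The plan is to prove a vector-valued Azuma--Hoeffding inequality by an exponential-moment (Chernoff) argument applied directly to the \emph{squared} Euclidean norm of the martingale sum, so that the dimension enters only logarithmically (in fact not at all). Write $\mathbf{D}_t = \bX_t - \E_{t-1}[\bX_t]$, so that $\{\mathbf{D}_t\}$ is a martingale difference sequence with $\E_{t-1}[\mathbf{D}_t]=0$ and $\norm{\mathbf{D}_t}\le 2R$ almost surely, and set $\mathbf{S}_t=\sum_{s=1}^t\mathbf{D}_s$; the claim is a tail bound on $\norm{\mathbf{S}_T}$. The first step is a reduction from boundedness to conditional norm-subGaussianity: because $\norm{\mathbf{D}_t}\le 2R$ deterministically, its conditional law satisfies $\Pr(\norm{\mathbf{D}_t}\ge s\mid\mathcal{F}_{t-1})\le 2\exp(-s^2/(2\sigma^2))$ for $\sigma=\bigO(R)$, so each increment is conditionally norm-subGaussian with parameter $\sigma_t=\bigO(R)$, and the total budget is $\sum_{t}\sigma_t^2=\bigO(R^2T)$.

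The engine is a one-step conditional bound on the squared-norm moment generating function. Conditioning on $\mathcal{F}_{t-1}$, the partial sum $\mathbf{S}_{t-1}=\w$ is fixed, and expanding $\norm{\w+\mathbf{D}_t}^2=\norm{\w}^2+2\la\w,\mathbf{D}_t\ra+\norm{\mathbf{D}_t}^2$ reduces the task to controlling $\E_{t-1}\exp(\lambda(2\la\w,\mathbf{D}_t\ra+\norm{\mathbf{D}_t}^2))$. Here I would use two facts about a conditionally norm-subGaussian increment: the one-dimensional projection $\la\w,\mathbf{D}_t\ra$ is conditionally subGaussian with parameter $\sigma_t\norm{\w}$ (so it contributes a factor $\exp(c\lambda^2\sigma_t^2\norm{\w}^2)$), while the quadratic part $\norm{\mathbf{D}_t}^2$ is conditionally subexponential with parameter $\bigO(\sigma_t^2)$ (contributing $\exp(c\lambda\sigma_t^2)$ for $\lambda\le c'/\sigma_t^2$). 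A Cauchy--Schwarz split of the product of the two exponentials then yields a recursion of the form $\E_{t-1}\exp(\lambda\norm{\mathbf{S}_t}^2)\le\exp\!\big(\lambda\norm{\mathbf{S}_{t-1}}^2(1+c\lambda\sigma_t^2)+c\lambda\sigma_t^2\big)$.

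To finish, iterate this recursion with the tower property from $t=T$ down to $t=1$, choosing $\lambda$ of order $1/(\sum_t\sigma_t^2)$; this keeps each inflation factor $1+c\lambda\sigma_t^2$ close to one, collapses the telescoping product, and gives $\E\exp(\lambda\norm{\mathbf{S}_T}^2)\le\exp(\bigO(1))$. A Markov bound on $\exp(\lambda\norm{\mathbf{S}_T}^2)$ followed by optimization of $\lambda$ (up to the subexponential cap $c'/\sigma_{\max}^2$) then gives $\norm{\mathbf{S}_T}\le\bigO\big(\sqrt{\sum_t\sigma_t^2}+\sigma_{\max}\sqrt{\log(1/\delta)}\big)=\bigO(R\sqrt{T\log(1/\delta)})$, which already implies the stated bound. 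I expect the main obstacle to be exactly this dimension control: a naive reduction that applies scalar Azuma--Hoeffding coordinate-by-coordinate, or over an $\epsilon$-net of the sphere, loses a polynomial factor of $\sqrt{d}$, since it discards the joint constraint $\sum_j (\mathbf{D}_t)_j^2=\norm{\mathbf{D}_t}^2\le 4R^2$. Working with the squared norm and handling the subexponential quadratic term is what exploits the norm-subGaussianity (rather than mere per-direction subGaussianity) of the increments and keeps the dimension out of the leading term; carefully tracking the interplay of the linear and quadratic contributions in the recursion, and the admissible range of $\lambda$, is the delicate part of the argument.
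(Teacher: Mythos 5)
Your proposal is correct, but it takes a genuinely different route from the paper: the paper never proves this lemma itself---it imports it verbatim as Corollary 7 of \cite{jin2019short}, where the $\log d$ factor is an artifact of that paper's proof technique (reducing norm-subGaussian vectors to matrix concentration via Hermitian dilation and a matrix-Azuma-type bound). Your argument---writing $\mathbf{D}_t=\bX_t-\E_{t-1}[\bX_t]$ with $\norm{\mathbf{D}_t}\le 2R$, expanding $\norm{\mathbf{S}_{t-1}+\mathbf{D}_t}^2$, controlling the cross term by Hoeffding's lemma (conditionally subGaussian with parameter $\bigO(R\norm{\mathbf{S}_{t-1}})$), bounding the quadratic term (which here is simply $\le 4R^2$ deterministically, so your subexponential cap on $\lambda$ is not even needed), splitting with Cauchy--Schwarz, and iterating the MGF recursion with $\lambda\asymp 1/(R^2T)$---is a self-contained computation that yields the \emph{dimension-free} bound $\bigO(R\sqrt{T\log(1/\delta)})$, strictly stronger than the stated $\bigO(R\sqrt{T\log(d/\delta)})$; this phenomenon for bounded Hilbert-space-valued martingales goes back to Pinelis. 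Two points deserve care in a full write-up: (i) in the iteration the effective parameter grows as $\lambda_{t-1}=\lambda_t(1+c\lambda_t\sigma_t^2)$, so what must stay bounded is the \emph{product} of the inflation factors across all $T$ steps, not each factor individually; your choice $\lambda\asymp 1/\sum_t\sigma_t^2$ handles this via the standard bootstrap $\lambda_t\le 2\lambda$ for all $t$; (ii) Markov at this capped $\lambda$ gives the Hoeffding-form tail $\bigO\big(\sqrt{\sum_t\sigma_t^2\log(1/\delta)}\big)$, not the stronger Bernstein-type expression $\bigO\big(\sqrt{\sum_t\sigma_t^2}+\sigma_{\max}\sqrt{\log(1/\delta)}\big)$ you wrote in the last step---the latter does not follow from this argument---but this is immaterial since the Hoeffding form already implies the lemma. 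In terms of trade-offs: the paper's citation covers general (unbounded) norm-subGaussian increments at zero effort, while your route is elementary, avoids matrix concentration entirely, and removes the dimension dependence, at the price of leaning on the almost-sure bound $\norm{\mathbf{D}_t}\le 2R$.
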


\end{document}